\documentclass{article}

\PassOptionsToPackage{numbers,compress,sort&compress}{natbib}

\usepackage{arxiv}
\usepackage{natbib}

\usepackage[utf8]{inputenc}
\usepackage[T1]{fontenc}
\usepackage{amsmath,amssymb,amsthm}
\usepackage{mathtools}
\usepackage{bm}
\usepackage{booktabs}
\usepackage{graphicx}
\usepackage[table]{xcolor}
\usepackage{microtype}
\usepackage{nicefrac}
\usepackage{url}
\usepackage[hidelinks]{hyperref}
\usepackage{enumitem}
\usepackage{algorithm}
\usepackage{algpseudocode}
\usepackage{multirow}
\usepackage{caption}
\usepackage{placeins}

\usepackage{titlesec}
\titlespacing*{\paragraph}{0pt}{1pt plus 1pt minus 0pt}{3pt}
\titlespacing*{\section}{0pt}{6pt plus 2pt minus 1pt}{3pt plus 1pt}
\titlespacing*{\subsection}{0pt}{5pt plus 1pt minus 1pt}{2pt plus 1pt}

\newtheorem{theorem}{Theorem}
\newtheorem{lemma}[theorem]{Lemma}
\newtheorem{proposition}[theorem]{Proposition}
\newtheorem{corollary}[theorem]{Corollary}
\theoremstyle{definition}

\newtheorem{assumption}[theorem]{Assumption}

\theoremstyle{remark}

\providecommand{\E}{\mathbb{E}}
\providecommand{\Prob}{\mathbb{P}}
\providecommand{\Var}{\mathrm{Var}}
\providecommand{\Cov}{\mathrm{Cov}}
\providecommand{\Ind}{\mathbb{1}}
\providecommand{\R}{\mathbb{R}}

\providecommand{\Normal}{\mathcal{N}}

\providecommand{\diag}{\operatorname{diag}}
\providecommand{\tr}{\operatorname{tr}}

\providecommand{\Cset}{\hat{C}}                         

\newcommand{\meth}[1]{\textsc{\small #1}}

\providecommand{\FactorCGIF}{\meth{FactorCGIF}}
\providecommand{\ACIFactorCGIF}{\meth{ACIFactorCGIF}}
\providecommand{\AgACIGroupCGIF}{\meth{AgACIGroupCGIF}}

\providecommand{\GroupCGIF}{\meth{GroupCGIF}}
\providecommand{\SpectralCGIF}{\meth{SpectralCGIF}}
\providecommand{\ACIPerGroupFactorCGIF}{\meth{ACIPerGroupFactorCGIF}}

\providecommand{\CGIFFiltered}{\meth{CGIFFiltered}}
\providecommand{\GraphLGSSM}{\meth{GraphLGSSM}}
\providecommand{\GraphNeuralSSM}{\meth{GraphNeuralSSMFiltered}}

\providecommand{\ACI}{\meth{ACI}}
\providecommand{\AgACI}{\meth{AgACI}}

\providecommand{\metrla}{\textsc{\small metr-la}}
\providecommand{\pemsbay}{\textsc{\small pems-bay}}
\providecommand{\aqi}{\textsc{\small aqi}}
\providecommand{\elec}{\textsc{\small elec}}
\providecommand{\ett}{\textsc{\small ett}}
\providecommand{\solar}{\textsc{\small solar}}
\providecommand{\jena}{\textsc{\small jena}}
\providecommand{\loopseattle}{\textsc{\small loop-sea}}

\providecommand{\Acl}{A_{\mathrm{cl}}}

\providecommand{\rhosnr}{\rho_{\mathrm{snr}}}
\providecommand{\rhotr}{\rho_{\mathrm{tr}}}
\providecommand{\rhoMzero}{\rho_{M_0}}

\providecommand{\SNR}{\mathrm{SNR}}

\providecommand{\Mzero}{M_0}

\providecommand{\state}{H}                          
\providecommand{\res}{r}                            
\providecommand{\score}{R}                          
\providecommand{\rscore}{S}                         
\providecommand{\Fcal}{\mathcal{F}}
\providecommand{\bigO}{\mathcal{O}}

\providecommand{\Lscore}{L_{\rscore}}                
\providecommand{\Mg}{M_G}                            
\providecommand{\Lg}{L_G}                            
\providecommand{\Cj}{C_J}                            
\providecommand{\dzero}{D_0}                         
\providecommand{\Dpi}{D_\pi}                         
\providecommand{\dens}{g_\star}                      
\providecommand{\sstar}{s_\star}                     
\providecommand{\Gst}{G_\star}                       

\providecommand{\dG}{d_{\mathcal G}}                 
\providecommand{\Qspace}{\mathcal Q}                 
\providecommand{\dQ}{d_{\mathcal Q}}                 
\providecommand{\Plawth}[2]{P_{#1,#2}}               
\providecommand{\rhostar}{\rho_\star}                
\providecommand{\rhoobs}{\widehat\rho_{\rm obs}}     
\providecommand{\rhodG}{\widehat\rho_{\dG}}          
\providecommand{\rhoDL}{\widehat\rho_{\Doh}}         
\providecommand{\rhoscore}{\widehat\rho_{\mathrm{score}}}  
\providecommand{\rhoind}{\widehat\rho_{\Gamma}}      
\providecommand{\tauint}{\tau_{\mathrm{int}}}        
\providecommand{\eNLL}{\varepsilon_{\rm NLL}}        
\providecommand{\meff}{m_{\mathrm{eff}}}             
\providecommand{\meffcov}{m_{\mathrm{eff}}^{\mathrm{cov}}} 
\providecommand{\Robs}{\mathcal R}                   
\providecommand{\Fcond}{\mathcal F_{\mathrm{cond}}}  
\providecommand{\Vfun}{\mathcal V}                   
\providecommand{\Vhat}{\widehat{\mathcal V}}         
\providecommand{\Cobs}{C_{\mathrm{obs}}}             
\providecommand{\Lout}{L_O}                          
\providecommand{\cobs}{c_o}                          
\providecommand{\Cid}{C_{\mathrm{id}}}               
\providecommand{\Lo}{L_o}                            
\providecommand{\Doh}{D_{\widehat\theta,\Lo}}        

\providecommand{\best}[1]{\textbf{#1}}
\providecommand{\secondbest}[1]{\underline{#1}}

\providecommand{\widthstd}[2]{#1\,$\scriptstyle\pm #2$}

\definecolor{heroRow}{RGB}{253,236,222}  

\providecommand{\GNF}{\textsc{\small gnf}}                    
\providecommand{\KalmanFCP}{\textsc{\small Kalman-FCP}}       
\providecommand{\StaticCGIF}{\textsc{\small Static-CGIF}}     
\providecommand{\DiagGRU}{\textsc{\small Diag-GRU}}           
\providecommand{\GCNrankzero}{\textsc{\small GCN-rank-0}}     
\renewcommand{\GraphNeuralSSM}{\GNF}
\renewcommand{\CGIFFiltered}{\KalmanFCP}

\renewcommand{\GraphLGSSM}{\textsc{\small Graph-LGSSM}}
\renewcommand{\ACIPerGroupFactorCGIF}{\textsc{\small ACI-PerGroup-FactorCGIF}}
\renewcommand{\AgACIGroupCGIF}{\textsc{\small AgACI-GroupCGIF}}

\renewcommand{\ACIFactorCGIF}{\textsc{\small ACI-FactorCGIF}}
\title{%
  Filtered Conformal Ellipsoids for Graph-Native Time Series
}

\author{%
  Yannick Limmer \\
  DRW\thanks{Opinions expressed in this paper are those of the author, and do not necessarily reflect the view of DRW.}  \\%
  London, United Kingdom
}

\begin{document}
\maketitle

\begin{abstract}
  Joint prediction sets for multivariate time series should control a
single event while adapting to cross-coordinate dependence. We study
\emph{filtered conformal ellipsoids}: a frozen state-space filter
emits a one-step predictive mean and covariance, and split-conformal
calibration is applied to the resulting Mahalanobis scores. The
filter is used to choose the ellipsoid \emph{shape}; conformal
calibration chooses the scalar \emph{radius}, so the construction
benefits from a learned predictive covariance without relying on
Gaussian tail probabilities for coverage. The main difficulty is
that filtered scores are dependent and learned recurrent filters
need not contract in their raw hidden state; we therefore analyse
contraction in an observable predictive-law quotient that identifies
hidden states producing the same future sequence of emitted Gaussian
laws. Under a stable Bayes Gaussian-projection filter, covariance
bounds, and a finite-horizon observability/Fisher condition, small
excess Gaussian negative log-likelihood implies contraction of the
learned emitted laws. Combined with a threshold-autocovariance
envelope this yields a Chebyshev-type approximate coverage bound for
filtered split-conformal prediction under dependence; a sharper
Bernstein-type bound requires an additional geometric-mixing
concentration assumption. Under Gaussian oracle realisability we
also obtain a near-oracle log-volume comparison within the class of
conditionally valid Gaussian ellipsoid rules. We instantiate the
framework with a GCN-GRU filter with diagonal-plus-low-rank
covariance. On moderate-size graph-native traffic benchmarks
(\metrla-$20$ and \pemsbay-$50$), the learned filter gives
sharper at-target ellipsoids than static-covariance and non-filter
baselines; at full-graph scale and on non-graph-native datasets,
factor and copula baselines can be stronger.

\end{abstract}

\section{Introduction}
\label{sec:intro}

Deployed forecasters of correlated sensor streams output prediction
sets, not point forecasts. For $Y_t\in\R^N$ a multivariate sensor
vector at time~$t$, downstream users rely on the joint event
$\{Y_t\in\Cset_\alpha(x_t)\}$, not $N$ separate marginal statements.
Marginal conformal intervals therefore do not solve the deployed
problem: on correlated streams their Cartesian product can miss the
target joint event even when each coordinate is calibrated
\citep{xu2024multidim,stankeviciute2021conformal}, and our
diagnostics show marginal methods falling far below joint target
coverage (App.~\ref{app:leaderboard}). We instead conformalise a
scalar Mahalanobis score, producing an ellipsoid whose axes encode
cross-sensor dependence.

\begin{figure}[t]
  \centering
  \includegraphics[width=\linewidth]{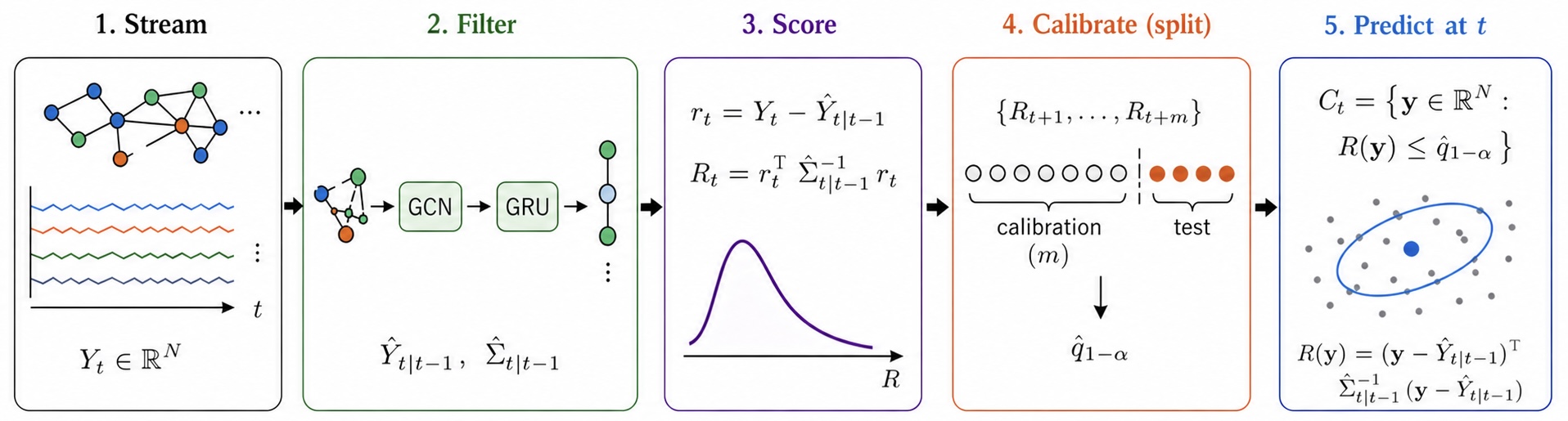}
  \caption{\textbf{Filtered conformal prediction.} A frozen filter
  emits a one-step mean and covariance; calibration scores are
  squared Mahalanobis residuals; the empirical $(1{-}\alpha)$-quantile
  defines a prediction ellipsoid. The filter chooses the ellipsoid
  shape; conformal calibration chooses the scalar radius. The theory
  analyses contraction of emitted predictive laws
  (Theorem~\ref{thm:obs-contract}), not raw recurrent states.}
  \label{fig:architecture}
\end{figure}

Our construction separates \emph{shape} from \emph{coverage}. A
frozen filter $\phi$ emits a one-step predictive Gaussian
$\mathcal N(\hat Y_{t|t-1},\hat\Sigma_{t|t-1})$ before $Y_t$ is
observed; the squared Mahalanobis score
$\score_t=(Y_t-\hat Y_{t|t-1})^{\top}\hat\Sigma_{t|t-1}^{-1}(Y_t-\hat Y_{t|t-1})$
is calibrated on a chronological calibration block, and the test-time
set is $\{y:\score(y){\le}\hat q_{1-\alpha}\}$
(Algorithm~\ref{alg:fscp}, Figure~\ref{fig:architecture}). The
filter's $\hat\Sigma_{t|t-1}$ supplies only the ellipsoid's
orientation and per-axis scale; the conformal quantile of the
observed Mahalanobis residuals supplies the scalar radius. The
method therefore benefits from a learned predictive covariance while
retaining calibration through observed residuals rather than through
Gaussian tail probabilities. Throughout we use the display names
$\GNF$ (the learned graph filter), $\KalmanFCP$ (linear Kalman
baseline), $\StaticCGIF$ (static-covariance baseline), and
$\DiagGRU$ / $\GCNrankzero$ for the two diagonal-covariance ablations
(graph-removed-and-diagonal vs.\ graph-retained-but-rank-zero);
implementation details and the display-to-implementation name
mapping are in Apps.~\ref{app:filters}--\ref{app:baselines}. We
call a benchmark \emph{graph-native} when the data source provides a
meaningful sensor graph; the formal criterion appears in
App.~\ref{app:datasets}.

This setup creates two theoretical obstacles. First, the calibration
scores come from one chronological trajectory and are serially
dependent, so the textbook split-conformal argument does not apply
as written. Second, learned recurrent filters may not contract in
raw hidden-state norm: a GRU can carry hidden directions that never
affect the emitted mean or covariance. Since conformal prediction
only sees the emitted law, we analyse contraction after quotienting
out hidden directions that produce the same future predictive laws,
working in this observable quotient rather than on the raw
recurrent state.

Under a stable Bayes Gaussian-projection filter, uniform covariance
spectral bounds, and a finite-horizon observability/Fisher
condition, small excess Gaussian-NLL risk transfers stability to the
learned emitted laws in Bures-Wasserstein distance
(Theorem~\ref{thm:obs-contract}). Combining this observable
contraction with a threshold-autocovariance envelope yields a
Chebyshev-type approximate filtered split-conformal coverage bound
under dependence (Theorem~\ref{thm:learned-validity}~(a)); a sharper
Bernstein-type rate requires the strictly stronger geometric-mixing
concentration of Assumption~\ref{ass:bernstein}
(Theorem~\ref{thm:learned-validity}~(b)). Under Gaussian oracle
realisability the same decomposition gives a near-oracle log-volume
comparison within the class of conditionally valid Gaussian
ellipsoid rules (Theorem~\ref{thm:logvol}); exact split-conformal
validity is recovered only in the oracle standardised-innovation
case (Proposition~\ref{prop:oracle}). Table~\ref{tab:ledger} (§\ref{sec:theory})
records exactly which assumptions each result uses and what the
experiments audit.

Empirically, the learned graph filter is strongest in the
moderate-size graph-native traffic regime: on \metrla-$20$ and
\pemsbay-$50$ it is the sharpest at-target method among the
evaluated baselines, improving width by $23.6\%$ and $40.5\%$ over
$\StaticCGIF$ (Figure~\ref{fig:headline}, Table~\ref{tab:hero}). At
full-graph scale leadership depends on the mean backbone: factor
baselines lead under graph-aware backbones, while $\GNF$ is
competitive or sharper under graph-oblivious backbones after $\ACI$
wrapping (Table~\ref{tab:scale}, Table~\ref{tab:multibackbone-full}).
On non-graph-native datasets, copula and factor baselines often
dominate (App.~\ref{app:multi-backbone}). The theory is conditional
rather than distribution-free for arbitrary trained recurrent
filters; the experiments therefore include direct audits of the
operative contraction, observability, and score-dependence
quantities.

\paragraph{Contributions.}
\begin{enumerate}[leftmargin=*,topsep=1pt,itemsep=0pt]
\item \emph{Filtered conformal ellipsoids with learned predictive
  covariances.} A frozen filter supplies a time-varying ellipsoid
  shape and conformal calibration supplies the radius, so the
  construction does not rely on Gaussian tail validity.
\item \emph{Observable-law theory for dependent filtered scores.}
  Under stable-oracle, covariance-bounded, and finite-horizon Fisher
  identifiability assumptions, small excess Gaussian-NLL implies
  contraction of the emitted predictive laws
  (Theorem~\ref{thm:obs-contract}); this yields approximate filtered
  split-conformal coverage bounds under dependent scores
  (Theorem~\ref{thm:learned-validity}) and a near-oracle log-volume
  comparison within the conditionally valid Gaussian-ellipsoid class
  (Theorem~\ref{thm:logvol}).
\item \emph{Graph-native instantiation and empirical scope analysis.} We
  instantiate the framework with a GCN-GRU diagonal-plus-low-rank
  covariance filter, show sharpness gains on moderate-$N$
  graph-native traffic, and document where factor / copula baselines
  are stronger (full-graph and non-graph-native cells).
\end{enumerate}

\section{Related work}
\label{sec:related}

\paragraph{Conformal prediction under dependence.}
Split-conformal prediction \citep{vovk2005algorithmic,lei2018distribution}
requires exchangeability. Online adjustments such as \textsc{aci}
\citep{gibbs2021adaptive} and \textsc{AgAci} \citep{zaffran2022adaptive}
attain long-run marginal coverage under arbitrary drift; the
``beyond-exchangeability'' line \citep{barber2023conformal} refines
validity under explicit mixing.

\paragraph{Multivariate / joint conformal regions.}
\textsc{spci} \citep{xu2023sequential} and its multivariate extension
\textsc{MultiDimSPCI} \citep{xu2024multidim} produce ellipsoidal
regions via a learned-but-not-filtered covariance;
\textsc{CopulaCPTS} \citep{sun2022copula} builds joint regions via
empirical copulas. These are our published joint-CP comparators;
implementation details and additional baselines (factor, group,
spectral, EWMA, rolling, time-of-day, and a $k$-NN local ellipsoid)
are in App.~\ref{app:baselines}.

\paragraph{Learned predictive covariances, graph time series, and
filter stability.}
Likelihood-trained Gaussian heads on recurrent backbones
\citep{salinas2020deepar,salinas2019high,rangapuram2018deep} predict
distributional parameters but are not calibrated finite-sample. Graph
neural backbones \citep{kipf2017semi,li2018diffusion,wu2019graph,cho2014learning}
give point predictors that we re-use as mean backbones;
\citet{dua2025graphcp} and \citet{cini2025relational} are concurrent
graph-aware conformal lines (sequential ellipsoids and relational
propagation, respectively). The distinction is that our graph is
used \emph{upstream} to emit a predictive covariance shape, while
conformal calibration remains a scalar-radius step; the graph is
used to model the law, not to post-process calibration scores. Filter-stability and
observability tools are classical
\citep{anderson1979optimal,hairer2011yet,douc2018markov}; the
contribution of §\ref{sec:theory} is to transfer stability of a
learned filter from these abstract conditions to filtered
split-conformal validity and log-volume sharpness, in the observable
predictive-law quotient rather than on raw recurrent states.

\section{Preliminaries}
\label{sec:prelim}

\paragraph{Sequence model.}
We observe $\{Y_t\}_{t\ge 1}\subset\R^N$ together with covariates
$x_t$, which may include lagged endogenous observations, graph
features, and calendar/exogenous variables. A \emph{filter} is any
measurable map $\phi:(x_t,\state_{t-1})\mapsto(\hat Y_{t|t-1},\hat\Sigma_{t|t-1},\state_t)$;
we reserve $N$ for the observation dimension and $d$ for the latent
state. The residual $\res_t := Y_t-\hat Y_{t|t-1}$ gives the squared
Mahalanobis score
\begin{equation}\label{eq:score}
  \score_t \;=\; \res_t^{\top}\hat\Sigma_{t|t-1}^{-1}\res_t
\end{equation}
and its root form
$\rscore_t:=\score_t^{1/2}=\|\hat\Sigma_{t|t-1}^{-1/2}\res_t\|_2$. The
two scores induce the \emph{same} prediction region by monotonicity: if
$\hat q_{\rscore}$ is the empirical $(1{-}\alpha)$-quantile of
$\rscore_t$, then $\hat q_{1-\alpha}=\hat q_{\rscore}^{2}$ is the
corresponding quantile of $\score_t$ and
$\score_t\le\hat q_{1-\alpha}\Leftrightarrow\rscore_t\le\hat q_{\rscore}$.
We \emph{calibrate} $\score_t$ in Algorithm~\ref{alg:fscp} (smooth in
$\state_t$, no $1/\sqrt{\score}$ blow-up) and \emph{analyse} $\rscore_t$
in §\ref{sec:theory} (avoids the global-Lipschitz issue at unbounded
residuals; App.~\ref{app:Lscore}). At a calibration trajectory
$\{Y_t\}_{t=1}^{n}$, set $\hat
q_{1-\alpha}:=\score_{(\lceil(n+1)(1-\alpha)\rceil)}$ and emit
\begin{equation}\label{eq:setdef}
  \Cset_\alpha(x_{n+1}) \;=\; \bigl\{y\in\R^N :
    (y-\hat Y_{n+1\mid n})^{\top}\hat\Sigma_{n+1\mid n}^{-1}(y-\hat Y_{n+1\mid n})
    \le \hat q_{1-\alpha}\bigr\}.
\end{equation}
Joint coverage $\Prob\{Y_{n+1}\in\Cset_\alpha(x_{n+1})\}$ is the
deployment-relevant metric. The natural size functional for an
ellipsoid is the normalised log-volume,
\begin{equation}\label{eq:logvol}
  \tfrac1N\log\mathrm{Vol}(\Cset_t)
  \;=\;
  \tfrac1N\log\kappa_N
  \,+\,
  \log\hat s
  \,+\,
  \tfrac{1}{2N}\log\det\hat\Sigma_{t|t-1},
\end{equation}
where $\kappa_N$ is the unit-ball volume and $\hat s=\hat q_{1-\alpha}^{1/2}$;
this is the object Theorem~\ref{thm:logvol} controls. We also report a
per-coordinate width
$\mathrm{width}=\hat q_{1-\alpha}^{1/2}\tr(\hat\Sigma_{n+1|n})^{1/2}/N^{1/2}$
as a stable radius-like surrogate that is easy to compare across
rank-constrained covariance families. Headline-cell tables use both:
Table~\ref{tab:hero} reports widths and Table~\ref{tab:logvol-main}
reports the corresponding log-volumes; the rankings agree
(§\ref{subsec:hero}).

\paragraph{Why Mahalanobis.}
The Mahalanobis score produces an ellipsoidal region oriented by
$\hat\Sigma$ and invariant under affine reparameterisation of~$Y$;
alternative single-coordinate-norm scores ($\|r\|_\infty$, $\|r\|_2$,
quantile of $|r_i|/\sigma_i$) produce boxes or interval products that
are looser at matched coverage on correlated $Y$, since the
minimum-volume $(1-\alpha)$-content set of a Gaussian is a Mahalanobis
ellipsoid \citep{anderson1979optimal}.

\begin{theorem}[Split-conformal validity; \citealp{vovk2005algorithmic}]
\label{thm:A}
If $(\score_t)_{t=1}^{n+1}$ are exchangeable, the region
\eqref{eq:setdef} satisfies
$\Prob\{Y_{n+1}\in\Cset_\alpha(x_{n+1})\}\ge 1-\alpha$; if the scores
are a.s.\ distinct, coverage is at most $1-\alpha+1/(n+1)$.
\end{theorem}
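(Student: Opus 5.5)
The plan is the classical rank argument. Set $k:=\lceil (n+1)(1-\alpha)\rceil$ and assume $k\le n$; if $k>n$ we take $\hat q_{1-\alpha}=+\infty$ and coverage is trivially $1$. By \eqref{eq:setdef}, the event $\{Y_{n+1}\in\Cset_\alpha(x_{n+1})\}$ is exactly $\{\score_{n+1}\le \score_{(k)}\}$, where $\score_{(k)}$ is the $k$-th order statistic of the calibration scores $\score_1,\dots,\score_n$. We will use that $\score_{n+1}$ is computed from the same frozen filter applied at time $n+1$, so it is a score of the same form as the calibration scores. The first step is a deterministic reduction. We show
\[
\{\score_{n+1}\le \score_{(k)}\}=\{\score_{n+1}\le \score^{+}_{(k)}\},
\]
where $\score^{+}_{(k)}$ is the $k$-th order statistic of the augmented multiset $\{\score_1,\dots,\score_{n+1}\}$. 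If $\score_{n+1}\le\score_{(k)}$, then adding $\score_{n+1}$ to the sample leaves at least $k$ values $\ge \score_{n+1}$ at or below position $k$, so $\score_{n+1}\le \score^{+}_{(k)}$. Conversely, if $\score_{n+1}>\score_{(k)}$, then at least $k$ of the $\score_i$ with $i\le n$ lie strictly below $\score_{n+1}$, so $\score^{+}_{(k)}<\score_{n+1}$. Consequently the event is $\{\mathrm{rank}(\score_{n+1})\le k\}$ in the augmented sample, with ties broken in the favourable direction.

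The second step uses exchangeability. Because $(\score_t)_{t=1}^{n+1}$ is exchangeable, the indicators $E_j:=\Ind\{\score_j\le \score^{+}_{(k)}\}$ all have the same expectation, since $\score^{+}_{(k)}$ is a symmetric function of the sample. Deterministically, $\sum_{j=1}^{n+1}E_j\ge k$, because at least $k$ sample points are $\le$ the $k$-th order statistic. Averaging over $j$ gives
\[
\Prob\{\score_{n+1}\le\score^{+}_{(k)}\}\ge k/(n+1)\ge 1-\alpha,
\]
which is the lower bound.

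For the upper bound, assume the scores are a.s.\ distinct. Then exactly $k$ of the $E_j$ equal $1$ a.s., so the rank of $\score_{n+1}$ is uniform on $\{1,\dots,n+1\}$. Hence the probability equals exactly $k/(n+1)$. Since $k<(n+1)(1-\alpha)+1$, this gives $k/(n+1)<1-\alpha+1/(n+1)$, which is the upper bound.

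The main obstacle is the tie handling in the deterministic reduction of the first step. One must check carefully that ties only increase coverage and that the correspondence between the $n$-sample and $(n+1)$-sample order statistics holds in both directions. Also, exchangeability is a hypothesis on the scores themselves, not on the data. It is exactly this hypothesis that later sections must replace, since the filtered scores are serially dependent.
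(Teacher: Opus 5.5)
Your proof is correct and uses the same exchangeability/rank argument as the paper. The paper simply asserts that the rank of $\score_{n+1}$ in the pooled sample is uniform and identifies $\{\score_{n+1}\le\score_{(k)}\}$ with $\{\mathrm{rank}(\score_{n+1})\le k\}$. You instead prove that event identity, and you obtain the lower bound under ties by averaging the indicators $E_j$ against the symmetric threshold $\score^{+}_{(k)}$. That is slightly more careful, because exact rank uniformity requires distinct scores (or random tie-breaking).

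One small repair: in the forward direction of your reduction, the phrase about ``at least $k$ values $\ge\score_{n+1}$ at or below position $k$'' does not justify the step (read literally, it can fail). The right argument is that any calibration score strictly below $\score_{n+1}$ is strictly below $\score_{(k)}$. Hence at most $k-1$ pooled scores lie strictly below $\score_{n+1}$, so $\score^{+}_{(k)}\ge\score_{n+1}$.
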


\paragraph{Predictive-law metric and observable quotient.}
Write $P_{\theta,t}^h=\mathcal N(\mu_{\theta,t}^h,\Sigma_{\theta,t}^h)$
for the one-step predictive Gaussian law emitted by a frozen filter
parameterised by $\theta$ and initialised at quotient state $h$. We
endow Gaussian laws with the Bures-Wasserstein distance
\begin{equation}\label{eq:dG}
  \dG^{2}\bigl(\mathcal N(\mu,\Sigma),\mathcal N(\nu,\Lambda)\bigr)
  \;=\;
  \|\mu-\nu\|_{2}^{2}
  +
  \tr\!\bigl[\Sigma+\Lambda-2(\Lambda^{1/2}\Sigma\Lambda^{1/2})^{1/2}\bigr];
\end{equation}
on covariances spectrum-bounded between $\lambda_-$ and $\lambda_+$,
$\dG$ is bi-Lipschitz to KL and to Frobenius covariance distance
(App.~\ref{app:bures}). The \emph{observable quotient} $\Qspace$
identifies hidden states $h\sim h'$ that produce the same future
sequence of emitted laws under every admissible input path; all
contraction statements in §\ref{sec:theory} are made in $\Qspace$, not
in raw recurrent-state norm. For the linear-Gaussian instance we use
$F$ for the transition, $C$ for the observation matrix, $K$ for the
steady-state Kalman gain, and $\Acl:=(I-KC)F$ for the closed-loop
transition (App.~\ref{app:graph-lgssm}); $A_{\mathrm{graph}}$ is
reserved for the adjacency. We treat both linear-Kalman (\GraphLGSSM)
and learned GCN--GRU (\GraphNeuralSSM) filters as instances.

\section{Filtered conformal prediction}
\label{sec:method}

Algorithm~\ref{alg:fscp} defines the protocol: fit a filter $\phi$ on
a held-out pretraining trajectory, freeze its parameters, simulate it
forward on the calibration trajectory to collect squared-Mahalanobis
scores, then emit \eqref{eq:setdef} at test time. The theory gives
conditional approximate validity when observable contraction
(Theorem~\ref{thm:obs-contract}) is combined with the
score-dependence assumptions of §\ref{sec:theory}
(Theorem~\ref{thm:learned-validity}); exact split-conformal validity
is recovered only in the oracle standardised-innovation case
(Proposition~\ref{prop:oracle}). Sharpness is governed by the
log-volume bound of Theorem~\ref{thm:logvol}.

\begin{algorithm}[t]
\caption{Filtered Split-Conformal Prediction}
\label{alg:fscp}
\begin{algorithmic}[1]
\Require sequence $\{Y_t\}_{t=1}^{n+T}$, filter $\phi$ (frozen), level $\alpha$
\State \emph{(calibration)} simulate $\phi$ on $Y_{1:n}$, compute
       $\score_t = \res_t^{\top}\hat\Sigma_{t|t-1}^{-1}\res_t$ for
       $t=t_0,\dots,n$ (discard warm-up $t_0$).
\State $\hat q_{1-\alpha}\gets \score_{(\lceil(n-t_0+2)(1-\alpha)\rceil)}$.
\State \emph{(test)} advance $\phi$ and emit $\Cset_\alpha(x_t)$ via \eqref{eq:setdef} for $t=n+1,\dots,n+T$.
\end{algorithmic}
\end{algorithm}

\paragraph{\GraphLGSSM\ (linear Kalman baseline).}
A linear-Gaussian SSM with graph transition
$F=\rho_{\mathrm{scale}}\cdot S/\lambda_{\max}(S)$,
$S=I+\widetilde A$, isotropic $Q=\sigma_Q^2 I$, $R=\sigma_R^2 I$,
$C=I_N$, $\rho_{\mathrm{scale}}=0.8$. This baseline serves both as a
sanity check and as the linear specialisation of the
observable-contraction theorem; the analytic closed-loop rate
$\sigma_1(\Acl)$, the initialisation-weighted RMS rate
$\rhoMzero(\Acl,T)$ from Theorem~\ref{thm:weighted-rms}
(App.~\ref{app:proofs-L}), and the SNR proxy $\rhosnr$ are reported
per dataset (App.~\ref{app:graph-lgssm}).

\paragraph{\GraphNeuralSSM\ (learned GCN-GRU; our ``learned graph filter'').}
A single GCN step \citep{kipf2017semi} feeding a GRU
\citep{cho2014learning}: at time~$t$ the filter consumes the observed
$Y_{t-1}$, advances $h_t=\mathrm{GRU}(\mathrm{GCN}(Y_{t-1};\widetilde A_{\mathrm{sym}}),h_{t-1})$,
and emits $(\hat Y_{t|t-1},\hat\Sigma_{t|t-1})=g_\theta(h_t)$ with
$\hat\Sigma_{t|t-1}=\diag(d_t)+L_tL_t^{\top}$, $L_t\in\R^{N\times r}$,
\emph{before} $Y_t$ is observed (no calibration or test score uses
$Y_t$ to predict itself; App.~\ref{app:tf}). The softplus-rectified
$\diag(d_t)$ (floored at $\varepsilon_d=10^{-4}$) guarantees
positive-definiteness; Woodbury and the matrix-determinant lemma give
$\bigO(Nr^{2}+r^{3})$ inversion / log-determinant cost (linear in $N$
for fixed $r=4$). The rank-$r$ factor carries the cross-sensor
correlation. We use $r=4$, the smallest rank at which \metrla\ width
saturates (App.~\ref{app:rank}). Training: maximum likelihood under
the induced Gaussian predictive, Adam, teacher-forcing, truncated
BPTT (window 24); details in App.~\ref{app:filters}.
The fixed floor $\varepsilon_d{=}10^{-4}$ is used in all reported
runs; the spectral audit in App.~\ref{app:audits} identifies the
resulting high-condition-number tail as the main residual stability
caveat.

Gaussian NLL plays two roles. Empirically it trains the mean and
covariance head; theoretically, under the finite-horizon Fisher
condition of Theorem~\ref{thm:obs-contract}, excess NLL controls
perturbations of the observable transition map. All contraction
statements in §\ref{sec:theory} are made in the observable quotient
$\Qspace$ that identifies hidden states producing the same future
emitted predictive laws; this avoids asking the raw GRU recurrent
state to be globally contractive, which is neither necessary nor
generally true.

\paragraph{Ablations.}
We use two diagonal-covariance ablations. \emph{$\DiagGRU$
(\textsc{NeuralSSM}: graph removed and diagonal $\Sigma_t$)} replaces
the GCN by an identity map \emph{and} sets $r=0$; this is the row
reported in Table~\ref{tab:hero} (width $4.71$ at joint $0.932$ on
\metrla-$20$). \emph{$\GCNrankzero$ (graph retained, rank $0$)}
keeps the GCN and only sets $r=0$; this is the rank-sweep point
(width $5.32$ on \metrla-$20$, App.~\ref{app:rank}) used to isolate
the covariance head at fixed graph mixing. The gap from
$\GCNrankzero$ to $r{=}4$ shows that the diagonal-plus-low-rank head
accounts for most of the sharpness gain, while the $\DiagGRU$
comparison jointly captures graph-mixing and covariance-rank effects.

\paragraph{Naming.}
We use the display names defined in §\ref{sec:intro} consistently in
prose, figures, and tables; App.~\ref{app:reproducibility} maps these
display names to the implementation identifiers.

\section{Theory}
\label{sec:theory}

The analysis is organised around the emitted predictive law, not the
raw recurrent state. Two hidden states are identified if they produce
the same future sequence of emitted laws under every admissible input
path; all contraction statements below are made in this observable
quotient $\Qspace$ (§\ref{sec:prelim}). The guarantees should be read
as a conditional chain, not a distribution-free theorem for arbitrary
learned recurrent filters: Table~\ref{tab:ledger} separates the
mathematical assumptions, the empirical diagnostics that probe each
conclusion, and the claims those diagnostics do \emph{not} certify.

\begin{table}[t]
\caption{\textbf{Assumption summary.} Each main result is conditional
on the listed assumptions; the audits estimate the operative
quantities directly but do not certify the population assumptions.}
\label{tab:ledger}
\centering\footnotesize
\setlength{\tabcolsep}{3pt}
\begin{tabular}{l p{4.0cm} p{3.6cm} p{4.0cm}}
\toprule
Result & Main additional assumptions & Audit checks & Not certified \\
\midrule
Theorem~\ref{thm:obs-contract} & stable Bayes Gaussian-projection filter; covariance bounds; finite-horizon observability/Fisher margin & $\rhodG$ contraction (App.~\ref{app:dG-audit}); finite-horizon observability (App.~\ref{app:obs-audit}) & population $\eNLL$; data realisability \\
\addlinespace[1pt]
Thm.~\ref{thm:learned-validity}~(a) & threshold-autocovariance envelope (Cor.~\ref{cor:thresh-mix}) & indicator-autocov decay $\rhoind$ (Audit~3) & distribution-free exchangeability \\
\addlinespace[1pt]
Thm.~\ref{thm:learned-validity}~(b) & geometric-mixing Bernstein concentration (Ass.~\ref{ass:bernstein}) & integrated autocorrelation $\tauint$ (App.~\ref{app:tau-int}) & the concentration inequality itself \\
\addlinespace[1pt]
Theorem~\ref{thm:logvol} & Gaussian oracle realisability; conditionally valid Gaussian comparator & log-volume agrees with width (App.~\ref{app:logvol}) & optimality vs.\ arbitrary marginally valid sets \\
\bottomrule
\end{tabular}
\end{table}

\subsection{Observable contraction of likelihood-trained filters}
\label{subsec:obs-contract}

Theorem~\ref{thm:obs-contract} below is a transfer statement: it does
not prove that an arbitrary trained GRU is stable. It says that
\emph{if} the oracle Bayes Gaussian-projection filter is stable and
NLL risk identifies the observable transition, \emph{then} the
learned emitted predictive laws inherit contraction. The
``no-ghost'' part of Assumption~\ref{ass:obs} rules out observable
directions that change future predictive laws but are invisible to
Gaussian NLL; hidden directions that never affect emitted laws are
allowed --- they are precisely what the quotient $\Qspace$ removes.

\begin{assumption}[Observable contraction setting]\label{ass:obs}
\emph{(O1)~Stable oracle.} An oracle $\theta_\star$ in the
learned-filter family emits the Bayes Gaussian projection of the true
one-step law, with $\dQ(T_{\theta_\star}(y,q),T_{\theta_\star}(y,q'))\le\rhostar\dQ(q,q')$,
$\rhostar<1$.
\emph{(O2)~Uniform covariance spectrum.} $\lambda_{-}I\preceq\hat\Sigma_{\theta,t}\preceq\lambda_{+}I$.
\emph{(O3)~Output Lipschitz + finite-horizon observability.}
$\dG(O_\theta(q),O_\theta(q'))\le\Lout\dQ(q,q')$ and
$\cobs\dQ\le\Doh$ (App.~\ref{app:obs-quotient}).
\emph{(O4)~Fisher margin / no-ghost identifiability.}
$\Delta_T^{2}(\theta)\le\Cid[\Robs(\theta)-\Robs(\theta_\star)]$
(App.~\ref{app:fisher}).
\end{assumption}

\begin{theorem}[Observable contraction from excess NLL]\label{thm:obs-contract}
Under Assumption~\ref{ass:obs}, the trained filter $\hat\theta$ with
excess risk $\eNLL=\Robs(\hat\theta)-\Robs(\theta_\star)$ satisfies
\begin{equation}\label{eq:obs-contract}
  \dG\!\bigl(\Plawth{\hat\theta}{t}^{q},\Plawth{\hat\theta}{t}^{q'}\bigr)
  \le\Cobs\,\rhoobs^{t}\,
    \dG\!\bigl(\Plawth{\hat\theta}{0}^{q},\Plawth{\hat\theta}{0}^{q'}\bigr),
  \quad
  \rhoobs:=\rhostar+\sqrt{\Cid\,\eNLL}<1,
\end{equation}
with $\Cobs=\Lout/\cobs$, on every common input path.
\end{theorem}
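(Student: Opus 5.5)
The plan is to run a perturbation argument for the quotient transition maps, then pass from the quotient back to emitted laws via (O3). Write $T_\theta(y,\cdot):\Qspace\to\Qspace$ for the transition induced on the observable quotient. Along a common input path, let $q_t,q'_t$ be the two quotient trajectories under $\hat\theta$.

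\textbf{Step 1: one-step contraction of the learned transition.} The first aim is to show
\[
\dQ\bigl(T_{\hat\theta}(y,q),T_{\hat\theta}(y,q')\bigr)\le\bigl(\rhostar+\Delta_T(\hat\theta)\bigr)\dQ(q,q').
\]
I would read $\Delta_T(\theta)$ (App.~\ref{app:fisher}) as the Lipschitz-type deviation of the quotient transition from the oracle's, namely the Lipschitz seminorm of $q\mapsto(T_\theta-T_{\theta_\star})(y,q)$ measured in $\dQ$. If $\Qspace$ carries a linear or normed structure, the triangle inequality then splits $T_{\hat\theta}=T_{\theta_\star}+(T_{\hat\theta}-T_{\theta_\star})$. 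The first piece contracts at rate $\rhostar$ by (O1), and the second is bounded by $\Delta_T$.

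\textbf{Step 2: excess risk controls the perturbation.} By (O4), $\Delta_T(\hat\theta)\le\sqrt{\Cid\,\eNLL}$, so the one-step factor is $\rhoobs$. Iterating over $t$ steps of the common input path gives
\[
\dQ(q_t,q'_t)\le\rhoobs^{t}\,\dQ(q_0,q'_0).
\]
The condition $\rhoobs<1$ is part of the hypothesis as stated.

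\textbf{Step 3: back to emitted laws.} The upper bound uses output Lipschitz continuity, $\dG(P^{q}_{\hat\theta,t},P^{q'}_{\hat\theta,t})\le\Lout\,\dQ(q_t,q'_t)$. For the initial time, finite-horizon observability $\cobs\dQ\le\Doh$ lets the quotient distance be bounded by the distance between emitted laws, giving $\dQ(q_0,q'_0)\le\cobs^{-1}\,\dG(P^{q}_{\hat\theta,0},P^{q'}_{\hat\theta,0})$. Chaining these yields \eqref{eq:obs-contract} with $\Cobs=\Lout/\cobs$.

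\textbf{Main obstacle: the observability step.} $\Doh$ is a finite-horizon quantity, presumably a sup or sum over $L_o$ future emitted laws, not only the time-$0$ law. So lower-bounding $\dQ(q_0,q'_0)$ by the single initial $\dG$ requires one of two things:
\begin{itemize}
\item either App.~\ref{app:obs-quotient} defines $P^{q}_{\hat\theta,0}$ (and $\Doh$) so that the time-$0$ comparison already encodes the horizon-$L_o$ law sequence, in which case the step is immediate;
\item or one must argue that $\Doh$ is dominated by the initial distance via (O3) applied over $L_o$ steps. That route introduces a constant such as $\Lout\sum_{k<L_o}\rhoobs^k$, which would need to be absorbed or assumed away.
\end{itemize}
I would first try the first reading, treating the "time-$0$ law" as the observable signature. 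A secondary subtlety is that Step 1 needs $\Delta_T$ to be a Lipschitz deviation rather than a sup deviation; if it is only a sup deviation, one gets an additive bias term rather than a multiplicative rate. I would check the definition in App.~\ref{app:fisher} and, if necessary, interpret (O4) in the Lipschitz sense.
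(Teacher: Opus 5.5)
Your Steps 1--3 follow the paper's proof almost verbatim. The paper writes the same one-step inequality $\dQ(T_{\hat\theta}(y,q),T_{\hat\theta}(y,q'))\le\dQ(T_{\theta_\star}(y,q),T_{\theta_\star}(y,q'))+\Delta_T(\hat\theta)\,\dQ(q,q')$ and justifies it only ``by definition of $\Delta_T$'', a quantity it never defines explicitly. So your Lipschitz-deviation reading is exactly what is needed, and you are right that a sup-deviation would only give an additive bias. The paper then applies (O4), iterates along the common input path, and uses the two halves of (O3), as you do.

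The step you flag as the main obstacle is real, and neither of your two resolutions closes it as stated. The chain only yields
\[
\dG\bigl(\Plawth{\hat\theta}{t}^{q},\Plawth{\hat\theta}{t}^{q'}\bigr)\le\frac{\Lout}{\cobs}\,\rhoobs^{t}\,\Doh(q,q').
\]
The paper gets the displayed form by \emph{specialising to instantaneous observability} $\Lo=0$. There the expectation in \eqref{eq:Doh} is vacuous, so $\Doh(q,q')=\sqrt{\beta_0}\,\dG(\Plawth{\hat\theta}{0}^{q},\Plawth{\hat\theta}{0}^{q'})$, which gives $\Cobs=\Lout/\cobs$ when $\beta_0=1$.

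Your second option cannot work. The upper half of (O3) bounds $\Doh$ by $\dQ$, not by the time-$0$ law distance, and bounding $\Doh(q,q')$ by $\dG(\Plawth{\hat\theta}{0}^{q},\Plawth{\hat\theta}{0}^{q'})$ \emph{is} instantaneous observability. In fact, for $\Lo\ge 1$ the displayed inequality can fail under all of (O1)--(O4). Take a stable linear-Gaussian filter with non-injective $C$ but observable $(F,C)$, and set $\hat\theta=\theta_\star$. It has distinct quotient states with identical time-$0$ emitted laws whose laws differ at some later time, so the right-hand side is $0$ while the left-hand side is positive.

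Your first option is therefore legitimate only as a change of statement, replacing the right-hand side by $\Doh(q,q')$. The literal theorem needs the extra hypothesis $\Lo=0$, which is how the paper's proof concludes.
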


In experiments, $\eNLL$ is not observable because the oracle Bayes
Gaussian-projection filter is unknown. We therefore audit the
theorem's \emph{conclusion} --- emitted-law contraction
(App.~\ref{app:dG-audit}) --- rather than claim to verify its
\emph{premises}. Sufficient conditions for the Fisher margin (O4) on
the GCN-GRU family are in App.~\ref{app:fisher-gcn}.

\subsection{Validity for the learned filter}
\label{subsec:learned-validity}

\begin{corollary}[Threshold mixing]\label{cor:thresh-mix}
Under Theorem~\ref{thm:obs-contract} and a local Lipschitz condition
on the conditional law of the root score given the observable past
(App.~\ref{app:thresh-mix}), the threshold indicators
$I_i(u):=\mathbf 1\{\rscore_{\hat\theta,i}\le u\}$ obey, for $u$ in a
neighbourhood of the conformal threshold,
\begin{equation}\label{eq:thresh-mix}
  \sup_i|\Cov(I_i(u),I_{i+k}(u))|\le C_\Gamma\,\bar\rho^{\,k},
  \qquad
  \bar\rho:=\max\{\rho_{\mathrm{data}},\rhoobs\}<1,
\end{equation}
with $\rho_{\mathrm{data}}<1$ the geometric mixing rate of the data
process. The envelope is summable.
\end{corollary}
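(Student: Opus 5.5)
The plan is a coupling argument in the observable quotient $\Qspace$. Fix $i$ and lag $k$, and let $\mathcal G_i$ be the $\sigma$-field generated by the data and filter state up to time $i$. Since $I_i(u)$ is $\mathcal G_i$-measurable,
\[
\Cov(I_i(u),I_{i+k}(u))=\Cov\bigl(I_i(u),\,\E[I_{i+k}(u)\mid\mathcal G_i]\bigr),
\]
so it suffices to bound the oscillation of $\E[I_{i+k}(u)\mid\mathcal G_i]-\E[I_{i+k}(u)]$ in $L^1$ by a constant times $\bar\rho^{\,k}$. Because $|I_i|\le 1$, the covariance is then at most this $L^1$ norm.

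The dependence of $I_{i+k}$ on $\mathcal G_i$ enters through two channels:
\begin{itemize}
\item the data's conditional law of the future path $(Y_{i+1},\dots,Y_{i+k})$ given the past;
\item the filter's quotient state $q_i$, which carries memory into the emitted law $\Plawth{\hat\theta}{i+k}$.
\end{itemize}
I split accordingly by introducing a ghost copy started from an independent draw $(\tilde Y,\tilde q_i)$ from the stationary law of the past.

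\textbf{Data channel.} Geometric mixing of the data (rate $\rho_{\mathrm{data}}$) lets me couple the future input path of the conditioned process with that of the ghost. They disagree only with probability $\lesssim\rho_{\mathrm{data}}^{k/2}$ on the second half of the window. If the rate is defined on total variation, a more careful split gives $\rho_{\mathrm{data}}^{k}$ up to constants.

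\textbf{Filter channel.} On a common input path, Theorem~\ref{thm:obs-contract} gives
\[
\dG\bigl(\Plawth{\hat\theta}{i+k}^{q_i},\Plawth{\hat\theta}{i+k}^{\tilde q_i}\bigr)\le\Cobs\,\rhoobs^{k}\,D_0,
\]
where $D_0$ is bounded by (O2) and the boundedness of emitted means on the admissible set. This is the step the theorem is built for: only emitted laws matter, not raw GRU states.

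The next step converts emitted-law closeness into closeness of the threshold indicator. Given the observable past, $\rscore_{i+k}=\|\hat\Sigma^{-1/2}(Y_{i+k}-\hat\mu)\|$. By (O2), $\hat\Sigma^{-1/2}$ is Lipschitz in $\hat\Sigma$ with a constant depending on $\lambda_\pm$, so a $\dG$-perturbation $\delta$ of $(\hat\mu,\hat\Sigma)$ moves the root score by at most $c(\lambda_\pm)(1+\|r\|)\delta$. This is the reason for analysing $\rscore$ rather than $\score$ (App.~\ref{app:Lscore}). The local Lipschitz assumption on the conditional law of $\rscore$ near $u$ then bounds
\[
\bigl|\Prob(\rscore\le u)-\Prob(\tilde\rscore\le u)\bigr|\le L\,\bigl(\text{shift}\bigr)+\Prob(\|r\|\ \text{large}).
\]
A second-moment bound on residuals, implied by (O2) and bounded data moments, together with Markov's inequality, controls the tail term.

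\textbf{Main obstacle.} The delicate step is this indicator-smoothing conversion. The indicator is discontinuous, so I need the conditional density bound to hold uniformly for $u$ in the neighbourhood and for both the actual and ghost filter states. Otherwise the rate could degrade to a square root, e.g.\ $\rhoobs^{k/2}$. I expect this to be absorbed either into the constant or by redefining the rate, and I would first try truncating $\|r\|\le M_k$ with $M_k$ growing slowly.

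\textbf{Combining.} Adding the two channels gives
\[
|\Cov(I_i(u),I_{i+k}(u))|\le C_1\rho_{\mathrm{data}}^{k}+C_2\rhoobs^{k}\le C_\Gamma\,\bar\rho^{\,k}.
\]
All constants depend only on $\lambda_\pm$, $\Cobs$, the Lipschitz constant and the moment bounds, hence not on $i$, which gives the supremum over $i$. Summability follows from $\bar\rho<1$ via the geometric series $\sum_k\bar\rho^{\,k}=1/(1-\bar\rho)$.
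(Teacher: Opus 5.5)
You follow the paper's skeleton: condition on the past, bound the covariance using $|I_i(u)-\E I_i(u)|\le 1$, and couple to a ghost copy. However, your combining step does not give $\bar\rho^{\,k}$. Theorem~\ref{thm:obs-contract} contracts emitted laws only along a \emph{common} input path. Your data coupling makes the two input paths agree only on the second half of the window; on the first half the two filters are driven by different data. So you can run the contraction for only about $k/2$ steps, and your split honestly yields $\rho_{\mathrm{data}}^{k/2}+\rhoobs^{k/2}\lesssim\bar\rho^{\,k/2}$, not $C_1\rho_{\mathrm{data}}^{k}+C_2\rhoobs^{k}$. No deterministic split escapes this trade-off. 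What works is a random coalescence time $\tau$ of the data coupling with $\Prob(\tau>j)\le C\rho_{\mathrm{data}}^{j}$. You then restart the contraction at $\tau$; this needs the emitted-law distance at $\tau$ to be bounded, i.e.\ bounded emitted means, which (O2) alone does not provide. The result is $\E\,\dG\lesssim\sum_{j=0}^{k}\rho_{\mathrm{data}}^{j}\rhoobs^{k-j}+\rho_{\mathrm{data}}^{k}$. This is $\le C\bar\rho^{\,k}$ when $\rho_{\mathrm{data}}\neq\rhoobs$, but only $(k+1)\bar\rho^{\,k}$ when the two rates coincide. The paper sidesteps this bookkeeping by asserting, in one line, a geometric coupling $\E[d_Z(Z^{Z_i}_{i+k},Z^{Z_i'}_{i+k})\mid Z_i,Z_i']\le C_Z\bar\rho^{\,k}d_Z(Z_i,Z_i')$ of the augmented state $Z_i=(\state_{\hat\theta,i},Y_i)$.

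Your indicator-smoothing step, which you leave open, also loses the rate as proposed. With only a second-moment tail, a slowly growing $M_k$ leaves a term $\E\|r\|^2/M_k^2$ that decays only polynomially. The best choice, $M_k\asymp\bar\rho^{\,-k/3}$, balances $M_k\bar\rho^{\,k}$ against $M_k^{-2}$ at $\bar\rho^{\,2k/3}$. Neither ``absorbing into the constant'' nor ``redefining the rate'' proves \eqref{eq:thresh-mix}. The missing observation is that no truncation is needed. Let $(\mu,\Sigma)$ and $(\tilde\mu,\tilde\Sigma)$ be the two emitted laws at time $i+k$, which are fixed before the common $Y_{i+k}$ is revealed. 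On $\{\rscore\le u\}\triangle\{\tilde\rscore\le u\}$, one of the two whitened residuals has norm at most $u$. Re-whiten it with the other law, e.g.\ $Y_{i+k}-\mu=\tilde\Sigma^{1/2}W+(\tilde\mu-\mu)$ with $\|W\|=\tilde\rscore\le u$. This gives $|\tilde\rscore-u|\le\|\Sigma^{-1/2}\tilde\Sigma^{1/2}-I\|\,u+\lambda_-^{-1/2}\|\mu-\tilde\mu\|$, or the symmetric expression, which is at most $c(\lambda_\pm)(1+u)\,\dG$. A bounded conditional density of $\tilde\rscore$ near $u$ then gives a bound linear in $\dG$, with no residual moments needed. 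The paper makes this step immediate by stating the assumption as local Lipschitzness of $z\mapsto a_u(z)=\Prob(\rscore_{\hat\theta,i+1}\le u\mid Z_i=z)$ in the augmented state. As written, your argument yields at best a slower geometric envelope: summable, but not at the stated rate $\bar\rho$.
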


\begin{assumption}[Geometric-mixing concentration]\label{ass:bernstein}
For $u$ in a neighbourhood of $\sstar$, the centred threshold
indicators satisfy
$\Prob(|\hat F_m(u)-\E\hat F_m(u)|>\varepsilon)\le 2\exp(-c\,\meff\,\varepsilon^{2})$
for $0\le\varepsilon\le\varepsilon_{\max}$, with
$\meff\asymp m(1-\bar\rho)/(1+\bar\rho)$ uniformly in $u$.
\end{assumption}

\begin{theorem}[Filtered split-conformal validity for the learned filter]
\label{thm:learned-validity}
Let $m$ be the post-warmup calibration size, $\hat s_m$ the empirical
$(1-\alpha)$-quantile of the learned root scores, $g_\star$ the
stationary score density (bounded above and below near $\sstar$).

\emph{(a)~Chebyshev form.} Under Theorem~\ref{thm:obs-contract} and
Corollary~\ref{cor:thresh-mix}, with probability at least $1-\delta$,
\begin{equation}\label{eq:learned-validity-cheb}
  \bigl|\Prob\{Y_t\in\Cset_t\}-(1-\alpha)\bigr|
  \le C\bigl[\bar\rho^{\,t_0}+(\delta\,\meffcov)^{-1/2}+m^{-1}\bigr],
  \quad \meffcov\asymp m/(1+2\sum_{k\ge 1}\Gamma_k).
\end{equation}

\emph{(b)~Bernstein form.} If additionally
Assumption~\ref{ass:bernstein} holds, with probability at least
$1-\delta$,
\begin{equation}\label{eq:learned-validity-bern}
  \bigl|\Prob\{Y_t\in\Cset_t\}-(1-\alpha)\bigr|
  \le C\bigl[\bar\rho^{\,t_0}+\sqrt{\log(1/\delta)/\meff}+\log(1/\delta)/\meff+m^{-1}\bigr].
\end{equation}
The covariance envelope of Corollary~\ref{cor:thresh-mix} alone
supports~\eqref{eq:learned-validity-cheb}; the
$\sqrt{\log(1/\delta)/\meff}$ rate of~\eqref{eq:learned-validity-bern}
requires the strictly stronger Assumption~\ref{ass:bernstein}.
\end{theorem}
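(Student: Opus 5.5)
The plan is to split the coverage error into three pieces matching the three terms of \eqref{eq:learned-validity-cheb}: a warm-up/initialisation bias, a quantile-estimation fluctuation, and a discreteness/rank term. Define the stationary root-score law $\Gst$ with density $\dens$ and $(1-\alpha)$-quantile $\sstar$. By the set equivalence $\score_t\le\hat q_{1-\alpha}\Leftrightarrow\rscore_t\le\hat s_m$ from §\ref{sec:prelim}, we have
\[
\Prob\{Y_t\in\Cset_t\}-(1-\alpha)=\bigl[\Prob\{\rscore_t\le\hat s_m\}-\Gst(\hat s_m)\bigr]+\bigl[\Gst(\hat s_m)-\Gst(\sstar)\bigr].
\]
The second bracket is at most $\sup\dens\,|\hat s_m-\sstar|$ by the density bound near $\sstar$. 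The first bracket is the non-stationarity/decoupling error.

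\textbf{Warm-up term.} Theorem~\ref{thm:obs-contract} gives geometric forgetting of the initial quotient state in $\dG$. After $t_0$ discarded steps, the learned emitted laws lie within $\Cobs\rhoobs^{t_0}$ of those started from the stationary observable state. The data process mixes at rate $\rho_{\mathrm{data}}$. Together with the local Lipschitz condition on the conditional root-score law (App.~\ref{app:thresh-mix}), which converts $\dG$ perturbations of $(\hat Y,\hat\Sigma)$ into perturbations of $\rscore$ using the spectral bounds (O2), this yields $|\Prob\{\rscore_i\le u\}-\Gst(u)|\le C\bar\rho^{\,i}$ uniformly for $u$ near $\sstar$. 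Averaging over post-warm-up indices $i\ge t_0$ gives the $\bar\rho^{\,t_0}$ term in both the calibration marginal CDF $\E\hat F_m(u)$ and the test-point probability. The same bound also covers the dependence between $\hat s_m$ and the test score: since $\hat s_m$ is measurable with respect to the calibration block, I will handle it by conditioning on $\hat s_m$ and using the covariance envelope \eqref{eq:thresh-mix} between the test indicator and the calibration indicators. If this fails to close cleanly, I will instead insert a gap of indices and absorb the resulting $\bar\rho^{\text{gap}}$ into the $\bar\rho^{\,t_0}$ term.

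\textbf{Quantile fluctuation.} For fixed $u$ near $\sstar$, Corollary~\ref{cor:thresh-mix} gives
\[
\Var\hat F_m(u)\le m^{-1}\bigl(1+2\textstyle\sum_{k\ge1}\Gamma_k\bigr)=1/\meffcov .
\]
Applying Chebyshev at the two points $u_\pm=\sstar\pm\eta$ shows that, with probability at least $1-\delta$,
\[
\hat F_m(u_-)<1-\alpha\le\hat F_m(u_+),
\]
once $\eta\,\inf\dens$ exceeds $(\delta\meffcov)^{-1/2}+C\bar\rho^{\,t_0}+C/m$. The $m^{-1}$ accounts for the $\lceil(m+1)(1-\alpha)\rceil$ rank convention. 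This sandwich bounds $|\hat s_m-\sstar|$, and hence the second bracket, giving (a). Pointwise Chebyshev at two points suffices, so no uniform-in-$u$ empirical-process bound is needed; this is why (a) is only Chebyshev-rate. For (b), I will replace Chebyshev at $u_\pm$ by Assumption~\ref{ass:bernstein} with $\varepsilon\asymp\sqrt{\log(1/\delta)/\meff}$. The $\log(1/\delta)/\meff$ term covers the regime where the required $\varepsilon$ approaches $\varepsilon_{\max}$, or the second-order Bernstein term.

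\textbf{Main obstacle.} I expect the hardest step to be the decoupling in the first bracket, because the test score at time $t$ is not independent of the data-dependent threshold $\hat s_m$. The covariance envelope only controls fixed-$u$ indicators, so I would first try discretising $u$ on the $\eta$-grid used for the sandwich and union-bounding over the two endpoints. I would then verify that the constant $C$ absorbs the density bounds, $\Cobs$, and $C_\Gamma$.
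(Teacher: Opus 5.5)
You have the paper's core argument (App.~\ref{app:proofs-validity}). You obtain the warm-start bias from observable forgetting and a variance bound of order $(1+2\sum_{k\ge1}\Gamma_k)/m$ from \eqref{eq:thresh-mix}. You apply Chebyshev (resp.\ Assumption~\ref{ass:bernstein}) at the two deterministic points $\sstar\pm\varepsilon$, handle the $O(m^{-1})$ rank offset, and invert with the density lower bound. One point in your favour: the Lipschitz condition on the conditional score law, used directly rather than a pathwise score perturbation plus the Markov/bandwidth step of Theorem~\ref{thm:C}, gives the linear $\bar\rho^{\,t_0}$ in the statement. The chain the paper cites gives only $\rho^{(t_0-1)/2}$.

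The divergence is in the final step, and there your plan contains a step that would fail. You write the error as $[\Prob\{\rscore_t\le\hat s_m\}-\Gst(\hat s_m)]+[\Gst(\hat s_m)-\Gst(\sstar)]$. You then propose to control the first bracket by conditioning on $\hat s_m$ and applying \eqref{eq:thresh-mix} between the test indicator and the calibration indicators. That envelope only bounds pairwise covariances of indicators at a \emph{fixed} threshold $u$. But $\hat s_m$ is an order statistic of the whole calibration block. Decoupling $\rscore_t$ from it would need a mixing coefficient between $\sigma(\rscore_{t_0},\dots,\rscore_n)$ and $\rscore_t$, which pairwise covariances do not supply. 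The gap fallback does not rescue this either. Algorithm~\ref{alg:fscp} calibrates on scores up to $n$ and predicts at $n+1$. Any calibration-to-test separation you insert changes the algorithm and is a different index from the warm-up $t_0$, so $\bar\rho^{\,\mathrm{gap}}$ cannot be absorbed into $\bar\rho^{\,t_0}$. Read genuinely conditionally on the calibration block, the first bracket need not be small at all. $\Prob(\rscore_{n+1}\le u\mid\Fcal_n)$ is a function of the current filter/data state, and for a misspecified learned filter it can differ from $\Gst(u)$ by an $O(1)$ amount.

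The paper never decouples. On $E_\varepsilon=\{|\hat s_m-\sstar|\le\varepsilon\}$, monotonicity gives $\{\rscore_t\le\sstar-\varepsilon\}\subseteq\{\rscore_t\le\hat s_m\}\subseteq\{\rscore_t\le\sstar+\varepsilon\}$. So only the \emph{marginal} test-time CDF at two deterministic thresholds enters, controlled by the forgetting gap $b_{\rm test}$, and $\Prob(E_\varepsilon^c)$ is paid additively (Proposition~\ref{prop:approx}). Equivalently, read the coverage as the marginal test-score CDF evaluated at the random threshold. Then your first bracket is at most $\sup_{u\in I}|\Prob(\rscore_t\le u)-\Gst(u)|$ on $\{\hat s_m\in I\}$, with no decoupling needed.

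Your ``discretise at the two endpoints'' idea is exactly this sandwich. Make it the main route, say that it rests on monotonicity rather than on any covariance bound, and drop the conditioning on $\hat s_m$. Finally, the closing claim that (b) needs the strictly stronger Assumption~\ref{ass:bernstein} deserves a sentence. The envelope controls only the second moment of $\hat F_m(u)-\E\hat F_m(u)$, hence only polynomial tails. Exponential tails need something like geometric $\beta$-mixing.
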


\begin{proposition}[Oracle exact validity]\label{prop:oracle}
If the frozen filter admits standardised innovations
$Y_t = \hat Y_{t|t-1} + \hat\Sigma_{t|t-1}^{1/2} Z_t$ with i.i.d.\
spherically symmetric $(Z_t)$ independent of the past, then
$\score_t=Z_t^{\top}Z_t$ are i.i.d.\ and Algorithm~\ref{alg:fscp} is
exactly split-conformally valid via Theorem~\ref{thm:A}.
\end{proposition}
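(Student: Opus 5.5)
The plan is to reduce Proposition~\ref{prop:oracle} directly to Theorem~\ref{thm:A}. We show that the scores entering Algorithm~\ref{alg:fscp} are i.i.d., hence exchangeable.

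\textbf{Step 1: express the score in terms of $Z_t$.} Substituting the standardised-innovation representation into \eqref{eq:score} gives $\res_t=\hat\Sigma_{t|t-1}^{1/2}Z_t$. Assuming the symmetric square root, we get $\score_t=Z_t^{\top}\hat\Sigma_{t|t-1}^{1/2}\hat\Sigma_{t|t-1}^{-1}\hat\Sigma_{t|t-1}^{1/2}Z_t=Z_t^{\top}Z_t$. If a non-symmetric factor $B_t$ with $B_tB_t^{\top}=\hat\Sigma_{t|t-1}$ is used instead, the score becomes $Z_t^{\top}B_t^{\top}(B_tB_t^{\top})^{-1}B_tZ_t=Z_t^{\top}Z_t$, because $B_t$ is square and invertible. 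So the identity does not depend on the choice of root.

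Spherical symmetry is not needed for this algebra. It only guarantees that the conclusion does not depend on how the root is chosen, since $B_t=\hat\Sigma^{1/2}U_t$ for an orthogonal, past-measurable $U_t$, and $U_tZ_t$ has the same law as $Z_t$ given the past.

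\textbf{Step 2: show the scores are i.i.d.} Since $Z_t$ is independent of the past $\mathcal F_{t-1}$, which contains $Z_{t_0},\dots,Z_{t-1}$, and the $Z_t$ are identically distributed, the sequence $(Z_t)_{t=t_0}^{n+1}$ is i.i.d. Applying the fixed measurable map $z\mapsto\|z\|_2^2$ keeps the sequence i.i.d. Therefore $(\score_t)_{t=t_0}^{n+1}$, including the test score $\score_{n+1}=(Y_{n+1}-\hat Y_{n+1|n})^{\top}\hat\Sigma_{n+1|n}^{-1}(Y_{n+1}-\hat Y_{n+1|n})$, is i.i.d. and in particular exchangeable.

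The frozen filter matters here. Because its parameters were fitted on a held-out pretraining trajectory, $\hat Y_{t|t-1}$ and $\hat\Sigma_{t|t-1}$ are measurable functions of the past alone, so the representation holds with the same $Z_t$ throughout calibration and test. The warm-up discard in Algorithm~\ref{alg:fscp} changes only the index set, to size $n-t_0+1$; the rank $\lceil(n-t_0+2)(1-\alpha)\rceil$ is exactly the split-conformal rank for that calibration size.

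\textbf{Step 3: apply Theorem~\ref{thm:A}.} Theorem~\ref{thm:A} now gives $\Prob\{Y_{n+1}\in\Cset_\alpha(x_{n+1})\}\ge1-\alpha$. If $Z_t$ has a density, so that $Z^{\top}Z$ is continuous, the scores are a.s.\ distinct and the upper bound $1-\alpha+1/(n-t_0+2)$ also holds.

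The only delicate point, and the expected main obstacle, is Step 2. It requires that independence of $Z_t$ from the past covers the whole calibration and test block jointly. This follows by iterating conditional independence: the joint law of $(Z_{t_0},\dots,Z_{n+1})$ factorises as a product of its marginals. I would write out that factorisation explicitly rather than leave it implicit.
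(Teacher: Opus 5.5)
Your proof is correct and matches the paper's argument, which is given only inside the statement itself with no separate proof: substituting the standardised-innovation representation gives $\score_t=Z_t^{\top}Z_t$, so the calibration and test scores are i.i.d.\ and Theorem~\ref{thm:A} applies with calibration size $n-t_0+1$. The joint-independence step you flag as the main obstacle is already supplied by the hypothesis that $(Z_t)$ is i.i.d., so you do not need the iterated factorisation.
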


\paragraph{Plug-in baselines.}
$\StaticCGIF$, \FactorCGIF, \GroupCGIF\ re-use the calibration block
to fit $\hat\Sigma$ and are \emph{plug-in} rather than exact
split-conformal: the coverage gap incurs a leave-one-out stability
term controlled by Theorem~\ref{thm:plugin}
(App.~\ref{app:proofs-A9-plugin}), with $\varepsilon_m=O(m^{-1})$ for
ridge empirical covariance.

\subsection{Log-volume sharpness}
\label{subsec:logvol}

\begin{theorem}[Near-oracle log-volume]\label{thm:logvol}
Assume Gaussian oracle realisability ($P_t^\star=\mathcal N(\mu_t^\star,\Sigma_t^\star)$
in the filter family at $\theta_\star$); the comparator class
$\Fcond$ consists of Gaussian ellipsoid rules whose population
conformal ellipsoid has conditional coverage $\ge 1-\alpha$. Let
$a_m$ denote the calibration-radius rate of
Theorem~\ref{thm:learned-validity} ($\sqrt{1/(\delta\,\meffcov)}$ in
part~(a), $\sqrt{\log(1/\delta)/\meff}$ in part~(b)). Then on the
high-probability event,
\begin{equation}\label{eq:logvol-bound}
  \Vhat_m(\hat\theta)
  \le\inf_{\theta\in\Fcond}\Vfun(\theta)+C_1\sqrt{\eNLL}+C_2[\bar\rho^{\,t_0}+a_m+m^{-1}].
\end{equation}
The comparison is against \emph{conditionally valid Gaussian
ellipsoid rules} only; arbitrary marginally valid conformal sets can
trade coverage across regimes and are not bounded by~\eqref{eq:logvol-bound}.
\end{theorem}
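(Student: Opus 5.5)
The plan is to decompose the empirical log-volume $\Vhat_m(\hat\theta)$ via \eqref{eq:logvol} into a shape term $\tfrac{1}{2N}\log\det\hat\Sigma_{\hat\theta,t}$ and a radius term $\log\hat s_m$. We then compare each term first to its oracle counterpart at $\theta_\star$, and finally compare the oracle to an arbitrary $\theta\in\Fcond$.

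The comparison runs in three links. The first link is empirical versus population radius for $\hat\theta$. On the high-probability event of Theorem~\ref{thm:learned-validity}, the quantile estimate satisfies $|\hat s_m-\sstar|\le g_\star(\sstar)^{-1}[\bar\rho^{\,t_0}+a_m+m^{-1}]$ up to constants. This uses the density lower bound near $\sstar$ to invert the CDF error, which is the same step used to prove the coverage bounds. Because $\log$ is Lipschitz on $[\sstar/2,2\sstar]$, this contributes $C_2[\bar\rho^{\,t_0}+a_m+m^{-1}]$, together with a stationarity/warm-up term handled by Theorem~\ref{thm:obs-contract}.

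The second link is population $\hat\theta$ versus $\theta_\star$. Under (O2), both $\tfrac{1}{2N}\log\det$ and the Mahalanobis root score are Lipschitz in $(\mu,\Sigma)$, with constants depending on $\lambda_\pm$. The population quantile of $\rscore$ moves by at most a constant times the $L^2$ perturbation of the emitted laws, and the same holds for $\log\det$. By the bi-Lipschitz equivalence of $\dG$ with KL (App.~\ref{app:bures}), that perturbation is controlled by the expected KL between the learned and oracle emitted Gaussians. Under realisability this KL equals exactly the excess NLL $\eNLL$. Jensen/Pinsker-type steps then give $\E\,\dG\le C\sqrt{\eNLL}$, hence $|\Vfun(\hat\theta)-\Vfun(\theta_\star)|\le C_1\sqrt{\eNLL}$.

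The third link is $\theta_\star$ versus $\Fcond$, and it is the oracle optimality step. Conditionally on the past, $P_t^\star$ is Gaussian, and among all sets with conditional mass $\ge1-\alpha$ the minimum-volume one is the Mahalanobis ellipsoid $\{(y-\mu^\star_t)^\top\Sigma_t^{\star-1}(y-\mu^\star_t)\le\chi^2_{N,1-\alpha}\}$. Its normalised log-volume is $\tfrac1N\log\kappa_N+\tfrac12\log\chi^2_{N,1-\alpha}+\tfrac{1}{2N}\log\det\Sigma_t^\star$. Any $\theta\in\Fcond$ produces, at each $t$, an ellipsoid with conditional coverage $\ge1-\alpha$, so its per-$t$ log-volume is at least the oracle's. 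Averaging over $t$ gives $\Vfun(\theta_\star)\le\Vfun(\theta)$. For the oracle itself the root-score quantile is exactly $\chi_{N,1-\alpha}$, by Proposition~\ref{prop:oracle}. Summing the three links and taking the infimum over $\theta\in\Fcond$ yields \eqref{eq:logvol-bound}.

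The main obstacle is the second link, namely transferring a small excess NLL into a small shift of the population score quantile. The plan there is a two-stage argument. First, bound $\sup_u|F_{\hat\theta}(u)-F_\star(u)|$ by a total-variation bound between the joint laws of the standardised residuals, and control that total variation via Pinsker by the average KL, i.e.\ by $\eNLL$. Second, invert using the density bounds on $g_\star$. For the $\log\det$ part we use $|\log\det A-\log\det B|\le\lambda_-^{-1}\sqrt N\|A-B\|_F$. If the square-root rate is lost in the quantile inversion, we fall back on the observable contraction of Theorem~\ref{thm:obs-contract} to keep the path-wise perturbations uniformly bounded.
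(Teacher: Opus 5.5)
Your proposal is correct and follows essentially the same route as the paper's proof: the volume formula split into radius and $\log\det$ terms; the likelihood-ratio (minimum-volume Gaussian level set) argument giving $\Vfun(\theta_\star)\le\inf_{\theta\in\Fcond}\Vfun(\theta)$; Pinsker on $\mathrm{KL}(P_t^\star\|Q_{\hat\theta,t})$, whose average is $\eNLL$ under realisability, followed by density inversion for the population radius; a KL-based bound on the $\log\det$ term; and the quantile-concentration event of Theorem~\ref{thm:learned-validity} for $\hat s_m$. Three small differences: the paper bounds $\log\det$ directly through $(\log x)^2\le C_\lambda(x-\log x-1)$ on the eigenvalues instead of your Frobenius/Bures-equivalence detour; the Lipschitz-in-mean description at the start of your second link would lose a square root (giving $\eNLL^{1/4}$), so the TV/Pinsker argument in your last paragraph is the one that delivers $\sqrt{\eNLL}$ and matches the paper; and the fallback to Theorem~\ref{thm:obs-contract} is therefore unnecessary.
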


The constants in
\eqref{eq:learned-validity-cheb}--\eqref{eq:logvol-bound} are not
intended as numerically tight finite-sample certificates for the
reported datasets; their role is to identify the dependence
quantities that must be controlled, which the empirical audits then
check directly. The linear-Kalman case recovers the closed-loop
contraction rate exactly and is used as a sanity check
(App.~\ref{app:proofs-L}, Theorem~\ref{thm:weighted-rms}).

\section{Experiments}
\label{sec:experiments}

We evaluate three questions. \textbf{Q1.} Does a learned time-varying
covariance shape improve at-target joint ellipsoid sharpness on
graph-native traffic? \textbf{Q2.} Are the contraction and
score-dependence quantities used by the theory small in the deployed
filters? \textbf{Q3.} Where does the learned filter stop being the
best choice?

\paragraph{Protocol.}
Two graph-native traffic benchmarks at moderate~$N$ ($\metrla$
busiest-$20$, $\pemsbay$ busiest-$50$) for the headline comparison;
six additional correlated-sensor benchmarks plus the full-graph
$\metrla$ ($N{=}207$) and $\pemsbay$ ($N{=}325$) for scope and
contraction validation, $10$ real-data evaluation cells in total.
Every cell uses the same chronological 70/10/10/10
train/val/calib/test split, $\alpha{=}0.1$, and 10 seeds
(App.~\ref{app:experiments}). \emph{At-target} means joint coverage
$\ge 0.895$ ($0.005$ slack for 10-seed Monte Carlo variability at
$T_{\mathrm{test}}\!\approx\!4000$). Display names used throughout
(prose, figures, tables): $\GNF$ (learned graph filter), $\KalmanFCP$
(linear Kalman), $\StaticCGIF$ (static-covariance baseline),
$\DiagGRU$ (graph-and-rank-removed), $\GCNrankzero$
(graph-retained-but-rank-zero).

\subsection{Moderate-\texorpdfstring{$N$}{N} graph-native traffic: sharpness gains (Q1)}
\label{subsec:hero}

\begin{figure}[t]
  \centering
  \includegraphics[width=\linewidth]{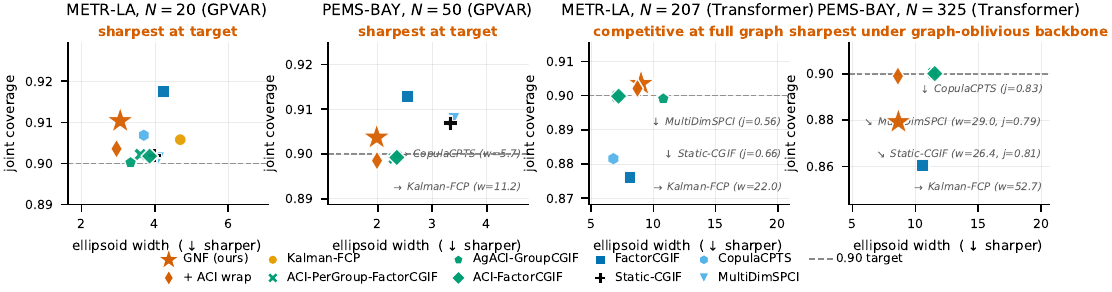}
  \caption{\textbf{Width vs.\ joint coverage on graph-native traffic.}
  Each panel plots ellipsoid width (lower is sharper) against joint
  coverage at $\alpha{=}0.1$; the dashed line marks the $0.90$
  target. The first two panels correspond to Table~\ref{tab:hero}
  (moderate-$N$, \textsc{gpvar} backbone) and the last two panels
  are the full-graph cells under a graph-oblivious Transformer
  backbone (Table~\ref{tab:multibackbone-full}); plotted points are
  10-seed means. Table~\ref{tab:hero} reports the exact deltas:
  unwrapped-$\GNF$ is $-8.4\%$ vs.\ the strongest at-target
  non-filter on \metrla-$20$ and $-13.9\%$ on \pemsbay-$50$.
  Table~\ref{tab:scale} shows the complementary graph-aware
  \textsc{gpvar} regime where factor baselines lead.}
  \label{fig:headline}
\end{figure}

\begin{table}[t]
  \caption{\textbf{Moderate-$N$ graph-native regime: $\GNF$ is the
  sharpest at-target method evaluated.} Width (mean$\pm$std, 10
  seeds) and joint coverage at $\alpha{=}0.1$; relative width vs.\
  $\StaticCGIF$. \best{Bold} indicates the best at-target method
  ($\ge 0.895$); ``$\uparrow$'' = over-cover. Filter rows use their native one-step
  mean; non-filter rows use the shared \textsc{gpvar} mean. $\DiagGRU$
  removes both graph mixing and the low-rank covariance head;
  $\GCNrankzero$ in App.~\ref{app:rank} isolates the covariance rank
  at fixed graph mixing. Conditional-$\Sigma_t$, group, spectral,
  $\KalmanFCP$, \textsc{spci}, and \textsc{HopCPT} rows are in
  Table~\ref{tab:hero-full}.}
  \label{tab:hero}
  \centering\small
  \setlength{\tabcolsep}{2.5pt}
  \renewcommand{\arraystretch}{0.95}
  \begin{tabular}{l rc rc rr}
    \toprule
    & \multicolumn{2}{c}{\metrla\ ($N{=}20$)} & \multicolumn{2}{c}{\pemsbay\ ($N{=}50$)} & \multicolumn{2}{c}{$\Delta$ vs $\StaticCGIF$}\\
    \cmidrule(lr){2-3}\cmidrule(lr){4-5}\cmidrule(lr){6-7}
    Method (role) & width & joint & width & joint & \metrla & \pemsbay \\
    \midrule
    \rowcolor{heroRow}
    $\GNF$ \textbf{(ours)}
      & \widthstd{3.05}{0.13} & $0.910$
      & \best{\widthstd{1.98}{0.06}} & $0.904$
      & $-23.6\%$ & \best{$-40.5\%$} \\
    \rowcolor{heroRow}
    \quad + $\ACI$ wrap
      & \best{\widthstd{2.95}{0.17}} & $0.904$
      & \widthstd{1.99}{0.07} & $0.899$
      & \best{$-26.1\%$} & $-40.2\%$ \\
    \addlinespace[1pt]
    Best non-filter at-target\textsuperscript{\dag}
      & \widthstd{3.33}{0.23} & $0.900$
      & \widthstd{2.30}{0.09} & $0.899$
      & $-16.5\%$ & $-30.9\%$ \\
    $\StaticCGIF$ (reference)
      & \widthstd{3.99}{0.17} & $0.902$
      & \widthstd{3.33}{0.11} & $0.907$
      & ---       & --- \\
    $\DiagGRU$ (no graph, diag $\Sigma_t$)
      & \widthstd{4.71}{1.16} & $0.932\,\uparrow$
      & \widthstd{2.88}{0.10} & $0.921\,\uparrow$
      & $+18.0\%$ & $-13.5\%$ \\
    \textsc{CopulaCPTS} \citep{sun2022copula}
      & \widthstd{3.70}{0.24} & $0.907$
      & \widthstd{5.69}{0.74} & $0.908$
      & $-7.3\%$  & $+70.9\%$ \\
    \textsc{MultiDimSPCI} \citep{xu2024multidim}
      & \widthstd{4.10}{0.17} & $0.901$
      & \widthstd{3.40}{0.11} & $0.908$
      & $+2.8\%$  & $+2.1\%$ \\
    \bottomrule
  \end{tabular}
  \\[2pt]
  {\footnotesize\textsuperscript{\dag}\AgACIGroupCGIF\ on \metrla;
  \ACIPerGroupFactorCGIF\ on \pemsbay.}
  \vspace{-0.3em}
\end{table}

In the moderate-$N$ graph-native cells, the learned filter $\GNF$
gives sharper at-target ellipsoids than every baseline we evaluate
(Table~\ref{tab:hero}): widths $3.05$ and $1.98$ at joint coverage
$0.910$ and $0.904$, i.e.\ $-23.6\%/-40.5\%$ vs.\ $\StaticCGIF$ and
$-8.4\%/-13.9\%$ vs.\ the strongest at-target non-filter competitor.
On the moderate-$N$ cells the $\ACI$ wrap preserves at-target coverage
and slightly tightens \metrla\ ($3.05\!\to\!2.95$); at full-graph scale
it empirically restores at-target coverage where the unwrapped filter
under-covers (\pemsbay-$325$: $0.878\!\to\!0.899$; §\ref{subsec:scale}).

\begin{table}[t]
  \caption{\textbf{Normalised log-volume on the headline cells} (nats
  per coordinate; lower is sharper). 10-seed mean; standard
  deviations in App.~\ref{app:logvol}. The Theorem~\ref{thm:logvol}
  metric agrees with the width ranking in Table~\ref{tab:hero}.}
  \label{tab:logvol-main}
  \centering\small
  \setlength{\tabcolsep}{4pt}
  \begin{tabular}{l cc l}
    \toprule
    Method & \metrla-$20$ $\Vhat_m$ & \pemsbay-$50$ $\Vhat_m$ & role \\
    \midrule
    $\GNF$ \textbf{(ours)}    & \best{$1.18$} & \best{$0.81$} & learned filter \\
    \AgACIGroupCGIF           & $1.31$ & $1.04$ & strongest \metrla\ non-filter \\
    \ACIPerGroupFactorCGIF    & $1.36$ & $1.06$ & strongest \pemsbay\ non-filter \\
    $\StaticCGIF$ (reference) & $1.45$ & $1.32$ & static covariance \\
    \bottomrule
  \end{tabular}
  \vspace{-0.3em}
\end{table}

\paragraph{Mechanism.}
The gain is primarily a covariance-shape gain.
$\StaticCGIF$ uses a single calibration covariance for all times;
$\DiagGRU$ learns time variation but lacks cross-sensor covariance
axes; $\GCNrankzero$ keeps the graph but remains diagonal. Standard
conditional-$\Sigma_t$ updates do not close the gap either: the best
at-target EWMA / rolling / time-of-day / local-ellipsoid row has
width $4.31$ on \metrla-$20$ and $3.43$ on \pemsbay-$50$
(App.~\ref{app:leaderboard}, Table~\ref{tab:hero-full}), still well
above $\GNF$'s $3.05$ and $1.98$. The rank-only sweep with the GCN
retained drops from width $5.32$ at $r{=}0$ ($\GCNrankzero$) to
$3.04$ at $r{=}4$ on \metrla-$20$, while $\DiagGRU$ in
Table~\ref{tab:hero} has width $4.71$. Thus the
diagonal-plus-low-rank head is the clearest isolated mechanism; the
results are consistent with a residual graph contribution after the
low-rank covariance gain, but they do not causally isolate it. The
log-volume metric of Theorem~\ref{thm:logvol}
(Table~\ref{tab:logvol-main}) agrees with the trace-width ranking on
both headline cells, with $\GNF$ improving $\Vhat_m$ by $0.27$ and
$0.51$ nats per coordinate over $\StaticCGIF$.

\paragraph{Same-mean caveat.}
Table~\ref{tab:hero} is an end-to-end comparison: filter rows use
their native one-step mean and covariance, while non-filter
covariance baselines use the shared \textsc{gpvar} mean. The
backbone-swap experiments in App.~\ref{app:multi-backbone} are
therefore best read as a \emph{mean-channel stress test}, not as a
perfect same-mean isolation of the covariance head, since the
learned covariance was trained jointly with its native filter.

\subsection{Theory diagnostics (Q2)}
\label{subsec:diag}

\begin{table}[t]
  \caption{\textbf{Compact main-text diagnostic summary.} Empirical
  fitted rates across the audited real-data cells; they diagnose the
  conclusions used by the theory but do not verify the population
  assumptions. Per-cell values in Table~\ref{tab:rho-full};
  protocols in App.~\ref{app:rhohat-protocol}--\ref{app:tau-int}.}
  \label{tab:diag-main}
  \centering\footnotesize
  \setlength{\tabcolsep}{3pt}
  \begin{tabular}{l p{4.6cm} p{2.5cm} p{4.0cm}}
    \toprule
    Diagnostic & Theory role & Range across cells & Caveat \\
    \midrule
    $\rhodG$ & empirical emitted-law contraction corresponding to Theorem~\ref{thm:obs-contract} & $0.45$--$0.46$ & diagnostic of the conclusion, not a realisability proof \\
    $\rhoDL$ & finite-horizon observability (Ass.~\ref{ass:obs}(O3)) & $0.48$--$0.50$ & lower-tail $p5\!\approx\!0.06$ on neural \metrla \\
    $\rhoscore$ & score-CDF forgetting (Theorem~\ref{thm:C}) & $0.140$--$0.153$ & downstream of $\dG$; not interchangeable \\
    $\rhoind$, $\tauint$ & threshold dependence (Cor.~\ref{cor:thresh-mix}, Ass.~\ref{ass:bernstein}) & $\rhoind\!\approx\!0.85$, $\tauint\!\in\![10,13]$ & Bernstein form is consistent with the audited $\tauint$, not certified \\
    \bottomrule
  \end{tabular}
  \vspace{-0.3em}
\end{table}

The four contraction rates measure distinct objects and should not
be conflated: $\rhodG$ is the empirical analogue of the
Theorem~\ref{thm:obs-contract} contraction object, $\rhoDL$ probes
Assumption~\ref{ass:obs}(O3), $\rhoscore$ is a downstream score-CDF
diagnostic for Theorem~\ref{thm:C}, and $\rhoind$ together with the
integrated autocorrelation time $\tauint$ probes the
threshold-dependence regime relevant to
Theorem~\ref{thm:learned-validity}~(a) and
Assumption~\ref{ass:bernstein}. Table~\ref{tab:diag-main} summarises
the main-text values; the audited $\tauint\!\in\![10,13]$ means the
effective calibration size for the Bernstein-style heuristic is
roughly an order of magnitude smaller than the raw calibration size,
which is consistent with using dependence-aware rather than
exchangeability-based coverage language. The linear-Kalman sanity
check confirms $\rhodG\!\approx\!\sigma_1$ on \GraphLGSSM\
(App.~\ref{app:dG-audit}), and the discarded-warm-up sweep
(App.~\ref{app:warmup}) renders the calibration burn-in bias
empirically negligible at $t_0\!\ge\!50$.

\subsection{Full graphs and scope (Q3)}
\label{subsec:scale}\label{subsec:ablation}

\begin{table}[t]
  \caption{\textbf{\textsc{gpvar} scale sweep --- moderate-$N$ win,
  full-graph crossover.} Width (mean$\pm$std, 10 seeds) and joint
  coverage at $\alpha{=}0.1$; \best{bold}/\secondbest{underlined} =
  best/second at-target; ``$\downarrow$'' flags joint $<\!0.895$.
  This table fixes a graph-aware \textsc{gpvar} mean backbone; under
  this backbone factor methods become sharper at full graph scale.
  Figure~\ref{fig:headline} and Table~\ref{tab:multibackbone-full}
  show the complementary graph-oblivious Transformer/GRU regime.}
  \label{tab:scale}
  \centering\scriptsize
  \setlength{\tabcolsep}{2.5pt}
  \renewcommand{\arraystretch}{0.9}
  \begin{tabular}{l rc rc rc rc}
    \toprule
    & \multicolumn{4}{c}{\metrla} & \multicolumn{4}{c}{\pemsbay}\\
    \cmidrule(lr){2-5}\cmidrule(lr){6-9}
    & \multicolumn{2}{c}{$N{=}20$} & \multicolumn{2}{c}{$N{=}207$}
    & \multicolumn{2}{c}{$N{=}50$} & \multicolumn{2}{c}{$N{=}325$}\\
    \cmidrule(lr){2-3}\cmidrule(lr){4-5}\cmidrule(lr){6-7}\cmidrule(lr){8-9}
    Method & width & joint & width & joint & width & joint & width & joint\\
    \midrule
    \rowcolor{heroRow}
    $\GNF$ (ours)
      & \secondbest{\widthstd{3.05}{0.13}} & $0.910$
      & \widthstd{9.14}{0.24} & $0.902$
      & \best{\widthstd{1.98}{0.06}} & $0.904$
      & \widthstd{8.58}{0.28} & $0.878\downarrow$\\
    \rowcolor{heroRow}
    \quad + $\ACI$ wrap
      & \best{\widthstd{2.95}{0.17}} & $0.904$
      & \widthstd{9.00}{0.60} & $0.902$
      & \secondbest{\widthstd{1.99}{0.07}} & $0.899$
      & \secondbest{\widthstd{9.01}{0.38}} & $0.899$\\
    \ACIPerGroupFactorCGIF
      & \widthstd{3.60}{0.28} & $0.902$
      & \secondbest{\widthstd{7.21}{0.00}} & $0.902$
      & \widthstd{2.30}{0.09} & $0.899$
      & \best{\widthstd{5.64}{0.00}} & $0.901$\\
    \textsc{CopulaCPTS} \citep{sun2022copula}
      & \widthstd{3.70}{0.24} & $0.907$
      & \best{\widthstd{6.73}{0.04}} & $0.914$
      & \widthstd{5.69}{0.74} & $0.908$
      & \widthstd{14.4}{1.41} & $0.894\downarrow$\\
    \textsc{MultiDimSPCI} \citep{xu2024multidim}
      & \widthstd{4.10}{0.17} & $0.901$
      & \widthstd{12.6}{0.00} & $0.877\downarrow$
      & \widthstd{3.40}{0.11} & $0.908$
      & \widthstd{12.7}{0.00} & $0.886\downarrow$\\
    $\StaticCGIF$ (static $\hat\Sigma$)
      & \widthstd{3.99}{0.17} & $0.902$
      & \widthstd{11.8}{0.00} & $0.880\downarrow$
      & \widthstd{3.33}{0.11} & $0.907$
      & \widthstd{12.0}{0.00} & $0.887\downarrow$\\
    $\KalmanFCP$ (tuned Kalman)
      & \widthstd{4.69}{0.28} & $0.906$
      & \widthstd{21.2}{0.00} & $0.900$
      & \widthstd{11.2}{1.03} & $0.897$
      & \widthstd{51.4}{0.00} & $0.852\downarrow$\\
    \bottomrule
  \end{tabular}
  \vspace{-0.3em}
\end{table}

\begin{table}[t]
  \caption{\textbf{Full-graph backbone crossover on \pemsbay-$325$}
  (10 seeds; width / joint coverage at $\alpha{=}0.1$). Factor
  methods lead when the mean backbone is graph-aware
  (\textsc{gpvar}, \textsc{stgnn}); $\GNF{+}\ACI$ is sharper when
  the mean backbone is graph-oblivious (GRU, Transformer). Full
  per-method version in Table~\ref{tab:multibackbone-full}.}
  \label{tab:backbone-crossover}
  \centering\small
  \setlength{\tabcolsep}{4pt}
  \begin{tabular}{l cccc}
    \toprule
    & \textsc{gpvar} & GRU & Transformer & \textsc{stgnn}\\
    & (graph-aware) & (graph-oblivious) & (graph-oblivious) & (graph-aware)\\
    \midrule
    $\GNF + \ACI$           & $9.01 / 0.899$ & $\mathbf{8.70 / 0.898}$ & $\mathbf{8.61 / 0.899}$ & $9.04 / 0.899$ \\
    \ACIPerGroupFactorCGIF  & $\mathbf{5.64 / 0.901}$ & $14.1 / 0.902$ & $11.4 / 0.900$ & $\mathbf{5.96 / 0.900}$ \\
    \bottomrule
  \end{tabular}
  \vspace{-0.3em}
\end{table}

Full-graph leadership is backbone-conditional. Under graph-aware
backbones (\textsc{gpvar}, \textsc{stgnn}) factor baselines are
sharper at $N{=}207,325$ ($5.64$ vs.\ $9.01$ on \pemsbay-$325$ under
\textsc{gpvar}; Table~\ref{tab:scale}). Under graph-oblivious
GRU/Transformer the best factor baseline inflates to $11.4$--$14.1$
while $\GNF{+}\ACI$ remains in the $8.6$--$9.0$ range, sharper
at-target by up to $-39\%$ (Table~\ref{tab:backbone-crossover};
Table~\ref{tab:multibackbone-full} for the full panel). The $\ACI$
wrap empirically restores at-target coverage on the cells where the
unwrapped filter under-covers (\pemsbay-$325$:
$0.878\!\to\!0.899$). $\KalmanFCP$ is sharp enough at \metrla-$20$
($4.69$) but becomes very wide at full graph scale: $21.2$ on
\metrla-$207$ and $51.4$ on \pemsbay-$325$, with the latter also
below coverage target --- evidence that the linear specialisation is
not competitive in the headline cells. On
non-graph-native datasets,
Table~\ref{tab:multibackbone-nontraffic} reports
\textsc{CopulaCPTS} sharpest at-target on
\solar/\loopseattle/\ett, consistent with the scope caveat in
§\ref{sec:discussion}.
\FloatBarrier

\section{Discussion and limitations}
\label{sec:discussion}

\paragraph{Main takeaway.}
The main empirical lesson is that conformal ellipsoids benefit from a
learned conditional covariance shape when the graph carries useful
spatial information and the dimension is moderate. In that regime,
the filter's covariance head reduces the conformal radius/shape
mismatch relative to static covariance, simple
conditional-$\Sigma_t$ updates, and published joint-CP baselines. At
larger graph scale the result becomes a backbone interaction: when
the mean predictor already absorbs graph structure, rank-efficient
factor baselines can be sharper.

\paragraph{Theory scope.}
The theoretical contribution is a conditional stability-to-calibration
route for frozen learned filters. Theorem~\ref{thm:obs-contract}
requires a stable Bayes Gaussian-projection filter, covariance
bounds, and finite-horizon Fisher identifiability;
Theorem~\ref{thm:learned-validity} then requires score-dependence
control through a threshold-autocovariance envelope, with the
Bernstein form requiring the strictly stronger geometric-mixing
concentration of Assumption~\ref{ass:bernstein}. The audits estimate
the operative quantities (Table~\ref{tab:diag-main},
Apps.~\ref{app:dG-audit}, \ref{app:tau-int}, \ref{app:audits},
and \ref{app:obs-audit}) but do not prove population realisability; the
constants are not intended as tight finite-sample certificates.

\paragraph{Validity scope.}
The method should not be described as distribution-free for
arbitrary trained recurrent filters. Exact split-conformal validity
holds only in the oracle standardised-innovation case
(Proposition~\ref{prop:oracle}); otherwise our guarantee is
approximate and depends on the learned score process satisfying the
stated contraction and dependence conditions. Theorem~\ref{thm:logvol}
is a comparison against \emph{conditionally valid} Gaussian ellipsoid
rules under Gaussian oracle realisability, not against arbitrary
marginally valid conformal sets.

\paragraph{Empirical scope and confounding.}
The strongest evidence is \metrla-$20$ and \pemsbay-$50$.
Table~\ref{tab:hero} is an end-to-end comparison: $\GNF$ uses its
native mean and covariance, while non-filter rows share a mean
backbone, and backbone swaps mitigate but do not eliminate this
confounding because the learned covariance was trained jointly with
its native filter. Plug-in baselines ($\StaticCGIF$, \FactorCGIF,
\GroupCGIF, \textsc{MultiDimSPCI}) re-use the calibration block to
fit $\hat\Sigma$ or train a radius head; Theorem~\ref{thm:plugin}
bounds the perturbation, with a $\le 2.3$~pp / $\le 0.3$~$z$-score
empirical deviation on the audited train-fit alternative
(App.~\ref{app:audits}).

\paragraph{Architecture and tuning.}
The present $\GNF$ architecture uses a mean-pooled global recurrent
state, which keeps the model small but may discard node-specific
memory at large $N$ --- consistent with the full-graph crossover
under graph-aware backbones. Published baselines
(\textsc{CopulaCPTS}, \textsc{MultiDimSPCI}) use reference
implementations / default settings rather than exhaustive
per-dataset tuning. The paper's
claim is therefore not that learned filters dominate joint conformal
prediction universally, but that frozen learned filters provide a
useful and auditable way to supply time-varying ellipsoid shapes in
graph-native moderate-dimensional streams.


\bibliographystyle{plainnat}
\bibliography{bib/references}

\appendix
\section{Observable-contraction theory}
\label{app:obs-theory}

This section collects the proofs of the main theorems
(Theorem~\ref{thm:obs-contract}, Corollary~\ref{cor:thresh-mix},
Theorem~\ref{thm:learned-validity}, Theorem~\ref{thm:logvol}) and
defines the metric and identification objects they rely on.

\subsection{Bures-Wasserstein metric on Gaussian predictive laws}
\label{app:bures}

Recall \eqref{eq:dG}. For Gaussian laws on $\R^N$ with covariances
spectrum-bounded between $0<\lambda_-\le\lambda_+<\infty$, the
following relations hold on a compact eigenvalue interval determined
only by $\lambda_-/\lambda_+$:
\begin{equation}\label{eq:bures-equiv}
\begin{aligned}
  c_{\lambda}\,\dG^{2}(P,P')
  &\;\le\;\mathrm{KL}(P\,\|\,P')\;\le\;C_{\lambda}\,\dG^{2}(P,P'),\\[-1pt]
  c_{\lambda}\,\|\Sigma-\Sigma'\|_{F}^{2}
  &\;\le\;
  \tr\!\bigl[\Sigma+\Sigma'-2(\Sigma'^{1/2}\Sigma\Sigma'^{1/2})^{1/2}\bigr]
  \;\le\; C_\lambda\,\|\Sigma-\Sigma'\|_F^{2}.
\end{aligned}
\end{equation}
The inequalities follow from the bound
$x-\log x-1\ge c_\lambda(x-1)^{2}$ on the eigenvalue interval of
$\Sigma'^{-1/2}\Sigma\Sigma'^{-1/2}$, combined with the closed-form
Gaussian KL. Hence \emph{a bound in any one of the three metrics
implies a bound in the other two, with constants depending only on
$\lambda_-,\lambda_+$.} We use $\dG$ in the theorem statements
because Pinsker $\mathrm{TV}\le\sqrt{\mathrm{KL}/2}$ then transports
KL bounds to $\mathrm{TV}$ bounds at no constant cost.

\subsection{Observable quotient and finite-horizon observable metric}
\label{app:obs-quotient}

Let the filter have a non-minimal hidden state $h$ and emit
$O_\theta(h)=\mathcal N(\mu_\theta(h),\Sigma_\theta(h))$. Define an
equivalence relation $h\sim h'$ on hidden states by
\[
  h\sim h'
  \;\Longleftrightarrow\;
  \mathcal L\bigl(O_\theta(\state_t^{h}),t\ge 0\bigr)
  =
  \mathcal L\bigl(O_\theta(\state_t^{h'}),t\ge 0\bigr)
  \quad
  \text{under every admissible input path,}
\]
i.e., $h$ and $h'$ produce the same future sequence of emitted
predictive laws. The observable quotient is $\Qspace=\state/\!\sim$
with quotient-state metric $\dQ$ inherited from any base metric on
$\state$; for the GCN-GRU \GraphNeuralSSM, $\Qspace$ identifies
hidden directions invisible to $g_\theta$ (e.g.\ in the kernel of
$[\partial\mu_\theta/\partial h\mid\partial\Sigma_\theta/\partial h]$
along admissible inputs). For finite horizon $\Lo\ge 0$ define the
finite-horizon observable metric
\begin{equation}\label{eq:Doh}
  \Doh^{2}(q,q')
  \;:=\;
  \sum_{\ell=0}^{\Lo}\beta_\ell\,
  \E_{\mathbf Y_{1:\ell}}
  \bigl[
    \dG^{2}\bigl(O_{\hat\theta}(Q_\ell^{q}),O_{\hat\theta}(Q_\ell^{q'})\bigr)
  \bigr],
\end{equation}
with positive weights $(\beta_\ell)$. Assumption~\ref{ass:obs}(O3)
asks $\cobs\dQ\le\Doh\le\Lout\dQ\cdot\sqrt{\sum_\ell\beta_\ell}$ on
the local class. Instantaneous observability is the case $\Lo=0$.

\subsection{Risk identifies observable transition error: Fisher margin}
\label{app:fisher}

Assumption~\ref{ass:obs}(O4) says NLL excess risk identifies the
observable transition map. We give a sufficient condition.

\begin{lemma}[Fisher margin]\label{lem:fisher}
Let $\Theta_0$ be an identifiable local chart around $\theta_\star$
(parameter symmetries such as low-rank rotations $L\mapsto LQ$
quotiented out). Suppose
$\Delta_T(\theta)\le L_\Delta\|\theta-\theta_\star\|$ on $\Theta_0$
(transition smoothness) and the Gaussian-NLL population risk has a
local quadratic margin
$\Robs(\theta)-\Robs(\theta_\star)\ge\lambda_R\|\theta-\theta_\star\|^{2}$
on $\Theta_0$. Then \emph{(O4)} holds with $\Cid=L_\Delta^{2}/\lambda_R$.
\end{lemma}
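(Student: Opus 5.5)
The argument simply chains the two stated hypotheses; no further structure of the filter family is needed. Fix $\theta\in\Theta_0$ and write $\Delta_T(\theta)$ for the observable transition error appearing in (O4), i.e.\ the discrepancy between $T_\theta$ and $T_{\theta_\star}$ measured in the quotient metric along admissible inputs.

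Transition smoothness gives $\Delta_T(\theta)\le L_\Delta\|\theta-\theta_\star\|$. Because both sides are nonnegative, we may square to obtain $\Delta_T^{2}(\theta)\le L_\Delta^{2}\|\theta-\theta_\star\|^{2}$.

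Next, the local quadratic margin gives
\[
\|\theta-\theta_\star\|^{2}\le\lambda_R^{-1}\bigl[\Robs(\theta)-\Robs(\theta_\star)\bigr].
\]
Substituting this into the previous inequality yields
\[
\Delta_T^{2}(\theta)\le(L_\Delta^{2}/\lambda_R)\bigl[\Robs(\theta)-\Robs(\theta_\star)\bigr],
\]
which is (O4) with $\Cid=L_\Delta^{2}/\lambda_R$.

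The one delicate point is that the two hypotheses must refer to the same norm and the same representative $\theta$. The chart $\Theta_0$ is chosen with symmetries quotiented out, for example low-rank rotations $L\mapsto LQ$. Without that, $\|\theta-\theta_\star\|$ could be large along a symmetry orbit while both $\Delta_T$ and the excess risk vanish, and the quadratic margin would fail. Such orbits leave the emitted laws unchanged, so $\Delta_T$ and $\Robs$ are constant on them. Both inequalities can therefore be evaluated at the representative in $\Theta_0$ closest to $\theta_\star$, and the bound is then invariant along orbits.

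The statement covers $\theta\in\Theta_0$ only. This is enough for Theorem~\ref{thm:obs-contract}, where the bound is applied to $\hat\theta$, provided $\hat\theta$ lies in $\Theta_0$. That localisation is an assumption and is not proved here.
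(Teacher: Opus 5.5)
Your proof is correct and is exactly the paper's argument: square the transition-smoothness bound and substitute the quadratic margin to get $\Delta_T^{2}(\theta)\le L_\Delta^{2}\|\theta-\theta_\star\|^{2}\le(L_\Delta^{2}/\lambda_R)[\Robs(\theta)-\Robs(\theta_\star)]$ on $\Theta_0$. Your remarks on quotienting parameter symmetries, on needing $\hat\theta\in\Theta_0$ when the lemma is applied in Theorem~\ref{thm:obs-contract}, and on the implicit requirement $\lambda_R>0$ are sound side observations that the paper leaves implicit.
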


\begin{proof}
$\Delta_T^{2}(\theta)\le L_\Delta^{2}\|\theta-\theta_\star\|^{2}\le(L_\Delta^{2}/\lambda_R)[\Robs(\theta)-\Robs(\theta_\star)]$.
\end{proof}

\paragraph{Sufficient condition for the quadratic margin (GCN-GRU).}
\label{app:fisher-gcn}
Write the conditional Gaussian-NLL Hessian at the emitted parameter
$\eta=(\mu,\Sigma)$ as $\nabla_\eta^{2}\E[\ell\mid\Fcal_{t-1}]\succeq c_G I$
on $\mathcal G_\lambda$ (uniformly positive on the
spectrum-bounded class; $c_G$ depends on $\lambda_-,\lambda_+$). Let
$J_t=\partial\eta_{\theta,t}/\partial\theta|_{\theta_\star}$ and
suppose the observable Fisher Gramian
$\E[J_t^{\top}J_t]\succeq\gamma I$ after quotienting parameter
symmetries. Then
$\nabla_\theta^{2}\Robs(\theta_\star)=\E[J_t^{\top}\nabla_\eta^{2}\ell_t J_t]\succeq c_G\gamma I$,
and continuity of the Hessian gives
$\Robs(\theta)-\Robs(\theta_\star)\ge(c_G\gamma/4)\|\theta-\theta_\star\|^{2}$
on a neighbourhood of $\theta_\star$. The condition that
$\E[J_t^{\top}J_t]\succeq\gamma I$ is the formal version of
\emph{finite-horizon Fisher observability}: likelihood training
identifies the parts of the parameter that move future emitted
predictive laws.

\subsection{Proof of Theorem~\ref{thm:obs-contract}}
\label{app:proofs-obs}

By Assumption~\ref{ass:obs}(O4),
$\Delta_T^{2}(\hat\theta)\le\Cid\eNLL$, so
$\Delta_T(\hat\theta)\le\sqrt{\Cid\eNLL}$. By definition of
$\Delta_T$ and (O1), for all admissible $y$ and $q,q'\in\Qspace$,
\[
\begin{aligned}
  \dQ\bigl(T_{\hat\theta}(y,q),T_{\hat\theta}(y,q')\bigr)
  &\le\dQ\bigl(T_{\theta_\star}(y,q),T_{\theta_\star}(y,q')\bigr)+\Delta_T(\hat\theta)\,\dQ(q,q')\\
  &\le[\rhostar+\sqrt{\Cid\eNLL}]\,\dQ(q,q')\;=\;\rhoobs\,\dQ(q,q').
\end{aligned}
\]
Iterating $t$ times along the same input path gives
$\dQ(Q_t^{q},Q_t^{q'})\le\rhoobs^{t}\dQ(q,q')$. By (O3),
$\dG(\Plawth{\hat\theta}{t}^{q},\Plawth{\hat\theta}{t}^{q'})=\dG(O_{\hat\theta}(Q_t^{q}),O_{\hat\theta}(Q_t^{q'}))\le\Lout\,\dQ(Q_t^{q},Q_t^{q'})\le\Lout\rhoobs^{t}\dQ(q,q')$,
and the lower observability constant gives $\dQ(q,q')\le\Doh(q,q')/\cobs$,
hence
$\dG(\Plawth{\hat\theta}{t}^{q},\Plawth{\hat\theta}{t}^{q'})\le(\Lout/\cobs)\rhoobs^{t}\Doh(q,q')$.
Under instantaneous observability $\Lo=0$, $\Doh(q,q')$ collapses to
$\dG(\Plawth{\hat\theta}{0}^{q},\Plawth{\hat\theta}{0}^{q'})$ (up to
the chosen $\beta_0$), giving the stated form with $\Cobs=\Lout/\cobs$.
\hfill\qedhere

\subsection{Proof of Corollary~\ref{cor:thresh-mix}}
\label{app:thresh-mix}

Let $Z_i=(\state_{\hat\theta,i},Y_i)$ be the augmented filter+data
state and $a_u(z):=\Prob(\rscore_{\hat\theta,i+1}\le u\mid Z_i=z)$.
Bures-Wasserstein contraction at rate $\rhoobs$
(Theorem~\ref{thm:obs-contract}) plus geometric ergodicity of the
data process at rate $\rho_{\mathrm{data}}<1$ implies geometric
coupling of $Z_i$ at the rate
$\bar\rho:=\max\{\rhoobs,\rho_{\mathrm{data}}\}<1$:
$\E[d_Z(Z_{i+k}^{Z_i},Z_{i+k}^{Z_i'})\mid Z_i,Z_i']\le C_Z\bar\rho^{k}d_Z(Z_i,Z_i')$.
Local Lipschitzness of $a_u$ gives
$|\E[I_{i+k}(u)\mid Z_i]-\E I_{i+k}(u)|\le L_a C_Z\bar\rho^{k}\E[d_Z(Z_i,Z_i')\mid Z_i]$,
and the tower-property identity for covariance plus
$|I_i(u)-\E I_i(u)|\le 1$ yields
$|\Cov(I_i(u),I_{i+k}(u))|\le L_a C_Z\E[d_Z(Z_i,Z_i')]\,\bar\rho^{k}=:C_\Gamma\bar\rho^{k}$.
\hfill\qedhere

\subsection{Proof of Theorem~\ref{thm:learned-validity}}
\label{app:proofs-validity}

We separate the proofs of part~(a) and part~(b); part~(a) uses only
the threshold-autocovariance envelope of
Corollary~\ref{cor:thresh-mix}, while part~(b) consumes the strictly
stronger Assumption~\ref{ass:bernstein}.

\paragraph{Common ingredients.}
Theorem~\ref{thm:obs-contract} controls the warm-start bias
$\bar b_{m,t_0}\le K\bar\rho^{t_0}$ via the local CDF-forgetting
chain (App.~\ref{app:proofs-C}); likewise the test-time local CDF
gap satisfies $b_{\rm test}\lesssim\bar\rho^{t_{\rm test}}$. Write
$p=1-\alpha$, $\sstar=(\Gst)^{-1}(p)$,
$k=\lceil(m+1)p\rceil$, $\hat s_m=S_{(k)}$, and
$\eta_m=\max\{|k/m-p|,|(k-1)/m-p|\}=O(m^{-1})$ for the
conformal rank offset. Let $b_m=\sup_{u\in I}|\E\hat F_m(u)-\Gst(u)|$.
For $\varepsilon$ with $\sstar\pm\varepsilon\in I$, the density lower
bound gives $\Gst(\sstar+\varepsilon)\ge p+\underline g\,\varepsilon$ and
$\Gst(\sstar-\varepsilon)\le p-\underline g\,\varepsilon$. If
$\hat s_m>\sstar+\varepsilon$, then $\hat F_m(\sstar+\varepsilon)\le(k-1)/m\le p+\eta_m$, hence
$\hat F_m(\sstar+\varepsilon)-\E\hat F_m(\sstar+\varepsilon)\le-(\underline g\,\varepsilon-b_m-\eta_m)$;
if $\hat s_m<\sstar-\varepsilon$, then $\hat F_m(\sstar-\varepsilon)\ge k/m\ge p-\eta_m$ while
$\E\hat F_m(\sstar-\varepsilon)\le p-\underline g\,\varepsilon+b_m$, hence
$\hat F_m(\sstar-\varepsilon)-\E\hat F_m(\sstar-\varepsilon)\ge\underline g\,\varepsilon-b_m-\eta_m$.
Therefore, whenever $\underline g\,\varepsilon>b_m+\eta_m$,
\begin{equation}\label{eq:qinv}
\Prob(|\hat s_m-\sstar|>\varepsilon)
\le\!\!\sum_{\sigma\in\{\pm 1\}}\!\!
\Prob\bigl(|\hat F_m(\sstar+\sigma\varepsilon)-\E\hat F_m(\sstar+\sigma\varepsilon)|>\underline g\,\varepsilon-b_m-\eta_m\bigr).
\end{equation}
Both tails are controlled by the same one-sided concentration bound;
$\eta_m=O(m^{-1})$ is absorbed into the existing $m^{-1}$ term of
Theorem~\ref{thm:learned-validity}.

\paragraph{(a) Covariance/Chebyshev form.}
By Corollary~\ref{cor:thresh-mix},
$\sum_{k\ge 1}\Gamma_k\le C_\Gamma\bar\rho/(1-\bar\rho)$, so the
dependent-empirical-CDF variance bound is
$\Var(\hat F_m(u))\le\tau_{\bar\rho}/m$ with
$\tau_{\bar\rho}=\tfrac14+2C_\Gamma\bar\rho/(1-\bar\rho)$. Equivalently,
$\Var(\hat F_m(u))\le 1/(4\meffcov)$ with
$\meffcov\asymp m/(1+8C_\Gamma\bar\rho/(1-\bar\rho))$.
Chebyshev gives, for every fixed $u\in I$ and $\varepsilon>0$,
$\Prob(|\hat F_m(u)-\E\hat F_m(u)|>\varepsilon)\le 1/(4\meffcov\,\varepsilon^{2})$.
Setting the right-hand side equal to $\delta$ and inverting yields
$|\hat F_m(u)-\E\hat F_m(u)|\le 1/(2\sqrt{\delta\,\meffcov})$
with probability $\ge 1-\delta$.
Substituting into \eqref{eq:qinv} and combining with the common
ingredients yields~\eqref{eq:learned-validity-cheb}.

\paragraph{(b) Bernstein form.}
Under Assumption~\ref{ass:bernstein}, the centred indicators satisfy
$\Prob(|\hat F_m(u)-\E\hat F_m(u)|>\varepsilon)\le 2\exp(-c\meff\varepsilon^{2})$
uniformly in $u\in I$, with
$\meff\asymp m(1-\bar\rho)/(1+\bar\rho)$. Solving for $\varepsilon$
gives, with probability $\ge 1-\delta$,
$|\hat F_m(u)-\E\hat F_m(u)|\le \sqrt{\log(2/\delta)/(c\,\meff)}+\log(2/\delta)/(c\,\meff)$
when both terms are admissible (the second arises from the linear
part of the Bernstein bound when applied to bounded
$X_j(u)\in\{0,1\}$ indicators). Quantile inversion plus the common
ingredients above yields \eqref{eq:learned-validity-bern}.

\paragraph{Why Corollary~\ref{cor:thresh-mix} alone is insufficient
for part~(b).} The covariance summability
$\sum_{k\ge 1}\Gamma_k<\infty$ controls only the second moment of
$\hat F_m(u)-\E\hat F_m(u)$; the corresponding Chebyshev tail decays
as $1/(\meffcov\,\varepsilon^{2})$ (giving a $(\delta\,\meffcov)^{-1/2}$
deviation at level $\delta$), not as $\exp(-c\,\meff\,\varepsilon^{2})$.
Sufficient conditions for Assumption~\ref{ass:bernstein} include
geometric $\beta$-mixing of the augmented filter+data state
\citep{douc2018markov}, a Markov-coupling spectral gap, or
exponential $\phi$-mixing; these are not implied by covariance
summability alone.
\hfill\qedhere

\subsection{Proof of Theorem~\ref{thm:logvol}}
\label{app:proofs-logvol}

\paragraph{Step 1: volume formula.}
For $\theta$ in the local class, the ellipsoid $\Cset_{\theta,t}(s)=\{y:\rscore_{\theta,t}(y)\le s\}$
is the affine image of the Euclidean ball of radius $s$ under
$\hat\Sigma_{\theta,t}^{1/2}$ centred at $\mu_{\theta,t}$, so
$\mathrm{Vol}(\Cset_{\theta,t}(s))=\kappa_N s^{N}\det(\hat\Sigma_{\theta,t})^{1/2}$
and \eqref{eq:logvol} holds.

\paragraph{Step 2: oracle minimisation among $\Fcond$.}
Under Gaussian oracle realisability, the conditional law $P_t^\star=\mathcal N(\mu_t^\star,\Sigma_t^\star)$
has Gaussian density $f_t$, and the Bayes-projection ellipsoid
$\Cset_{\star,t}=\{y:f_t(y)\ge c_t\}$ at the threshold making
$P_t^\star(\Cset_{\star,t})=1-\alpha$ has Lebesgue volume \emph{minimal}
among Borel sets of conditional coverage $\ge 1-\alpha$, by a
likelihood-ratio level-set argument: for any other set $A_t$ with
$P_t^\star(A_t)\ge 1-\alpha$,
$0\le\int_{A_t\setminus\Cset_{\star,t}}f_t-\int_{\Cset_{\star,t}\setminus A_t}f_t\le c_t[\mathrm{Leb}(A_t)-\mathrm{Leb}(\Cset_{\star,t})]$.
Hence $\inf_{\theta\in\Fcond}\Vfun(\theta)=\Vfun(\theta_\star)$.

\paragraph{Step 3: NLL excess controls the determinant term.}
For Gaussian laws on $\mathcal G_\lambda$, the closed-form KL
$K_t=\mathrm{KL}(P_t^\star\|Q_{\hat\theta,t})$ contains the term
$\tfrac12\sum_i\phi(a_{t,i})$ with $\phi(x)=x-\log x-1$ and
$a_{t,i}$ the eigenvalues of
$\hat\Sigma_{\hat\theta,t}^{-1/2}\Sigma_t^\star\hat\Sigma_{\hat\theta,t}^{-1/2}$.
On the compact eigenvalue interval $[\lambda_-/\lambda_+,\lambda_+/\lambda_-]$,
$(\log x)^{2}\le C_\lambda\phi(x)$, so by Cauchy-Schwarz
$|\log\det\hat\Sigma_{\hat\theta,t}-\log\det\Sigma_t^\star|=|\sum_i-\log a_{t,i}|\le\sqrt{N\sum_i(\log a_{t,i})^{2}}\le\sqrt{2NC_\lambda K_t}$.
Taking expectation and Jensen,
$\tfrac{1}{2N}\E_t|\log\det\hat\Sigma_{\hat\theta,t}-\log\det\Sigma_t^\star|\le C_{\det}\sqrt{\eNLL}$.

\paragraph{Step 4: NLL excess controls the population radius.}
Pinsker gives $\mathrm{TV}(P_t^\star,Q_{\hat\theta,t})\le\sqrt{K_t/2}$,
so $\sup_u|G_{\hat\theta}(u)-G_{\chi_N}(u)|\le\sqrt{\eNLL/2}=:\tau_n$ in
expectation, where $G_{\chi_N}$ is the $\chi_N$ CDF and
$G_{\hat\theta}$ the population score CDF under $P_t^\star$. Local
density $\dens\ge\underline g_0$ at $s_\star=G_{\chi_N}^{-1}(1-\alpha)$
yields $|s_{\hat\theta}-s_\star|\le 2\tau_n/\underline g_0\le(\sqrt{2}/\underline g_0)\sqrt{\eNLL}$
and $|\log s_{\hat\theta}-\log s_\star|\le(2\sqrt 2/(s_\star\underline g_0))\sqrt{\eNLL}$.

\paragraph{Step 5: empirical calibration radius adds $a_m$.}
On the high-probability event of Theorem~\ref{thm:learned-validity},
$|\hat s_m-s_{\hat\theta}|\le a_m=C_b\bar\rho^{t_0}+C_q\sqrt{\log(1/\delta)/\meff}+O(m^{-1})$,
hence $\log\hat s_m\le\log s_\star+(2\sqrt 2/(s_\star\underline g_0))\sqrt{\eNLL}+(2/s_\star)a_m$.

\paragraph{Step 6: combine.}
Using \eqref{eq:logvol-bound} and Steps~3 and~5,
$\Vhat_m(\hat\theta)\le\Vfun(\theta_\star)+C_1\sqrt{\eNLL}+C_2 a_m$
with $C_1=C_{\det}+2\sqrt 2/(s_\star\underline g_0)$ and $C_2=2/s_\star$.
Step~2 identifies $\Vfun(\theta_\star)=\inf_{\theta\in\Fcond}\Vfun(\theta)$,
giving \eqref{eq:logvol-bound}.
\hfill\qedhere

\section{Full proofs and derivations}
\label{app:proofs}

This section collects the auxiliary statements used by §\ref{sec:theory}
and the linear-Kalman specialisation.

\subsection{Why the Mahalanobis score}
\label{app:why-mahalanobis}

The score choice in~\eqref{eq:score} is non-trivial: changing the
score changes the geometry of the emitted region. We collect here the
three properties that single out the squared Mahalanobis score among
alternatives on correlated $Y$ --- briefly stated in
§\ref{sec:prelim} and elaborated here for completeness.

\paragraph{Affine invariance.}
Let $\psi: Y\mapsto AY+b$ be any invertible affine reparameterisation,
and write $(\hat Y^{\psi}, \hat\Sigma^{\psi})=(A\hat Y+b, A\hat\Sigma A^{\top})$
for the transformed predictive moments. Then
$(\psi(Y)-\hat Y^{\psi})^{\top}(\hat\Sigma^{\psi})^{-1}(\psi(Y)-\hat Y^{\psi})
=(Y-\hat Y)^{\top}\hat\Sigma^{-1}(Y-\hat Y)$, so the score is
$\psi$-invariant and the emitted region transforms as $\psi(\Cset_\alpha)$.
Competing scores $\|r\|_\infty$, $\|r\|_2$, and the per-coordinate
studentised $\max_i|r_i|/\sigma_i$ are \emph{not} affine-invariant;
each picks a preferred basis and pays for it on correlated $Y$.

\paragraph{Minimum-volume property at Gaussian targets.}
Among all Borel sets $C\subseteq\R^N$ of Gaussian-measure $1-\alpha$
under $\Normal(\hat Y,\hat\Sigma)$, the Lebesgue-minimum-volume set is
the Mahalanobis ellipsoid
$\{y:(y-\hat Y)^{\top}\hat\Sigma^{-1}(y-\hat Y)\le \chi^2_{N,1-\alpha}\}$
\citep[Ch.~1]{anderson1979optimal}. Conformalising the squared
Mahalanobis score replaces the $\chi^2$ threshold by the empirical
quantile $\hat q_{1-\alpha}$, so the emitted region is always a
Mahalanobis ellipsoid; validity is finite-sample (not Gaussian) but
the shape is optimal whenever the predictive law is approximately
Gaussian.

\paragraph{Reduction to a scalar conformal problem.}
Split-conformal operates on a scalar score. Alternative ellipsoidal
constructions must either calibrate a $\chi^2$-like statistic with
known tail (forfeiting finite-sample guarantees on dependent data) or
post-hoc inflate per-coordinate intervals to cover the joint event
(producing a Cartesian product that ignores cross-sensor structure).
The squared Mahalanobis score recovers the ellipsoid geometry while
staying in the scalar-calibration regime Theorem~\ref{thm:A} needs.

\paragraph{Choice of squared vs.\ square-root form.}
The squared score $\score_t$ and its root $\rscore_t=\sqrt{\score_t}$
produce the same prediction region: split-conformal is invariant under
monotone transforms (the order statistic commutes with monotone maps),
and $\hat q_{1-\alpha}=\hat s_{m,t_0}^2$. The implementation uses the
squared form because $h\mapsto\score(h,\xi)$ is smooth wherever
$(\mu,\Sigma)$ are, avoiding the $1/\sqrt{\score}$ blow-up at
$\score=0$; the analysis uses the root form because $\score$ is not
globally Lipschitz in $h$ on an unbounded state space (its gradient
grows linearly with the residual), whereas $\rscore$ admits the
finite-constant root-score stability (C2$'$) of
Lemma~\ref{lem:Lroot}. Under Proposition~\ref{prop:oracle} the squared
scores are $\chi^2_N$-distributed (root scores are $\chi_N$); the two
forms calibrate the same ellipsoid.

\subsection{Proof of Theorem~\ref{thm:A} (split-conformal validity)}

By exchangeability, for any permutation $\pi$ of $\{1,\dots,n+1\}$ the joint
law of $(\score_{\pi(1)},\dots,\score_{\pi(n+1)})$ equals that of
$(\score_1,\dots,\score_{n+1})$. Hence the rank of $\score_{n+1}$ in the pooled
sample is uniform on $\{1,\dots,n+1\}$. Setting
$k:=\lceil(n+1)(1-\alpha)\rceil$ and $\hat q_{1-\alpha}:=\score_{(k)}$,
$\Prob\{\score_{n+1}\le \hat q_{1-\alpha}\}=\Prob\{\mathrm{rank}(\score_{n+1})\le k\}=k/(n+1)\ge 1-\alpha$.
When the scores are a.s.\ distinct the inequality can be sharpened to the
two-sided bound $1-\alpha\le\Prob\{\cdot\}\le 1-\alpha+1/(n+1)$.
\hfill$\square$

\subsection{Warm-start dependent-quantile concentration}
\label{app:proofs-qhat-warm}

We isolate the warm-start dependent-quantile theorem used as a proof
ingredient in Theorem~\ref{thm:learned-validity}. The theorem
operates under the abstract standing hypotheses below; for the
learned filter, Theorem~\ref{thm:obs-contract} implies
Assumption~\ref{ass:G}.

\begin{assumption}[Standing hypotheses for root scores]\label{ass:C}
\emph{(C1)~Mean contraction.} $\E\|\Psi(h,\xi)-\Psi(h',\xi)\|\le\rho\|h-h'\|$
  for some $\rho\in(0,1)$.
\emph{(C2$'$)~Root-score stability.}
  $\E_\xi|\rscore(h,\xi)-\rscore(h',\xi)|\le\Lscore\|h-h'\|$
  (Lemma~\ref{lem:Lroot}).
\emph{(C3)~First moments.} $\dzero,\Dpi<\infty$.
\emph{(C4)~Local density.}
  $0<\underline g\le\dens(u)\le\bar g<\infty$ on
  $I=[\sstar-\varepsilon_0,\sstar+\varepsilon_0]$.
\end{assumption}

\begin{theorem}[Local CDF forgetting, root score]\label{thm:C}
Under Assumption~\ref{ass:C}, for every deterministic
$\eta\in(0,\varepsilon_0]$, every initial state $h$, every $t\ge 1$,
and every $u\in I_\eta=[\sstar-(\varepsilon_0-\eta),\sstar+(\varepsilon_0-\eta)]$,
\begin{equation}\label{eq:thmC-pointwise}
  |F_t(u;h)-\Gst(u)|
  \le\frac{\Lscore\rho^{t-1}d(h)}{\eta}+2\bar g\eta,
  \quad d(h):=\E_{H_0^\pi\sim\pi}\|h-H_0^\pi\|.
\end{equation}
For random $\state_0$, optimising in $\eta$ after expectation gives
\begin{equation}\label{eq:thmC-local}
  b_t^{\mathrm{loc}}=\sup_{u\in I_{\eta_\star}}|\Prob(\rscore_t\le u)-\Gst(u)|\le K_{\mathrm{loc}}\rho^{(t-1)/2}
\end{equation}
with $K_{\mathrm{loc}}=2\sqrt{2\bar g\Lscore\dzero}$ and
$\eta_\star=\sqrt{\Lscore\dzero\rho^{t-1}/(2\bar g)}$. The
calibration-buffer convention is $\eta_1$ at index $t_0$,
$I_{\mathrm{buf}}=I_{\eta_1}$ (App.~\ref{app:proofs-C}).
\end{theorem}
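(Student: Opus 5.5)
The plan is to run a coupling argument in the state variable, converting the mean contraction (C1) into a statement about root-score CDFs through the smoothing window $\eta$.

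\textbf{Coupling.} Let $\state_{t-1}^{h}$ denote the filter state started at $h$ after $t-1$ steps. Let $\state_{t-1}^{\pi}$ be the state started from an independent draw $H_0^\pi\sim\pi$, driven by the \emph{same} innovation sequence $\xi_{1:t-1}$. By stationarity of $\pi$, $\state_{t-1}^{\pi}\sim\pi$, so the root score $\rscore(\state_{t-1}^{\pi},\xi_t)$ has CDF $\Gst$. Iterating (C1) by the tower property gives
\[
\E\|\state_{t-1}^{h}-\state_{t-1}^{\pi}\|\le\rho^{t-1}\E\|h-H_0^\pi\|=\rho^{t-1}d(h).
\]
Conditioning on both states and applying (C2$'$) with a fresh $\xi_t$, independent of the past, then yields
\[
\E|\rscore_t^{h}-\rscore_t^{\pi}|\le\Lscore\rho^{t-1}d(h).
\]

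\textbf{Sandwich.} Fix $u\in I_\eta$. Then
\[
\{\rscore_t^{h}\le u\}\subseteq\{\rscore_t^{\pi}\le u+\eta\}\cup\{|\rscore_t^{h}-\rscore_t^{\pi}|>\eta\},
\]
and symmetrically with $u-\eta$. Markov's inequality gives
\[
F_t(u;h)\le\Gst(u+\eta)+\Lscore\rho^{t-1}d(h)/\eta,
\qquad
F_t(u;h)\ge\Gst(u-\eta)-\Lscore\rho^{t-1}d(h)/\eta.
\]
Since $u\pm\eta\in I$ by the definition of $I_\eta$, (C4) gives $|\Gst(u\pm\eta)-\Gst(u)|\le\bar g\eta$. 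This already yields \eqref{eq:thmC-pointwise}, and more tightly with $\bar g\eta$. The stated $2\bar g\eta$ is a safe slack.

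\textbf{Random initial state and optimisation.} For random $\state_0$, condition on $\state_0$ and take expectations. Since $\eta$ is deterministic, we obtain
\[
|\Prob(\rscore_t\le u)-\Gst(u)|\le\Lscore\rho^{t-1}\dzero/\eta+2\bar g\eta,
\]
with $\dzero=\E\,d(\state_0)$, which is finite by (C3). Minimising $A/\eta+B\eta$ over $\eta$ gives $\eta_\star=\sqrt{A/B}=\sqrt{\Lscore\dzero\rho^{t-1}/(2\bar g)}$ and minimal value $2\sqrt{AB}=2\sqrt{2\bar g\Lscore\dzero}\,\rho^{(t-1)/2}$. This matches $K_{\mathrm{loc}}$, and taking the supremum over $u\in I_{\eta_\star}$ gives \eqref{eq:thmC-local}.

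\textbf{Main obstacle.} The delicate point is admissibility: $\eta_\star$ must lie in $(0,\varepsilon_0]$ so that $I_{\eta_\star}$ is nonempty and $u\pm\eta_\star\in I$. Because $\eta_\star$ decreases in $t$, I would impose a lower bound on $t$ (for $t\ge t_0$, $\eta_\star\le\eta_1$). For smaller $t$ the bound is trivial anyway, since the right-hand side exceeds $1$ once $\eta$ is capped at $\varepsilon_0$. This is exactly the calibration-buffer convention: use the fixed window $I_{\mathrm{buf}}=I_{\eta_1}$, on which the supremum is taken uniformly for all later indices, since $I_{\eta_1}\subseteq I_{\eta_\star}$. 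A secondary check is that the coupled innovations are legitimate, i.e.\ that $\xi_t$ is independent of $(\state_{t-1}^{h},\state_{t-1}^{\pi})$. This holds because both chains are driven by the same $\xi_{1:t-1}$ only.
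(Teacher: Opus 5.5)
Your proposal is correct and follows the paper's proof: the same synchronous coupling with a stationary copy; (C1) iterated and (C2$'$) applied with the fresh $\xi_t$ to get $\E|\rscore_t^h-\rscore_t^\pi|\le\Lscore\rho^{t-1}d(h)$; Markov at a deterministic $\eta$ plus the density bound on $[u-\eta,u+\eta]\subseteq I$; and optimisation of $\eta$ after taking expectation over $\state_0$. Your one-sided sandwich even yields $\bar g\eta$ where the paper's bound $\Prob(|\rscore_t-\rscore_t^\pi|>\eta)+\Prob(|\rscore_t^\pi-u|\le\eta)$ yields $2\bar g\eta$.

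One correction to your closing remark: when $\eta_\star>\varepsilon_0$, the bound capped at $\eta=\varepsilon_0$ only exceeds $4\bar g\varepsilon_0$, not necessarily $1$. No restriction on $t$ is needed there anyway, because $I_{\eta_\star}$ is then empty and \eqref{eq:thmC-local} holds vacuously.
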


\begin{assumption}[Local threshold-autocovariance envelope]\label{ass:G}
There is a summable sequence $(\Gamma_k)_{k\ge 1}$ with
$\sup_{u\in I_{\mathrm{buf}}}\sup_i|\Cov(I_i(u),I_{i+k}(u))|\le\Gamma_k$
where $I_j(u)=\mathbf 1\{\rscore_{t_0+j-1}\le u\}$.
\end{assumption}

\begin{theorem}[Warm-start dependent-quantile concentration]\label{thm:qhat-warm}
Under Assumptions~\ref{ass:C} and~\ref{ass:G}, for every
$0<\varepsilon\le\varepsilon_0-\eta_1$ with
$\underline g\varepsilon>\bar b_{m,t_0}$,
\[
  \Prob(|\hat s_{m,t_0}-\sstar|>\varepsilon)
  \le \frac{2}{m(\underline g\varepsilon-\bar b_{m,t_0})^{2}}
  \Bigl[\tfrac14+2\sum_{k=1}^{m-1}(1-k/m)\Gamma_k\Bigr]=:\delta_{m,t_0}(\varepsilon),
\]
with $\bar b_{m,t_0}\le K_{\mathrm{loc}}\rho^{(t_0-1)/2}/(m(1-\sqrt\rho))$.
\end{theorem}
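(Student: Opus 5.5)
The plan is to run the standard quantile-inversion argument, with a Chebyshev bound on a dependent empirical CDF in place of the DKW bound. Write $\hat F_m(u)=\frac1m\sum_{j=1}^m I_j(u)$ with $I_j(u)=\mathbf 1\{\rscore_{t_0+j-1}\le u\}$, and let $\bar b_{m,t_0}:=\sup_{u\in I_{\mathrm{buf}}}|\E\hat F_m(u)-\Gst(u)|$.

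\emph{Step 1: the bias term.} First bound $\bar b_{m,t_0}$. By Jensen, $\bar b_{m,t_0}\le\frac1m\sum_{j=1}^m b^{\mathrm{loc}}_{t_0+j-1}$. Theorem~\ref{thm:C} gives each summand at most $K_{\mathrm{loc}}\rho^{(t_0+j-2)/2}$, using the buffered window $I_{\mathrm{buf}}=I_{\eta_1}$ so that a single interval serves every index $t\ge t_0$. Summing the geometric series with ratio $\sqrt\rho$ gives $\bar b_{m,t_0}\le K_{\mathrm{loc}}\rho^{(t_0-1)/2}/(m(1-\sqrt\rho))$, which is the stated bound.

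\emph{Step 2: quantile inversion.} Fix $\varepsilon$ with $\sstar\pm\varepsilon\in I_{\mathrm{buf}}$. This is where $\varepsilon\le\varepsilon_0-\eta_1$ is needed. By (C4), $\Gst(\sstar+\varepsilon)\ge p+\underline g\varepsilon$ and $\Gst(\sstar-\varepsilon)\le p-\underline g\varepsilon$.
\begin{itemize}
\item If $\hat s_{m,t_0}>\sstar+\varepsilon$, then $\hat F_m(\sstar+\varepsilon)<k/m$. With the rank convention $k/m\approx p$, this forces $\hat F_m(\sstar+\varepsilon)-\E\hat F_m(\sstar+\varepsilon)\le-(\underline g\varepsilon-\bar b_{m,t_0})$.
\item If $\hat s_{m,t_0}<\sstar-\varepsilon$, the symmetric argument gives $\hat F_m(\sstar-\varepsilon)-\E\hat F_m(\sstar-\varepsilon)\ge\underline g\varepsilon-\bar b_{m,t_0}$.
\end{itemize}
The conformal rank offset $O(1/m)$ needs care here. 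I would either absorb it by defining $p$ through the rank, or note that the statement implicitly folds it into the bias. I expect this bookkeeping, not the probabilistic core, to be the fiddliest point. My first attempt is to choose the convention $\hat s_{m,t_0}=S_{(\lceil mp\rceil)}$ so that the event inclusions are exact.

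\emph{Step 3: variance of $\hat F_m$.} Next bound
\[
\Var\hat F_m(u)=\frac1{m^2}\Big[\sum_j\Var I_j+2\sum_{k=1}^{m-1}\sum_{j=1}^{m-k}\Cov(I_j,I_{j+k})\Big].
\]
Use $\Var I_j\le\tfrac14$ for Bernoulli indicators, and bound each covariance by $\Gamma_k$ via Assumption~\ref{ass:G}, uniformly over $u\in I_{\mathrm{buf}}$. There are $m-k$ pairs at lag $k$, so
\[
\Var\hat F_m(u)\le\frac1m\Big[\tfrac14+2\sum_{k=1}^{m-1}(1-k/m)\Gamma_k\Big].
\]

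\emph{Step 4: Chebyshev and union.} Apply Chebyshev to each of the two one-sided events at deviation $\underline g\varepsilon-\bar b_{m,t_0}>0$, and take a union over the two signs. This produces the factor $2$ and gives $\delta_{m,t_0}(\varepsilon)$ exactly.

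Since only second moments and pointwise thresholds at $\sstar\pm\varepsilon$ are used, no uniform-in-$u$ concentration is needed. The main obstacle is therefore Step 1's uniformity: Theorem~\ref{thm:C}'s optimised $\eta_\star$ depends on $t$. I would fix $\eta=\eta_1$ at index $t_0$ and use the pointwise bound \eqref{eq:thmC-pointwise} to show the error stays within the stated envelope for all later $t$.
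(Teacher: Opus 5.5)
\textbf{Overall.} Your Steps~2--4 coincide with the paper's proof. You bound the variance using $m-k$ pairs at lag $k$, apply Chebyshev at $\sstar\pm\varepsilon$ with deviation $\underline g\varepsilon-\bar b_{m,t_0}$, and take a union over the two tails. Taking $\hat s_{m,t_0}=\rscore_{(\lceil m(1-\alpha)\rceil)}$ is a sound way to make the event inclusions exact. The paper simply asserts $\hat F_{m,t_0}(\sstar+\varepsilon)<1-\alpha$ and handles the conformal $\lceil(m+1)(1-\alpha)\rceil$ offset separately in Proposition~\ref{prop:approx}.

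\textbf{The gap is in Step~1.} Your proposed resolution of what you call the main obstacle would fail. Suppose you freeze the bandwidth at $\eta=\eta_1$ (with $\eta_1=\eta_\star(t_0)$) for every index $t\ge t_0$. Then the pointwise bound \eqref{eq:thmC-pointwise} gives $\Lscore\dzero\rho^{t-1}/\eta_1+2\bar g\eta_1$. The second term equals $2\bar g\eta_1=\tfrac12K_{\mathrm{loc}}\rho^{(t_0-1)/2}$ and does not decay in $t$. Averaging over $j=1,\dots,m$ then gives only $\bar b_{m,t_0}\lesssim K_{\mathrm{loc}}\rho^{(t_0-1)/2}$, with no factor $1/(m(1-\sqrt\rho))$. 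The geometric decay in $j$ is exactly what produces the $1/m$ in the stated bias bound, and it is lost. So the fixed-$\eta$ route proves a strictly weaker statement than the theorem claims.

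\textbf{The fix.} There is no uniformity obstacle at all. The optimised bandwidth $\eta_\star(t)=\sqrt{\Lscore\dzero\rho^{t-1}/(2\bar g)}$ is decreasing in $t$. The interval $I_\eta=[\sstar-(\varepsilon_0-\eta),\sstar+(\varepsilon_0-\eta)]$ grows as $\eta$ shrinks. Hence for every $t\ge t_0$ you have $\eta_\star(t)\le\eta_1$, and therefore $I_{\eta_\star(t)}\supseteq I_{\eta_1}=I_{\mathrm{buf}}$. Apply Theorem~\ref{thm:C} at each $t$ with its own deterministic bandwidth $\eta_\star(t)$. The optimised bound $K_{\mathrm{loc}}\rho^{(t-1)/2}$ then holds on the common interval $I_{\mathrm{buf}}$. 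This gives $|F_{t_0+j-1}(u)-\Gst(u)|\le K_{\mathrm{loc}}\rho^{(t_0+j-2)/2}$ for all $u\in I_{\mathrm{buf}}$ and all $j$, which is exactly what the paper uses. After that, your geometric summation yields the stated $\bar b_{m,t_0}$. A minor point: the averaging step in Step~1 is the triangle inequality, not Jensen.
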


\noindent\emph{Proof.}
Let $I_j(u):=\mathbf 1\{\rscore_{t_0+j-1}\le u\}$ and
$\hat F_{m,t_0}(u)=m^{-1}\sum_{j=1}^m I_j(u)$. We establish, for
$u\in I_{\mathrm{buf}}$, (a)~a bias bound
$\bar b_{m,t_0}\le K_{\mathrm{loc}}\rho^{(t_0-1)/2}/(m(1-\sqrt\rho))$,
(b)~a variance bound
$\Var(\hat F_{m,t_0}(u))\le\bigl(\tfrac14+2\sum_{k\ge 1}\Gamma_k\bigr)/m$,
and (c)~combine via Chebyshev to conclude.

\paragraph{(a) Bias.}
For each $j$, $\E I_j(u)=\Prob(\rscore_{t_0+j-1}\le u)=F_{t_0+j-1}(u)$
(the marginal CDF at time $t_0+j-1$ from $\state_0$). By
Theorem~\ref{thm:C} on the deterministic buffer $I_{\eta_1}$,
$|F_{t_0+j-1}(u)-\Gst(u)|\le K_{\mathrm{loc}}\rho^{(t_0+j-2)/2}$ for
$u\in I_{\mathrm{buf}}$. Summing the geometric tail,
\begin{equation*}
  \sup_{u\in I_{\mathrm{buf}}}\bigl|\E\hat F_{m,t_0}(u)-\Gst(u)\bigr|
  \;\le\; \frac{K_{\mathrm{loc}}}{m}\sum_{j=1}^{m}\rho^{(t_0+j-2)/2}
  \;\le\; \frac{K_{\mathrm{loc}}\,\rho^{(t_0-1)/2}}{m(1-\sqrt\rho)}
  \;=\; \bar b_{m,t_0}.
\end{equation*}

\paragraph{(b) Variance.}
Decompose
$\Var(\hat F_{m,t_0}(u))=m^{-2}\bigl[\sum_i\Var(I_i(u))+2\sum_{i<j}\Cov(I_i(u),I_j(u))\bigr]$.
The diagonal sum is at most $m/4$ since $I_i(u)\in\{0,1\}$. For the
off-diagonal terms, Assumption~\ref{ass:G} gives
$|\Cov(I_i(u),I_{i+k}(u))|\le\Gamma_k$ uniformly in $i$ for
$u\in I_{\mathrm{buf}}$, so
\begin{equation*}
  \sum_{1\le i<j\le m}|\Cov(I_i(u),I_j(u))|
  \;\le\; m\sum_{k=1}^{m-1}(1-k/m)\Gamma_k
  \;\le\; m\sum_{k\ge 1}\Gamma_k.
\end{equation*}
Dividing by $m^2$,
\begin{equation}\label{eq:cheb-variance}
  \Var(\hat F_{m,t_0}(u))
  \;\le\; \frac{1}{m}\Bigl[\tfrac14+2\!\sum_{k=1}^{m-1}(1-k/m)\Gamma_k\Bigr],
  \qquad u\in I_{\mathrm{buf}}.
\end{equation}
The Markov-coupling derivation of an explicit
$\Gamma_k\le C_\Gamma\kappa^{(k-1)/2}$ envelope from filter contraction
is App.~\ref{app:proofs-A4-neural}; the audit of
App.~\ref{app:audits} (Audit~3) verifies the empirical envelope.

\paragraph{(c) Chebyshev at the perturbed thresholds.}
Fix $\varepsilon\in(0,\varepsilon_0-\eta_1]$ with
$\underline g\varepsilon>\bar b_{m,t_0}$. Part~(a) gives, at
$u=\sstar+\varepsilon\in I_{\mathrm{buf}}$,
$\E\hat F_{m,t_0}(\sstar+\varepsilon)\ge\Gst(\sstar+\varepsilon)-\bar b_{m,t_0}\ge 1-\alpha+\underline g\varepsilon-\bar b_{m,t_0}$,
and symmetrically at $u=\sstar-\varepsilon$. If
$\hat s_{m,t_0}>\sstar+\varepsilon$, then $\hat F_{m,t_0}(\sstar+\varepsilon)<1-\alpha$
so $|\hat F_{m,t_0}(\sstar+\varepsilon)-\E\hat F_{m,t_0}(\sstar+\varepsilon)|\ge\underline g\varepsilon-\bar b_{m,t_0}$.
Chebyshev with~\eqref{eq:cheb-variance} at $\underline g\varepsilon-\bar b_{m,t_0}$
and summation of the two tails gives the stated bound on
$\Prob(|\hat s_{m,t_0}-\sstar|>\varepsilon)$.
\hfill$\square$

\subsection{Approximate marginal validity}
\label{app:proofs-approx}

The combined warm-start + local-CDF-forgetting argument gives an
auxiliary approximate-validity statement, used as a sub-step of
Theorem~\ref{thm:learned-validity}.

\begin{proposition}[Approximate marginal validity]\label{prop:approx}
Under Theorems~\ref{thm:C} and~\ref{thm:qhat-warm}, the prediction
set $\Cset_t$ satisfies, for every admissible $\varepsilon$,
\[
  |\Prob\{Y_t\in\Cset_t\}-(1-\alpha)|
  \le \bar g\varepsilon+b_t^{\mathrm{loc}}+\delta_{m,t_0}(\varepsilon),
\]
and balancing $\bar g\varepsilon\asymp C/(m\varepsilon^{2})$ at
$\varepsilon\asymp m^{-1/3}$ gives the rate
$O(m^{-1/3}+\bar b_{m,t_0}+b_t^{\mathrm{loc}})$.
\end{proposition}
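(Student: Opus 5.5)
The argument conditions on the calibration block and compares the test-time conditional coverage $\Prob\{\rscore_t\le \hat s_{m,t_0}\mid \text{calib}\}$ with $\Gst(\hat s_{m,t_0})$, and then $\Gst(\hat s_{m,t_0})$ with $\Gst(\sstar)=1-\alpha$. Write
\[
\Prob\{Y_t\in\Cset_t\}-(1-\alpha)
=\E\bigl[F_t(\hat s_{m,t_0})-\Gst(\hat s_{m,t_0})\bigr]+\E\bigl[\Gst(\hat s_{m,t_0})-\Gst(\sstar)\bigr],
\]
where $F_t$ is the test-time score CDF given the calibration information. Since $Y_t\in\Cset_t$ iff $\rscore_t\le\hat s_{m,t_0}$ by the monotone equivalence of squared and root scores, this identity is exact up to that conditioning.

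For the first term, split on the event $E_\varepsilon=\{|\hat s_{m,t_0}-\sstar|\le\varepsilon\}$. On $E_\varepsilon$ the threshold lies in the buffer interval $I_{\eta}$, so Theorem~\ref{thm:C} applies and, with the choice of $\eta$ from \eqref{eq:thmC-local}, bounds $|F_t(u)-\Gst(u)|$ uniformly in $u$ by $b_t^{\mathrm{loc}}$. Off $E_\varepsilon$ the difference of two CDFs is at most $1$, and $\Prob(E_\varepsilon^c)\le\delta_{m,t_0}(\varepsilon)$ by Theorem~\ref{thm:qhat-warm}.

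For the second term, on $E_\varepsilon$ the local density bound (C4) gives $|\Gst(\hat s_{m,t_0})-\Gst(\sstar)|\le\bar g\,\varepsilon$. Off $E_\varepsilon$ the contribution is again at most $\delta_{m,t_0}(\varepsilon)$, which is absorbed into the same term up to a factor of $2$; that constant is either tolerated or avoided by bounding the two off-event contributions jointly, since both are differences of CDFs bounded by one on the same event. Summing gives $\bar g\varepsilon+b_t^{\mathrm{loc}}+\delta_{m,t_0}(\varepsilon)$.

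\textbf{Main obstacle.} The delicate step is conditioning, because the test score is not independent of the calibration scores. The intended handling treats $F_t$ as the CDF given the filter state at the end of calibration, so that Theorem~\ref{thm:C} is applied with random initial state $\state_n$. The resulting expected distance $d(\state_n)$ must be controlled through the moment condition (C3), which requires $\E\,d(\state_n)\le \dzero$-type bounds uniformly in $n$. If this uniform bound is not available, the fallback is to absorb the test-time dependence into $b_t^{\mathrm{loc}}$ exactly as stated, which leaves the bound as written.

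For the rate, $\delta_{m,t_0}(\varepsilon)\lesssim C/(m\varepsilon^2)$ once $\underline g\varepsilon\ge2\bar b_{m,t_0}$, with $C$ the summable-covariance constant from \eqref{eq:cheb-variance}. Otherwise the bias term $\bar b_{m,t_0}$ dominates and is carried additively. Minimising $\bar g\varepsilon+C/(m\varepsilon^2)$ gives $\varepsilon\asymp m^{-1/3}$ and the rate $O(m^{-1/3}+\bar b_{m,t_0}+b_t^{\mathrm{loc}})$.
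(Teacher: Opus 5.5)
Your decomposition $\Prob\{Y_t\in\Cset_t\}=\E[F_t(\hat s_{m,t_0})]$ is valid, but the bound on its first term has a genuine gap.

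The quantity $b_t^{\mathrm{loc}}$ of Theorem~\ref{thm:C} controls only the \emph{unconditional} gap $|\Prob(\rscore_t\le u)-\Gst(u)|$ at deterministic $u$. Its forgetting is counted over the whole time $t$ since $\state_0$. Your $F_t$ is instead the CDF of $\rscore_t$ given the calibration block, i.e.\ given $\state_n$. Restarting Theorem~\ref{thm:C} from $\state_n$ yields only $\Lscore\rho^{t-n-1}d(\state_n)/\eta+2\bar g\eta$, which decays in the lag $t-n$, not in $t$.

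At $t=n+1$ there is no forgetting at all. The conditional law of $\rscore_{n+1}$ given $\state_n$ is the one-step score law from a fixed state, and it is in general a constant distance from $\Gst$, however well (C3) controls $\E\,d(\state_n)$. So the claim ``$|F_t(u)-\Gst(u)|\le b_t^{\mathrm{loc}}$ uniformly in $u$'' is false for the conditional CDF.

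Your fallback of absorbing the dependence into $b_t^{\mathrm{loc}}$ is not available either. Because $\hat s_{m,t_0}$ is random and correlated with $\state_n$, you cannot move the unconditional bound inside $\E[F_t(\hat s_{m,t_0})-\Gst(\hat s_{m,t_0})]$.

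The missing idea is that no conditioning is needed: use monotonicity to replace the random threshold by deterministic ones. On $E_\varepsilon$ one has $\{\rscore_t\le\sstar-\varepsilon\}\subseteq\{\rscore_t\le\hat s_{m,t_0}\}\subseteq\{\rscore_t\le\sstar+\varepsilon\}$, hence
\[
  \Prob(\rscore_t\le\sstar-\varepsilon)-\Prob(E_\varepsilon^{c})
  \le\Prob(\rscore_t\le\hat s_{m,t_0})
  \le\Prob(\rscore_t\le\sstar+\varepsilon)+\Prob(E_\varepsilon^{c}).
\]
Each ingredient is then covered by a stated result:
\begin{enumerate}
\item The outer terms are unconditional marginals at deterministic points of $I_{\mathrm{buf}}$. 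There Theorem~\ref{thm:C} applies verbatim, placing them within $b_t^{\mathrm{loc}}$ of $\Gst(\sstar\pm\varepsilon)$.
\item Theorem~\ref{thm:qhat-warm} gives $\Prob(E_\varepsilon^{c})\le\delta_{m,t_0}(\varepsilon)$.
\item (C4) turns $\Gst(\sstar\pm\varepsilon)$ into $1-\alpha\pm\bar g\varepsilon$.
\end{enumerate}
The dependence between the test score and the calibration quantile never has to be handled. This is the paper's proof.

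Your use of (C4), the off-event bookkeeping, and the $\varepsilon\asymp m^{-1/3}$ balancing then go through unchanged, keeping $\underline g\varepsilon\ge 2\bar b_{m,t_0}$ so that $\delta_{m,t_0}(\varepsilon)\lesssim C/(m\varepsilon^{2})$. The paper additionally charges the $\lceil(m+1)(1-\alpha)\rceil$ order-statistic offset as an $O(m^{-1})$ shift of $\sstar$, which your proposal omits.
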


\begin{proof}
Let $E_\varepsilon:=\{|\hat s_{m,t_0}-\sstar|\le\varepsilon\}$. On
$E_\varepsilon$, for every $t$,
\begin{equation*}
  \{\rscore_t\le \sstar-\varepsilon\}\subseteq\{\rscore_t\le\hat s_{m,t_0}\}\subseteq\{\rscore_t\le \sstar+\varepsilon\}.
\end{equation*}
Using $\sstar\pm\varepsilon\in I_{\mathrm{buf}}$ and the local test-time
gap $|\Prob(\rscore_t\le u)-\Gst(u)|\le b_t^{\mathrm{loc}}$ for
$u\in I_{\mathrm{buf}}$,
\begin{equation*}
  \Gst(\sstar-\varepsilon)-b_t^{\mathrm{loc}}-\delta_{m,t_0}(\varepsilon)
  \;\le\;\Prob(\rscore_t\le\hat s_{m,t_0})
  \;\le\; \Gst(\sstar+\varepsilon)+b_t^{\mathrm{loc}}+\delta_{m,t_0}(\varepsilon).
\end{equation*}
By (C4), $\Gst(\sstar\pm\varepsilon)=1-\alpha\pm O(\bar g\,\varepsilon)$,
so $|\Prob(\rscore_t\le\hat s_{m,t_0})-(1-\alpha)|\le\bar g\varepsilon+b_t^{\mathrm{loc}}+\delta_{m,t_0}(\varepsilon)$;
because $\rscore_t(y)\le\hat s_{m,t_0}\Leftrightarrow\score_t(y)\le\hat s_{m,t_0}^2=\hat q_{1-\alpha}$,
this is also the coverage gap of $\Cset_t$.
\emph{Order-statistic offset.} Algorithm~\ref{alg:fscp} reports the
$\lceil(m+1)(1-\alpha)\rceil$-th order statistic, equivalently the
empirical $(1-\alpha+1/(m+1))$-quantile. By (C4) the stationary
quantile function is locally Lipschitz with constant $1/\underline g$
on $I_{\mathrm{buf}}$, so this offset shifts $\sstar$ by at most
$1/[\underline g(m+1)]=O(m^{-1})$, which is dominated by
$\bar g\varepsilon$ once $\varepsilon=\Omega(m^{-1})$. Balancing
$\bar g\varepsilon=C/(m\varepsilon^2)$ gives $\varepsilon\asymp m^{-1/3}$
and rate $O(m^{-1/3}+\bar b_{m,t_0}+b_t^{\mathrm{loc}})$.
\end{proof}

\subsection{Oracle stationary corollary of Theorem~\ref{thm:qhat-warm}}
\label{app:proofs-qhat}

Stating Theorem~\ref{thm:qhat-warm} in the stationary-initialisation
limit ($\state_0\sim\pi$, so $\bar b_{m,t_0}=0$ for every $t_0\ge 1$)
recovers the cleaner oracle form:
\begin{equation}\label{eq:qhat-bound}
  \Prob(|\hat s_m-\sstar|>\varepsilon)
  \;\le\;
  \frac{2}{m\,\underline g^{\,2}\varepsilon^{2}}\Bigl[\tfrac14+2\!\sum_{k\ge 1}\Gamma_k\Bigr]
  \;=:\;\delta_m(\varepsilon),
\end{equation}
which is the rate referenced in Proposition~\ref{prop:approx}'s
simplification under stationary initialisation.

\subsection{Sufficient condition for Assumption~\ref{ass:G}: Jacobian contraction}
\label{app:proofs-A4-neural}

The warm-start theorem assumes the threshold-autocovariance envelope
$\Gamma_k$ directly. We give an explicit sufficient condition under
which $\Gamma_k$ inherits a geometric envelope from synchronous
Jacobian-product contraction of the frozen filter; this is the formal
bridge between the contraction hypothesis (C1) and the local-mixing
hypothesis (Assumption~\ref{ass:G}).

\paragraph{Setting.}
Let $(\state_t)\subset\R^d$ be the hidden state of the frozen filter,
$\state_t=\Psi(\state_{t-1},\xi_t)$ with i.i.d.\ driving noise
$(\xi_t)$, and let $\rscore_t=\rscore(\state_{t-1},\xi_t)$ be the root
Mahalanobis score. Write $\state^h_\ell$ for the chain started from
$h$ and driven by the same future innovations as a stationary copy
$\state^{\pi}_\ell$ with $\state_0^\pi\sim\pi$. Let $\Gst$ denote the
stationary CDF of $\rscore$ and $I=[\sstar-\varepsilon_0,\sstar+\varepsilon_0]$
the local interval.

\begin{theorem}[Jacobian contraction implies summable $\Gamma_k$]
\label{thm:A4-mixing}
Suppose, in addition to (C2$'$)--(C4):
\emph{(J1)~Synchronous contraction.} There exist $\Cj<\infty$,
$\kappa\in(0,1)$ such that
$\E\|\state^h_\ell-\state^{h'}_\ell\|\le\Cj\kappa^\ell\|h-h'\|$ for all
$h,h'$ and $\ell\ge 0$. \emph{(J2)~Finite first moments.}
$\dzero,\Dpi<\infty$ as in (C3). Then for every $u\in I$, every
$i\ge 0$, and every $k\ge 1$,
\begin{equation}\label{eq:A4-Gamma}
  |\Cov(\mathbf 1\{\rscore_i\le u\},\mathbf 1\{\rscore_{i+k}\le u\})|
  \;\le\; \Gamma_k
  \;:=\;
  2\sqrt{2\bar g\,\Lscore\,\Cj}\,\bigl(\sqrt{\Dpi}+\sqrt{\Cj\dzero}\bigr)\,\kappa^{(k-1)/2},
\end{equation}
and in particular $\sum_{k\ge 1}\Gamma_k<\infty$.
\end{theorem}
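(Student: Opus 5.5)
The plan is to reduce the covariance to a conditional-CDF forgetting statement and then reuse the smoothing argument of Theorem~\ref{thm:C}. By the Markov structure, $\rscore_{i+k}=\rscore(\state_{i+k-1},\xi_{i+k})$ depends on the past only through $\state_{i+k-1}$, and this state is obtained from $\state_i$ by running $k-1$ further steps driven by fresh innovations $\xi_{i+1},\dots,\xi_{i+k-1}$. These innovations are independent of $\mathcal F_i:=\sigma(\state_0,\xi_1,\dots,\xi_i)$, and $\mathbf 1\{\rscore_i\le u\}$ is $\mathcal F_i$-measurable. Write $A:=\mathbf 1\{\rscore_i\le u\}$, $B:=\mathbf 1\{\rscore_{i+k}\le u\}$ and $\phi(h):=\Prob(\rscore_{k}\le u\mid \state_0=h)$, that is, the $(k)$-step-ahead conditional CDF started at $h$ (the paper's $F_k(u;h)$). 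Then
\[
\Cov(A,B)=\E\bigl[(A-\E A)\,(\phi(\state_i)-\Gst(u))\bigr],
\]
using the tower property and $\E[A-\E A]=0$ to insert the constant $\Gst(u)$. Since $|A-\E A|\le 1$, it suffices to bound $\E|\phi(\state_i)-\Gst(u)|$.

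\textbf{Pointwise forgetting bound.} I will redo the proof of \eqref{eq:thmC-pointwise} with (J1) in place of (C1). Couple the chain started at $h$ with a stationary copy driven by the same innovations. By (C2$'$) and (J1) applied over $k-1$ steps,
\[
\E\bigl|\rscore^{h}_{k}-\rscore^{\pi}_{k}\bigr|\le \Lscore\,\Cj\,\kappa^{k-1}\,\E_\pi\|h-\state^\pi_0\|.
\]
For any $\eta>0$, a Markov-inequality smoothing step gives $\Prob(\rscore^h_k\le u)\le \Prob(\rscore^\pi_k\le u+\eta)+\Prob(|\rscore^h_k-\rscore^\pi_k|>\eta)$, together with the symmetric lower bound. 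Combining this with the density bound (C4) yields
\[
|\phi(h)-\Gst(u)|\le \frac{\Lscore\Cj\kappa^{k-1}d(h)}{\eta}+\bar g\,\eta .
\]
This requires $u\pm\eta\in I$. I will handle that either by restricting $\eta$ or by simply shrinking the interval, as the paper does with buffers.

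\textbf{Averaging over $\state_i$ and optimising $\eta$.} Take the expectation over $\state_i$. By the triangle inequality, $\E\, d(\state_i)\le \E\|\state_i-\state_i^\pi\|+\E_{\pi\otimes\pi}\|\cdot\|$, which is bounded by $\Cj\dzero+\Dpi$ using (J1) from time $0$ (with $\kappa^i\le1$) and (J2). Choosing $\eta$ to balance the two terms gives
\[
2\sqrt{\bar g\,\Lscore\,\Cj\,\kappa^{k-1}(\Dpi+\Cj\dzero)}\;\le\;2\sqrt{\bar g\Lscore\Cj}\,\bigl(\sqrt{\Dpi}+\sqrt{\Cj\dzero}\bigr)\kappa^{(k-1)/2}.
\]
The second inequality uses $\sqrt{a+b}\le\sqrt a+\sqrt b$. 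The extra factor $\sqrt2$ in \eqref{eq:A4-Gamma} absorbs the factor $2\bar g\eta$ that comes from the two-sided smoothing, exactly as $K_{\mathrm{loc}}$ does in Theorem~\ref{thm:C}. Summability then follows from the geometric series in $\kappa^{1/2}<1$.

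\textbf{Main obstacle.} The delicate step is the interchange of expectation and the $\eta$-optimisation. The pointwise bound is concave in $d(h)$ only after optimisation, so I will fix a deterministic $\eta$ first, take expectations (so the first term becomes linear in $\E\, d(\state_i)$), and only then optimise. A second point to check is that the optimal $\eta$ keeps $u\pm\eta$ inside $I$. For $u$ in a shrunken interval and $k$ large this holds automatically. For the finitely many small $k$ the trivial bound $|\Cov|\le 1/4$ can be absorbed into the constant, or the statement can be read on $I_{\mathrm{buf}}$.
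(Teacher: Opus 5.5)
Your proposal is correct and is essentially the paper's argument. It uses the same tower/Markov reduction of the covariance to $\E|\phi(\state_i)-\Gst(u)|$, the same synchronous coupling through (C2$'$) and (J1), the Markov-plus-density smoothing step, and the triangle-inequality bound $\E\, d(\state_i)\le\Cj\dzero+\Dpi$ followed by $\sqrt{a+b}\le\sqrt a+\sqrt b$. The differences are minor: you fix a deterministic $\eta$, average over $\state_i$ and only then optimise, whereas the paper optimises pointwise in $h$ and applies Jensen to $\E\, d(\state_i)^{1/2}$ (same constant, but your ordering avoids an $h$-dependent bandwidth and hence an $h$-dependent buffer); your one-sided smoothing gives $\bar g\eta$ rather than $2\bar g\eta$, so the $\sqrt2$ in $\Gamma_k$ is just slack; and the requirement $u\pm\eta\in I$, which restricts the bound to a buffered interval, is a limitation the paper's sketch shares (it too works only for $u\in I_\eta$), though absorbing small $k$ via $|\Cov|\le 1/4$ changes the constant rather than recovering the stated $\Gamma_k$.
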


\begin{proof}[Sketch]
Define $d(h):=\E_{\state\sim\pi}\|h-\state\|$. By (C2$'$) and (J1),
$\E|\rscore^h_k-\rscore^\pi_k|\le\Lscore\,\Cj\,\kappa^{k-1}\,d(h)$.
For deterministic $\eta>0$ a Markov + density-bound argument identical
to Theorem~\ref{thm:C} gives, for $u\in I_\eta$,
$|\Prob(\rscore^h_k\le u)-\Gst(u)|\le\Lscore\Cj\kappa^{k-1}d(h)/\eta+2\bar g\eta$.
Optimising in $\eta$ (deterministic bandwidth, taken \emph{after}
expectation in $h$): $|\Prob(\rscore^h_k\le u)-\Gst(u)|\le 2\sqrt{2\bar g\Lscore\Cj}\,\kappa^{(k-1)/2}\,d(h)^{1/2}$.

Now use the tower property and the Markov property at time $i$. Write
$I_i(u):=\mathbf 1\{\rscore_i\le u\}$. Subtract the deterministic
constant $\Gst(u)$ inside the second factor of
$\Cov(I_i,I_{i+k})=\E[(I_i-\E I_i)(\Prob(\rscore_{i+k}\le u\mid\Fcal_i)-\E I_{i+k})]$,
use $|I_i-\E I_i|\le 1$, and obtain
$|\Cov(I_i,I_{i+k})|\le 2\sqrt{2\bar g\Lscore\Cj}\,\kappa^{(k-1)/2}\,\E\,d(\state_i)^{1/2}$.
Jensen reduces $\E d(\state_i)^{1/2}\le(\E d(\state_i))^{1/2}$, and the
triangle inequality applied to a synchronously coupled stationary copy
$\state_i^\pi$ yields $\E d(\state_i)\le\Cj\dzero+\Dpi\le\Cj\dzero+\Dpi$;
finally $\sqrt{\Cj\dzero+\Dpi}\le\sqrt{\Cj\dzero}+\sqrt{\Dpi}$, which
gives~\eqref{eq:A4-Gamma}. Summability is immediate from
$\kappa^{1/2}<1$.
\end{proof}

\paragraph{High-probability variant.}
If only a high-probability product-Jacobian bound is available---i.e.,
$\Prob[\sup_z\|D_h(\Psi_{\xi_\ell}\circ\cdots\circ\Psi_{\xi_1})(z)\|_{\mathrm{op}}>\Cj\kappa^\ell]\le\delta_\ell$
on a tube of diameter $B$---the same argument adds an
$O(\sqrt{\delta_{k-1}})$ term to $\Gamma_k$; the envelope remains
summable whenever $\sum_k\sqrt{\delta_k}<\infty$ (e.g.\ geometric
$\delta_k\le C_\delta\kappa_\delta^k$). Empirically, the audit of
App.~\ref{app:audits} (Audit~3) confirms a finite
$\sum_{k\le 30}|\hat\Gamma_k|$ and exponential per-lag rate $\kappa\in[0.83,0.90]$
on both \GraphLGSSM\ and \GraphNeuralSSM, i.e., the operative form of
Assumption~\ref{ass:G} holds at the audited cells.

\subsection{Sufficient conditions for the root-score stability (C2\texorpdfstring{$'$}{'})}
\label{app:Lscore}

The standing assumption (C2$'$) is that the root score
$h\mapsto\rscore(h,\xi)=\|G(h)\res(h,\xi)\|_2$, $G(h)=\hat\Sigma(h)^{-1/2}$,
is Lipschitz in mean. The squared score $\score=\rscore^2$ is not
globally Lipschitz on an unbounded state space because its gradient
grows linearly with the residual; the root-score formulation removes
this blow-up while preserving the calibrated region (§\ref{sec:prelim}).
The lemma below gives a standard sufficient condition.

\begin{lemma}[Root-score mean stability]\label{lem:Lroot}
Suppose, for all coupled states $h,h'$ visited by the filter,
$\|\mu(h,\xi)-\mu(h',\xi)\|\le L_\mu\|h-h'\|$ (mean Lipschitz),
$\|G(h)-G(h')\|_{\mathrm{op}}\le \Lg\|h-h'\|$ (inverse-square-root
covariance Lipschitz), $\|G(h)\|_{\mathrm{op}}\le \Mg$, and
$\E_\xi\|\res(h,\xi)\|\le \bar s$. Then
\begin{equation}\label{eq:Lroot}
  \E_\xi\bigl|\rscore(h,\xi)-\rscore(h',\xi)\bigr|
  \;\le\; (\Lg\bar s + \Mg L_\mu)\,\|h-h'\|,
\end{equation}
i.e.\ \emph{(C2$'$)} holds with $\Lscore=\Lg\bar s+\Mg L_\mu$.
\end{lemma}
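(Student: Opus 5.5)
The plan is a two-term decomposition of the root-score difference, with the triangle inequality applied inside the Euclidean norm. Write $\res(h,\xi)=Y(\xi)-\mu(h,\xi)$, where the realised observation $Y$ depends only on the common driving noise $\xi$ and not on the filter state. Then
\[
G(h)\res(h,\xi)-G(h')\res(h',\xi)
=\bigl[G(h)-G(h')\bigr]\res(h,\xi)+G(h')\bigl[\res(h,\xi)-\res(h',\xi)\bigr].
\]
Since $\rscore=\|G\res\|_2$, the reverse triangle inequality gives
\[
|\rscore(h,\xi)-\rscore(h',\xi)|\le\|G(h)\res(h,\xi)-G(h')\res(h',\xi)\|_2 .
\]
This is the key step. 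It is exactly where the root form removes the linear-in-residual gradient blow-up that afflicts $\score$: the difference of norms is controlled by the norm of the difference, with no multiplicative factor of $\rscore$.

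Bounding the two pieces separately:
\begin{itemize}
\item The first piece is at most $\|G(h)-G(h')\|_{\mathrm{op}}\|\res(h,\xi)\|\le\Lg\|h-h'\|\,\|\res(h,\xi)\|$.
\item For the second piece, $\res(h,\xi)-\res(h',\xi)=\mu(h',\xi)-\mu(h,\xi)$, because the observation cancels. It is therefore at most $\Mg L_\mu\|h-h'\|$, using the operator-norm bound on $G(h')$ and the mean Lipschitz hypothesis.
\end{itemize}

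Taking $\E_\xi$ and using $\E_\xi\|\res(h,\xi)\|\le\bar s$ yields \eqref{eq:Lroot} with $\Lscore=\Lg\bar s+\Mg L_\mu$. All bounds used are deterministic in $\xi$ except the residual moment, so no interchange issues arise.

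The only delicate point is the asymmetry: the residual moment must be taken at $h$ (the point where $G$ is differenced), not at $h'$. One should also check that the hypotheses hold for all pairs of coupled visited states, so that the split may be made in either order. If symmetry is needed, one can alternatively split around $h'$, or average the two splits.
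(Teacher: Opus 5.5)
Your proof is correct and follows the paper's argument: the reverse triangle inequality, then the split $[G(h)-G(h')]\res(h,\xi)+G(h')[\mu(h',\xi)-\mu(h,\xi)]$, with the first term bounded by $\Lg\|\res(h,\xi)\|\,\|h-h'\|$ and the second by $\Mg L_\mu\|h-h'\|$, then expectation in $\xi$. Your remark that the observation cancels in $\res(h,\xi)-\res(h',\xi)$, because it does not depend on the filter state, spells out a step the paper leaves implicit.
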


\begin{proof}
The reverse triangle inequality on $G(h)\res(h,\xi)$ gives
$|\rscore(h,\xi)-\rscore(h',\xi)|=\bigl|\|G(h)\res(h,\xi)\|-\|G(h')\res(h',\xi)\|\bigr|$,
which is bounded by $\|(G(h)-G(h'))\res(h,\xi)\|+\|G(h')(\mu(h',\xi)-\mu(h,\xi))\|$.
The first term is $\le\Lg\|\res(h,\xi)\|\,\|h-h'\|$; the second is
$\le\Mg L_\mu\|h-h'\|$. Take expectation in $\xi$.
\end{proof}

\paragraph{Implications.}
For the linear-Gaussian \GraphLGSSM\ the predictive covariance is
state-independent, so $\Lg=0$ and $\Lscore\le\Mg L_\mu=\|\hat\Sigma^{-1/2}C\|_{\mathrm{op}}$;
this recovers the linear-Kalman case on an unbounded Gaussian state.
For \GraphNeuralSSM\ the variance floor $\varepsilon_d=10^{-4}$
implies $\Mg\le 100$, the softplus derivative is $\le 1$, the GRU is
gate-Lipschitz, and the low-rank head is linear in $h_t$, yielding
finite $\Lg$ and $L_\mu$ along the calibration trajectory.

\paragraph{Empirical audit.}
We estimate $\Lscore$ by random directional perturbation
($n_{\mathrm{pert}}=12$ samples per step at three scales) on the
calibration trajectory, recording $|\rscore(h+\delta,\xi)-\rscore(h,\xi)|/\|\delta\|$
percentiles (Audit~1, App.~\ref{app:audits}). Median quotients are
modest ($p50\le 0.36$, $p95\le 1.19$); the neural filter on \metrla\
exhibits a heavy upper tail at $p99$ driven by trajectory points where
the diagonal head saturates the variance floor. The variance-floor
margin and the percentile root-score Lipschitz constants are reported
together in Audit~1.

\subsection{Proof of Theorem~\ref{thm:C} (local forgetting, root score)}
\label{app:proofs-C}

Couple the actual chain started from $\state_0=h$ with a stationary
copy $\state_t^\pi$ initialised from $\state_0^\pi\sim\pi$, using the
same innovation noise $\xi_{1:t}$. By (C1),
$\E[\|\state_{t-1}-\state_{t-1}^\pi\|\mid h,\state_0^\pi]\le\rho^{t-1}\|h-\state_0^\pi\|$;
the root score at time~$t$ is determined by $\state_{t-1}$ together
with $\xi_t$, so by (C2$'$),
\begin{equation}\label{eq:mean-coupling}
  \E\bigl[|\rscore_t-\rscore_t^\pi|\mid h\bigr]
  \;\le\; \Lscore\,\rho^{t-1}\,\E\bigl\|h-\state_0^\pi\bigr\|
  \;=\; \Lscore\,\rho^{t-1}\,d(h).
\end{equation}
For any $u\in\R$ and \emph{deterministic} $\eta>0$,
\begin{equation*}
  \bigl|F_t(u;h)-\Gst(u)\bigr|
  \;\le\; \Prob(|\rscore_t-\rscore_t^\pi|>\eta) + \Prob(|\rscore_t^\pi-u|\le \eta).
\end{equation*}
Markov bounds the first term by $\E|\rscore_t-\rscore_t^\pi|/\eta$; for
$u\in I_\eta=[\sstar-(\varepsilon_0-\eta),\sstar+(\varepsilon_0-\eta)]$,
the event $\{|\rscore_t^\pi-u|\le\eta\}$ has probability at most
$2\bar g\eta$ by (C4) applied to the stationary root-score density on
the $\eta$-neighbourhood of $u$ (which lies inside $I$). Combining
yields~\eqref{eq:thmC-pointwise}.

For random $\state_0$ take expectation over $h$:
\begin{equation*}
  b_t^{\mathrm{loc}}(\eta)
  \;\le\;\frac{\Lscore\dzero\rho^{t-1}}{\eta}+2\bar g\eta,
  \qquad u\in I_\eta.
\end{equation*}
The map $\eta\mapsto a/\eta+2\bar g\eta$ is minimised at
$\eta_\star=\sqrt{a/(2\bar g)}$ with value $2\sqrt{2\bar g a}$;
substituting $a=\Lscore\dzero\rho^{t-1}$ gives
$b_t^{\mathrm{loc}}\le 2\sqrt{2\bar g\Lscore\dzero}\rho^{(t-1)/2}=K_{\mathrm{loc}}\rho^{(t-1)/2}$,
which is~\eqref{eq:thmC-local} with $\dzero$ inside the square root.
The deterministic bandwidth $\eta_\star$ keeps the buffered interval
$I_{\eta_\star}$ non-random.
\hfill$\square$

\paragraph{Remark (state-dependent vs marginal).}
Equation~\eqref{eq:thmC-pointwise} retains the state-dependent
$d(h)$ on $I_\eta$; one obtains a pairwise (two cold-started chains)
statement of the same form by replacing $d(h)$ with $\|h-h'\|$ and
$\Gst$ with the cold-started counterpart. We use only the
marginal form~\eqref{eq:thmC-local}.

\subsection{Linear-Kalman specialisation: initialisation-weighted RMS identity}
\label{app:proofs-L}

The linear-Kalman \GraphLGSSM\ admits an exact RMS identity that
serves as the linear specialisation of Theorem~\ref{thm:obs-contract}
(referenced as the linear-Kalman sanity check in §\ref{sec:theory}). Define the eigenbasis
weights $w_i=(U^{*}\Mzero U)_{ii}/\tr\Mzero$ and the
initialisation-weighted time-local RMS rate
$\rhoMzero(\Acl,t):=(\sum_i w_i|\lambda_i|^{2t})^{1/(2t)}$.

\begin{assumption}\label{ass:L}
\emph{(L1)~Closed-loop normality.} $\Acl$ is normal,
  $\Acl=U\Lambda U^{*}$. \emph{(L2)~Sub-unit spectrum.}
  $\sigma_1(\Acl)<1$. \emph{(L3)~Initialisation second moment.}
  $\Mzero=\E[\Delta_0\Delta_0^{*}]$ has finite trace.
\end{assumption}

\begin{theorem}[Initialisation-weighted RMS identity]\label{thm:weighted-rms}
Under Assumption~\ref{ass:L}, for every $t\ge 1$,
\begin{equation}\label{eq:Ldagger-exact}
  \bigl(\E\|\Delta_t\|^{2}\bigr)^{1/2}
  =\rhoMzero(\Acl,t)^{t}\bigl(\E\|\Delta_0\|^{2}\bigr)^{1/2},
\end{equation}
and for every $1\le t\le T$,
\begin{equation}\label{eq:Ldagger-envelope}
  \bigl(\E\|\Delta_t\|^{2}\bigr)^{1/2}
  \le\rhoMzero(\Acl,T)^{t}\bigl(\E\|\Delta_0\|^{2}\bigr)^{1/2}
  \qquad \text{with equality at }t=T.
\end{equation}
The isotropic trace rate $\rhotr$ is the special case
$\Mzero\propto I$.
\end{theorem}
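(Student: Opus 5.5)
The plan is to work in the eigenbasis of the normal closed-loop matrix and reduce both claims to a weighted power-mean computation. The underlying model is the linear-Gaussian specialisation in App.~\ref{app:graph-lgssm}. There, two filter trajectories driven by the same observations and started from different initial estimates have an error difference $\Delta_t$ obeying the deterministic recursion $\Delta_t=\Acl\Delta_{t-1}$. The innovation terms cancel because the steady-state gain $K$ is common to both trajectories. Hence $\Delta_t=\Acl^{t}\Delta_0$ exactly, with no noise term, and the only randomness is in $\Delta_0$.

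\textbf{Step 1 (exact identity).} By (L1), write $\Acl^{t}=U\Lambda^{t}U^{*}$. Then
\[
\E\|\Delta_t\|^{2}=\tr\bigl(\Acl^{t}\Mzero(\Acl^{t})^{*}\bigr)=\tr\bigl(\Lambda^{t}(U^{*}\Mzero U)\bar\Lambda^{t}\bigr)=\sum_i|\lambda_i|^{2t}(U^{*}\Mzero U)_{ii}.
\]
Here (L3) guarantees finiteness, and unitary invariance of the trace is used. Dividing by $\E\|\Delta_0\|^{2}=\tr\Mzero=\tr(U^{*}\Mzero U)$ gives $\sum_i w_i|\lambda_i|^{2t}=\rhoMzero(\Acl,t)^{2t}$. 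Taking square roots yields \eqref{eq:Ldagger-exact}. The weights $w_i$ are nonnegative, since $U^{*}\Mzero U\succeq 0$, and they sum to one. If $\tr\Mzero=0$, both sides vanish and the identity is trivial.

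\textbf{Step 2 (envelope).} We need $\rhoMzero(\Acl,t)\le\rhoMzero(\Acl,T)$ for $t\le T$. Note that $\rhoMzero(\Acl,t)=M_{2t}$ is the weighted power mean of order $p=2t$ of the values $|\lambda_i|$ under the probability weights $w_i$. The power-mean inequality, which is Jensen's inequality applied to the convex map $x\mapsto x^{T/t}$ on $x=|\lambda_i|^{2t}$, gives monotonicity of $M_p$ in $p$. Therefore
\[
\rhoMzero(\Acl,t)^{t}\le\rhoMzero(\Acl,T)^{t}.
\]
Combining this with Step 1 yields \eqref{eq:Ldagger-envelope}, with equality at $t=T$ trivially. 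Assumption (L2) is not needed for the identity itself. It only ensures that all rates are below one, since $\rhoMzero\le\max_i|\lambda_i|=\sigma_1(\Acl)<1$ for normal $\Acl$, so that the bound is a contraction.

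\textbf{Step 3 (isotropic case).} When $\Mzero\propto I$, all weights equal $1/N$. Then $\rhoMzero(\Acl,t)^{2t}=\tr\bigl(\Acl^{t}(\Acl^{t})^{*}\bigr)/N$, which is the trace rate $\rhotr$.

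The main obstacle is not any computation but fixing what $\Delta_t$ denotes. The statement does not define it here, so I will take it to be the difference of two coupled filter-estimate errors under a common input path, consistent with App.~\ref{app:graph-lgssm}. This definition is what makes the recursion exactly linear. If $\Delta_t$ instead meant the raw estimation error, a driving-noise term would enter and the identity would fail.
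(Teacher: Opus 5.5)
Your proposal is correct and follows the paper's proof essentially step for step: the eigenbasis trace computation for the exact identity, monotonicity of the weighted power mean for the envelope (you derive it from Jensen, where the paper cites Hardy--Littlewood--P\'olya), and equal weights $1/d$ for the isotropic case. You also make explicit that $\Delta_t$ is the synchronous-coupling difference satisfying $\Delta_t=\Acl^{t}\Delta_0$; the paper's proof relies on this reading without stating it.
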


\noindent\emph{Proof.}
\paragraph{Step 1: exact time-local identity.}
Since $\Acl$ is normal, $\Acl=U\Lambda U^{*}$ with
$\Lambda=\diag(\lambda_1,\dots,\lambda_d)$ and $U$ unitary. Then
$\Acl^{t}=U\Lambda^{t}U^{*}$ and
\begin{equation*}
  \E\|\Delta_t\|^{2}
  \;=\; \tr\!\bigl(\Acl^{t}\Mzero(\Acl^{*})^{t}\bigr)
  \;=\; \tr\!\bigl(\Lambda^{t}(U^{*}\Mzero U)\Lambda^{*t}\bigr)
  \;=\; \sum_{i=1}^{d}|\lambda_i|^{2t}\,(U^{*}\Mzero U)_{ii}.
\end{equation*}
Dividing by $\E\|\Delta_0\|^{2}=\tr\Mzero$ gives the unit-weight form
$\E\|\Delta_t\|^{2}/\E\|\Delta_0\|^{2}=\sum_i w_i|\lambda_i|^{2t}=\rhoMzero(\Acl,t)^{2t}$,
which rearranges to~\eqref{eq:Ldagger-exact}.

\paragraph{Step 2: monotonicity in $t$.}
For any non-negative $\{a_i\ge 0\}$ with weights $\{w_i\ge 0\}$,
$\sum_i w_i=1$, the weighted power mean
$t\mapsto(\sum_i w_i a_i^{2t})^{1/(2t)}$ is non-decreasing in $t$
\citep[Theorem~16]{hardy1952inequalities}. Apply with
$a_i=|\lambda_i|$ and $w_i=(U^{*}\Mzero U)_{ii}/\tr\Mzero$.

\paragraph{Step 3: $\rhoMzero\le\sigma_1$ strictly.}
$\rhoMzero(\Acl,t)^{2t}=\sum_i w_i|\lambda_i|^{2t}\le\sigma_1^{2t}\sum_i w_i=\sigma_1^{2t}$,
with equality iff $w_i$ is supported only on indices where
$|\lambda_i|=\sigma_1$. Generic graph-polynomial spectra have a
non-degenerate sub-leading tail and a non-aligned $\Mzero$, so the
inequality is strict at every finite $t$.

\paragraph{Step 4: horizon-$T$ envelope.}
Combining Steps~1 and~2, for every $1\le t\le T$,
$(\E\|\Delta_t\|^2)^{1/2}=\rhoMzero(\Acl,t)^{t}(\E\|\Delta_0\|^2)^{1/2}\le\rhoMzero(\Acl,T)^{t}(\E\|\Delta_0\|^2)^{1/2}$,
with equality at $t=T$. This is~\eqref{eq:Ldagger-envelope}.

\paragraph{Step 5: isotropic corollary.}
When $\Mzero=\tau^{2}I_d$, $w_i=1/d$ and $\rhoMzero(\Acl,t)=(d^{-1}\sum_i|\lambda_i|^{2t})^{1/(2t)}=\rhotr(\Acl,t)$.
Thus the textbook trace rate is the isotropic special case.

\paragraph{Step 6: transfer to the score process.}
The bounded-Lipschitz step in Theorem~\ref{thm:C}'s proof takes any
upper bound on $\E|\rscore_t-\rscore^\pi_t|$ and produces a
local-CDF rate. Replacing the mean-coupling step of \S\ref{app:proofs-C}
by Jensen
$\E|\rscore_t-\rscore^\pi_t|\le\Lscore\E\|\Delta_{t-1}\|\le\Lscore(\E\|\Delta_{t-1}\|^{2})^{1/2}$
and substituting~\eqref{eq:Ldagger-envelope} at index $t-1$ yields,
for $u\in I_\eta$ with the same deterministic bandwidth as
Theorem~\ref{thm:C},
$\sup_{u\in I_\eta}|\Prob(\rscore_t\le u)-\Gst(u)|\le K_{\mathrm{tr}}\rhoMzero(\Acl,T)^{(t-1)/2}$
for $1\le t\le T+1$, with $K_{\mathrm{tr}}=2\sqrt{2\bar g\Lscore\bar D}$ and
$\bar D=(\E\|\Delta_0\|^{2})^{1/2}$.
\hfill$\square$

\paragraph{Interpreting the $\sigma_1$ / $\rhoMzero$ / $\rhodG$ /
$\rhoscore$ relation.}
On the audited cells of App.~\ref{app:audits} the gap between the
weighted rate and its isotropic specialisation is below $1\%$:
$\sigma_1(\Acl)\approx 0.443$ at $\rho_{\mathrm{scale}}=0.8$,
$\rhotr(\Acl,50),\rhoMzero(\Acl,50)\in[0.43,0.44]$ (the audited METR-LA
cell has $M_0$-anisotropy $\approx 2{,}011\times$ but
$|\rhoMzero-\rhotr|<0.003$). The Bures-Wasserstein audit
$\rhodG\!\approx\!0.46$ tracks $\sigma_1(\Acl)$ within $0.02$ on the
linear sanity check (App.~\ref{app:dG-audit}), confirming that the
direct emitted-law contraction object of
Theorem~\ref{thm:obs-contract} matches the linear analytic rate. The
\emph{score-CDF} 1-Wasserstein rate $\rhoscore\!\approx\!0.15$ on
the same cells is faster because the conformal score collapses
many filter-state degrees of freedom; this gap reflects the
score-Lipschitz step of Theorem~\ref{thm:C}, not a discrepancy with
Theorem~\ref{thm:obs-contract}. Proposition~\ref{prop:nogo} shows
that no \emph{uniform-in-$x_t$} refinement of $\sigma_1$ closes the
state-side gap without paying a prefactor exponential in $T$.

\subsection{Local-linearisation transient bound for learned filters}
\label{app:proofs-A8-locallin}

For the learned \GraphNeuralSSM\ the closed-loop transition has no
closed form. We give a finite-horizon perturbation bound: if the
frozen recurrence is well approximated by its Jacobian product on the
coupling tube, then the nonlinear RMS contraction follows the
corresponding Jacobian-weighted RMS rate up to an explicit
$O(\varepsilon T)$ remainder.

\paragraph{Setting.}
Fix a horizon $T$. Let $\state_t=\Psi_t(\state_{t-1})$ and
$\state_t^{\pi}=\Psi_t(\state_{t-1}^{\pi})$ be two copies of the
frozen filter driven by the same realised input path; here $\Psi_t$
denotes the time-$t$ hidden-state update along that path. Define
$\Delta_t:=\state_t-\state_t^{\pi}$, the Jacobian
$J_t:=D\Psi_t(\state_{t-1}^{\pi})$, and the Jacobian product
$J_{b:a}:=J_b J_{b-1}\cdots J_a$ (with $J_{a-1:a}:=I$). Set
$\Mzero=\E[\Delta_0\Delta_0^{\top}]$ and $d_0:=(\tr\Mzero)^{1/2}$.

\begin{theorem}[Finite-horizon local-linearisation RMS bound]\label{thm:A8-locallin}
Assume \emph{(B1)~Tube containment.} For all $0\le t\le T$, the line
segment $\{\state_{t-1}^{\pi}+s\Delta_{t-1}:0\le s\le 1\}$ lies in a
tube on which the next condition holds.
\emph{(B2)~Local Jacobian variation.} For each $1\le t\le T$ and every
$h$ in that tube,
$\|D\Psi_t(h)-J_t\|_{\mathrm{op}}\le \varepsilon$.
\emph{(B3)~Finite linearised envelope.}
$B_T:=\max_{0\le a\le b\le T}\|J_{b:a+1}\|_{\mathrm{op}}<\infty$.
Define the linearised perturbation $Z_t:=J_{t:1}\Delta_0$ and the
Jacobian-weighted RMS rate
\begin{equation*}
  \rhoMzero(J_{t:1},t)
  \;:=\;
  \Bigl(\frac{\tr(J_{t:1}\Mzero J_{t:1}^{\top})}{\tr\Mzero}\Bigr)^{\!1/(2t)}.
\end{equation*}
Then for every $1\le t\le T$,
$(\E\|Z_t\|^{2})^{1/2}=\rhoMzero(J_{t:1},t)^{t}d_0$, and
\begin{equation}\label{eq:A8-bound}
  \bigl(\E\|\Delta_t\|^{2}\bigr)^{1/2}
  \;\le\;
  \bigl[\rhoMzero(J_{t:1},t)^{t}+\varepsilon B_T^{2}t\,e^{\varepsilon B_T t}\bigr]\,d_0.
\end{equation}
In particular, if $\varepsilon B_T T\le 1$ then
$(\E\|\Delta_t\|^{2})^{1/2}\le[\rhoMzero(J_{t:1},t)^{t}+e\,\varepsilon B_T^{2}T]\,d_0$.
\end{theorem}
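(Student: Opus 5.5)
The plan is to split $\Delta_t$ into its linearised part $Z_t=J_{t:1}\Delta_0$ and a remainder $E_t:=\Delta_t-Z_t$, and then run a discrete Grönwall/variation-of-constants argument on $E_t$. The identity for $Z_t$ is immediate. Since $\E\|Z_t\|^2=\tr(J_{t:1}\Mzero J_{t:1}^{\top})$ (the Jacobians are deterministic along the realised path, with randomness only in $\Delta_0$), dividing by $\tr\Mzero=d_0^2$ gives $(\E\|Z_t\|^2)^{1/2}=\rhoMzero(J_{t:1},t)^t d_0$ by definition. This is the same computation as Step~1 of Theorem~\ref{thm:weighted-rms}, without normality.

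For the remainder, I will use the fundamental theorem of calculus along the segment in (B1):
\[
\Delta_t=\Psi_t(\state^\pi_{t-1}+\Delta_{t-1})-\Psi_t(\state^\pi_{t-1})=\int_0^1 D\Psi_t(\state^\pi_{t-1}+s\Delta_{t-1})\,ds\,\Delta_{t-1}=J_t\Delta_{t-1}+R_t,
\]
where (B2) gives $\|R_t\|\le\varepsilon\|\Delta_{t-1}\|$. Then $E_t=J_tE_{t-1}+R_t$ with $E_0=0$, and unrolling yields
\[
E_t=\sum_{s=1}^t J_{t:s+1}R_s .
\]
By (B3) this gives $\|E_t\|\le\varepsilon B_T\sum_{s=1}^t\|\Delta_{s-1}\|$. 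Next I bound $\|\Delta_{s-1}\|\le\|Z_{s-1}\|+\|E_{s-1}\|\le B_T\|\Delta_0\|+\|E_{s-1}\|$. Writing $e_t:=\|E_t\|$, this becomes
\[
e_t\le\varepsilon B_T^2 t\|\Delta_0\|+\varepsilon B_T\sum_{s<t}e_s .
\]
Discrete Grönwall then gives $e_t\le\varepsilon B_T^2 t\|\Delta_0\|(1+\varepsilon B_T)^{t}\le\varepsilon B_T^2 t\,e^{\varepsilon B_T t}\|\Delta_0\|$ pathwise in $\Delta_0$. The bound needs the correct handling of the linear-in-$t$ forcing; induction on the claimed form $e_t\le\varepsilon B_T^2 t(1+\varepsilon B_T)^{t-1}\|\Delta_0\|$ works directly.

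To finish, Minkowski in $L^2$ gives $(\E\|\Delta_t\|^2)^{1/2}\le(\E\|Z_t\|^2)^{1/2}+(\E e_t^2)^{1/2}$. The second term is at most $\varepsilon B_T^2 t e^{\varepsilon B_T t}d_0$, since $\E\|\Delta_0\|^2=d_0^2$. This is \eqref{eq:A8-bound}. The corollary follows because $\varepsilon B_T T\le1$ gives $t e^{\varepsilon B_T t}\le T e$.

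The main obstacle is to make (B1) legitimate for all realisations of $\Delta_0$ simultaneously, so that the integral representation holds pathwise and not just in expectation. I will take (B1) as holding almost surely, which the statement implicitly asserts. A second, smaller point is that $J_t$ must be deterministic (conditioned on the input path and on $\state^\pi$), so that the trace identity for $Z_t$ holds exactly. If $\state^\pi_0$ is random, I would condition on it and interpret $\Mzero$ conditionally.
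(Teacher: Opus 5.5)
Your proof is correct and follows the paper's route. Both arguments use the same ingredients in the same order: the exact trace identity for $Z_t$, the fundamental-theorem-of-calculus remainder $\|r_t\|\le\varepsilon\|\Delta_{t-1}\|$, variation of constants $\Delta_t=Z_t+\sum_{s}J_{t:s+1}r_s$, discrete Gr\"onwall, and a final Minkowski step. The only difference is where Gr\"onwall is applied: the paper runs it in $L^2$ on $R_t=(\E\|\Delta_t\|^2)^{1/2}$ to get $R_t\le B_Td_0e^{\varepsilon B_Tt}$ and substitutes back, whereas you run it pathwise on $\|\Delta_t-Z_t\|$ using $\|Z_{s-1}\|\le B_T\|\Delta_0\|$. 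Your variant gives the same constant and in fact an almost-sure bound.
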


\begin{proof}[Sketch]
The exact identity for $Z_t$ follows from
$\E\|Z_t\|^{2}=\tr(J_{t:1}\Mzero J_{t:1}^{\top})$. By the fundamental
theorem of calculus,
$\Delta_t=J_t\Delta_{t-1}+r_t$ with
$r_t=\int_0^1[D\Psi_t(\state_{t-1}^{\pi}+s\Delta_{t-1})-J_t]\Delta_{t-1}\,ds$,
so $\|r_t\|\le\varepsilon\|\Delta_{t-1}\|$ by (B2). Variation of
constants gives
$\Delta_t=Z_t+\sum_{s=1}^{t}J_{t:s+1}r_s$. Let
$R_t:=(\E\|\Delta_t\|^{2})^{1/2}$. Minkowski plus
$(\E\|J_{t:s+1}r_s\|^{2})^{1/2}\le B_T\varepsilon R_{s-1}$ yields
$R_t\le B_T d_0+\varepsilon B_T\sum_{s=0}^{t-1}R_s$; discrete Gronwall
$R_t\le B_T d_0\,e^{\varepsilon B_T t}$. Substituting back into
$\Delta_t-Z_t=\sum_s J_{t:s+1}r_s$ gives the perturbation bound, and
the triangle inequality completes~\eqref{eq:A8-bound}.
\end{proof}

\paragraph{Consequence: CDF forgetting for the learned filter.}
Combining Theorem~\ref{thm:A8-locallin} with the bounded-Lipschitz
step of Theorem~\ref{thm:C}, in the same way as
Theorem~\ref{thm:weighted-rms}, gives
\begin{equation*}
  \sup_{u\in I_\eta}\bigl|\Prob(\rscore_t\le u)-\Gst(u)\bigr|
  \;\le\;
  2\sqrt{2\bar g\Lscore\,d_0\,[\bar\rho_{\Mzero}(T)^{t-1}+\varepsilon B_T^{2}Te^{\varepsilon B_T T}]},
\end{equation*}
where $\bar\rho_{\Mzero}(T):=\max_{1\le t\le T}\rhoMzero(J_{t:1},t)$.
This is the analogue of the linear-filter CDF gap with the
finite-horizon Jacobian product replacing the closed-form $\Acl$.

\subsection{Impossibility of horizon-free worst-case rates}
\label{app:proofs-nogo}

\begin{proposition}[Lower bound]\label{prop:nogo}
Let $A\in\R^{d\times d}$ be normal with $\sigma_1(A)>0$. If for some
$c_1\ge 0$, $\rho_\star>0$, and $T\ge 1$,
$\|A^{t}x\|\le c_1\rho_\star^{t}\|x\|$ holds for all $x$ and all
$1\le t\le T$, then $c_1\ge(\sigma_1(A)/\rho_\star)^{T}$.
\end{proposition}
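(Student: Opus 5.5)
The plan is to test the hypothesis on a single well-chosen vector, namely a top singular direction of $A$. Since $A$ is normal, $A=U\Lambda U^{*}$ with $U$ unitary and $\Lambda=\diag(\lambda_1,\dots,\lambda_d)$. For a normal matrix the singular values are the moduli of the eigenvalues, so $\sigma_1(A)=\max_i|\lambda_i|$. Pick an index $i^\star$ with $|\lambda_{i^\star}|=\sigma_1(A)>0$ and let $x=u_{i^\star}$ be the corresponding unit eigenvector, the $i^\star$-th column of $U$.

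The first step is to compute $\|A^{t}x\|$ exactly. From $Ax=\lambda_{i^\star}x$ we get $A^{t}x=\lambda_{i^\star}^{t}x$, hence $\|A^{t}x\|=\sigma_1(A)^{t}\|x\|$ for every $t\ge 1$. Evaluating the assumed bound at $t=T$ with this $x$ gives $\sigma_1(A)^{T}\le c_1\rho_\star^{T}$. Dividing by $\rho_\star^{T}>0$ yields $c_1\ge(\sigma_1(A)/\rho_\star)^{T}$, which is the claim.

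One technicality must be handled: if $A$ is real normal, the top eigenvalue may be complex and its eigenvector may not lie in $\R^d$. I would resolve this in one of two ways.

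\emph{First option: complexify.} The operator norm of a real matrix power is the same over $\R^d$ and $\mathbb C^d$, because $\|A^{t}\|_{\mathrm{op}}$ is the largest singular value of $A^{t}$, and singular values are basis-field independent. So the real hypothesis for all $x\in\R^d$ gives $\|A^{T}\|_{\mathrm{op}}\le c_1\rho_\star^{T}$.

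\emph{Second option: a direct identity.} Use $\|A^{T}\|_{\mathrm{op}}=\sigma_1(A^{T})=\max_i|\lambda_i|^{T}=\sigma_1(A)^{T}$. This holds because $A^{T}$ is again normal with eigenvalues $\lambda_i^{T}$.

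Either route reduces the argument to $\sigma_1(A)^{T}=\|A^{T}\|_{\mathrm{op}}\le c_1\rho_\star^{T}$. This normal-matrix identity $\|A^{T}\|_{\mathrm{op}}=\sigma_1(A)^{T}$ is the only real content of the proof. It fails for non-normal $A$, which is why normality is assumed. The rest is one line of algebra. I expect no real obstacle beyond stating the real/complex point cleanly.
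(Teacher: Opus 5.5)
Your proposal is correct and follows the paper's proof: the paper likewise uses the normal-matrix identity $\|A^{T}\|_{\mathrm{op}}=\|A\|_{\mathrm{op}}^{T}=\sigma_1(A)^{T}$ and evaluates the hypothesis at $t=T$ on a real unit vector attaining $\|A^{T}\|_{\mathrm{op}}$, which is your second option. Your first option supplies the justification the paper leaves implicit, namely that the real-vector hypothesis controls this operator norm because a real matrix attains its largest singular value at a real vector.
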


\begin{proof}
Since $A\in\R^{d\times d}$ is normal,
$\|A^{T}\|_{\mathrm{op}}=\|A\|_{\mathrm{op}}^{T}=\sigma_1(A)^{T}$.
Choose a real unit vector $x_T$ attaining the operator norm of $A^{T}$;
then $\|A^{T}x_T\|=\sigma_1(A)^{T}$. The assumed bound at $t=T$ gives
$\sigma_1(A)^{T}\le c_1\rho_\star^{T}\|x_T\|=c_1\rho_\star^{T}$, hence
$c_1\ge(\sigma_1(A)/\rho_\star)^{T}$.
\end{proof}

Substantively: picking $\rho_\star=0.15$, $\sigma_1=0.44$, $T=50$ forces
$c_1\ge(0.44/0.15)^{50}\approx 2.6\times 10^{23}$ --- uninformative as a
uniform bound on the score-CDF gap. The mean-squared form of
Theorem~\ref{thm:weighted-rms} avoids this because it averages over the
initialisation second moment $\Mzero$ rather than tracking the
worst-aligned direction.

\subsection{Same-block plug-in conformal stability}
\label{app:proofs-A9-plugin}

Static-covariance baselines (\textsc{cgif-joint}, \FactorCGIF,
\GroupCGIF) fit $\hat\Sigma$ on the calibration block and then score
the same block. This is not exactly split-conformal because the score
map is not frozen before calibration. The theorem below bounds the
resulting coverage perturbation by leave-one-out stability and a local
margin condition; the explicit constant for ridge empirical covariance
is in Corollary~\ref{cor:A9-ridge}.

\paragraph{Setting.}
Let $Z_1,\ldots,Z_m,Z_{m+1}$ be exchangeable. Let $\theta\mapsto s(z;\theta)$
be a scalar nonconformity score and $A$ a permutation-invariant
parameter fit. The implemented procedure sets
$\hat\theta=A(Z_1,\ldots,Z_m)$, computes $S_i=s(Z_i;\hat\theta)$, and
emits $\hat q=S_{(k)}$, $k:=\lceil(m+1)(1-\alpha)\rceil$. For each
$1\le i\le m+1$ define the leave-one-out oracle
$\hat\theta^{(-i)}:=A(Z_{1:m+1}\setminus\{Z_i\})$ and
$T_i:=s(Z_i;\hat\theta^{(-i)})$; in particular $T_{m+1}=S_{m+1}$. Let
$q^{\circ}:=T_{(k)}^{1:m}$.

\begin{theorem}[Stability of same-block plug-in conformal scores]\label{thm:plugin}
Assume \emph{(P1) Replacement stability:} with probability at least
$1-\delta_m$, $\max_{1\le i\le m}|S_i-T_i|\le\varepsilon_m$.
\emph{(P2) Local test-score margin:}
$\Prob\{|S_{m+1}-\hat q|\le r\}\le\omega_m(r)$ with $\omega_m(r)\to 0$
as $r\to 0$. \emph{(P3) Oracle no-ties:} $T_1,\ldots,T_{m+1}$ are
almost surely distinct (or use randomised tie-breaking). Then
\begin{equation}\label{eq:A9-cov}
  \bigl|\Prob\{S_{m+1}\le\hat q\}-(1-\alpha)\bigr|
  \;\le\; \frac{1}{m+1}+\delta_m+\omega_m(\varepsilon_m).
\end{equation}
If $\omega_m(r)\le 2\bar f r$ in a neighbourhood of $\hat q$ (e.g.\
under a conditional density bound), then~\eqref{eq:A9-cov} reads
$1/(m+1)+\delta_m+2\bar f\varepsilon_m$.
\end{theorem}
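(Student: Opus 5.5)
The plan is to compare the implemented procedure with a \emph{leave-one-out oracle} that is exactly split-conformal, and then pay for the discrepancy through (P1) and (P2).

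\textbf{Step 1: oracle exchangeability.} Since $A$ is permutation-invariant, $\hat\theta^{(-i)}$ depends only on the multiset $Z_{1:m+1}\setminus\{Z_i\}$. Hence $T_i=s(Z_i;\hat\theta^{(-i)})$ is obtained by applying one fixed map to the pair $(Z_i,\{Z_j\}_{j\neq i})$. Permuting the $Z$'s therefore permutes the $T$'s in the same way, so exchangeability of $Z_{1:m+1}$ transfers to $(T_1,\ldots,T_{m+1})$. Moreover $\hat\theta^{(-(m+1))}=A(Z_1,\ldots,Z_m)=\hat\theta$, so $T_{m+1}=S_{m+1}$.

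\textbf{Step 2: oracle coverage.} Apply Theorem~\ref{thm:A} to the exchangeable, a.s.\ distinct (by (P3)) vector $(T_1,\ldots,T_{m+1})$. With $q^\circ=T^{1:m}_{(k)}$ and $k=\lceil(m+1)(1-\alpha)\rceil$, the event $\{T_{m+1}\le q^\circ\}$ is exactly the event that the pooled rank of $T_{m+1}$ is at most $k$. This gives
\[
1-\alpha\le\Prob\{S_{m+1}\le q^\circ\}\le 1-\alpha+\tfrac{1}{m+1}.
\]

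\textbf{Step 3: transfer to $\hat q$.} Order statistics are $1$-Lipschitz in the sup norm. So on the event $E$ of (P1), which has probability at least $1-\delta_m$, we have $|\hat q-q^\circ|=|S_{(k)}-T_{(k)}^{1:m}|\le\varepsilon_m$. On $E$ the indicators $\mathbf 1\{S_{m+1}\le\hat q\}$ and $\mathbf 1\{S_{m+1}\le q^\circ\}$ can differ only if $S_{m+1}$ lies between $\hat q$ and $q^\circ$. That forces $|S_{m+1}-\hat q|\le\varepsilon_m$, an event of probability at most $\omega_m(\varepsilon_m)$ by (P2). It follows that
\[
\bigl|\Prob\{S_{m+1}\le\hat q\}-\Prob\{S_{m+1}\le q^\circ\}\bigr|\le\delta_m+\omega_m(\varepsilon_m).
\]
Combining this with Step~2 by the triangle inequality gives \eqref{eq:A9-cov}: the lower side loses only $\delta_m+\omega_m$, and the upper side additionally carries $1/(m+1)$. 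The density specialisation follows by substituting $\omega_m(r)\le 2\bar f r$.

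\textbf{Main obstacle.} The delicate point is Step~1. One must check that exchangeability really survives the leave-one-out construction: each $T_i$ uses a \emph{different} fitted parameter, so the argument must go through the symmetric-function representation rather than through a common frozen score. One must also check that the rank statement with $q^\circ$ computed from $T_{1:m}$ alone matches the pooled-rank formulation of Theorem~\ref{thm:A}; a short ``$T_{m+1}\le T^{1:m}_{(k)}$ iff its pooled rank is $\le k$'' lemma under no ties handles this. Boundary conventions (strict versus non-strict inequalities at $\hat q\pm\varepsilon_m$) are absorbed by the closed margin event in (P2).
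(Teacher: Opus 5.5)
Your proposal is correct and follows the paper's proof essentially step for step. Both use permutation-invariance of $A$ to make $(T_1,\ldots,T_{m+1})$ exchangeable with $T_{m+1}=S_{m+1}$, get $\Prob\{T_{m+1}\le q^\circ\}=k/(m+1)$ from rank uniformity, and then use $1$-Lipschitz sup-norm stability of order statistics on the (P1) event, charging the disagreement region to the (P2) margin; you only spell out the pooled-rank equivalence and the interval argument that the paper states in a single line.
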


\begin{proof}
Permutation-invariance of $A$ and exchangeability of $Z_{1:m+1}$ make
$(T_1,\ldots,T_{m+1})$ exchangeable; (P3) gives
$\Prob\{T_{m+1}\le q^{\circ}\}=k/(m+1)\in[1-\alpha,1-\alpha+1/(m+1)]$.
On the stability event $\mathcal E:=\{\max_i|S_i-T_i|\le\varepsilon_m\}$,
the sup-norm stability of order statistics gives
$|\hat q-q^{\circ}|\le\varepsilon_m$. Because $T_{m+1}=S_{m+1}$, the
events $\{S_{m+1}\le\hat q\}$ and $\{T_{m+1}\le q^{\circ}\}$ differ
only when $|S_{m+1}-\hat q|\le\varepsilon_m$. Hence
$|\Prob\{S_{m+1}\le\hat q\}-\Prob\{T_{m+1}\le q^{\circ}\}|\le\delta_m+\omega_m(\varepsilon_m)$;
combine with the rank uniformity above.
\end{proof}

\begin{corollary}[Ridge empirical covariance plug-in]\label{cor:A9-ridge}
Let $\hat\Sigma_D=m^{-1}\sum_{r\in D}rr^{\top}+\lambda I_N$ with
$\lambda>0$, and $s(r;\hat\Sigma_D)=r^{\top}\hat\Sigma_D^{-1}r$ on
residuals with $\|r\|_2\le B$. For any one-replacement pair $(D,D')$
of size~$m$ and any evaluation residual $\|r\|\le B$,
$|s(r;\hat\Sigma_D)-s(r;\hat\Sigma_{D'})|\le 2B^{4}/(\lambda^{2}m)$, so
Theorem~\ref{thm:plugin} applies with $\varepsilon_m=O(B^{4}/(\lambda^{2}m))$
and the coverage perturbation is $O(m^{-1})$.
\end{corollary}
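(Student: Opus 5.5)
The plan is to write the difference of the two plug-in scores as a quadratic form in the difference of inverse covariances, $s(r;\hat\Sigma_D)-s(r;\hat\Sigma_{D'})=r^{\top}(\hat\Sigma_D^{-1}-\hat\Sigma_{D'}^{-1})r$. I would then use the resolvent identity $\hat\Sigma_D^{-1}-\hat\Sigma_{D'}^{-1}=\hat\Sigma_D^{-1}(\hat\Sigma_{D'}-\hat\Sigma_D)\hat\Sigma_{D'}^{-1}$. Since $D$ and $D'$ differ by replacing a single residual $r_j$ with $r_j'$, and both contain $m$ points, the ridge term cancels, leaving
\[
\hat\Sigma_{D'}-\hat\Sigma_D=m^{-1}\bigl(r_j'r_j'^{\top}-r_jr_j^{\top}\bigr).
\]
This difference has operator norm at most $m^{-1}(\|r_j'\|^2+\|r_j\|^2)\le 2B^2/m$, using the residual bound $\|r\|_2\le B$ for the calibration residuals.

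Next, because $\hat\Sigma_D\succeq\lambda I_N$ and $\hat\Sigma_{D'}\succeq\lambda I_N$, both inverses have operator norm at most $1/\lambda$. Combining the pieces gives
\[
|r^{\top}(\hat\Sigma_D^{-1}-\hat\Sigma_{D'}^{-1})r|\le\|r\|^2\,\lambda^{-1}\,(2B^2/m)\,\lambda^{-1}\le 2B^4/(\lambda^2 m),
\]
which is the claimed one-replacement bound.

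To feed this into Theorem~\ref{thm:plugin}, I would observe that $S_i$ uses $\hat\theta=A(Z_{1:m})$ while $T_i$ uses $A(Z_{1:m+1}\setminus\{Z_i\})$. These two fitted sets are exactly a one-replacement pair of size $m$, with $Z_i$ swapped for $Z_{m+1}$. Hence (P1) holds with $\varepsilon_m=2B^4/(\lambda^2m)$, and with $\delta_m=0$ under the almost-sure bound $\|r\|\le B$. Finally, the density-margin version of \eqref{eq:A9-cov} gives a coverage perturbation of $1/(m+1)+2\bar f\varepsilon_m=O(m^{-1})$.

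I expect the main point of care to be the bookkeeping that the leave-one-out fits really are same-size replacement pairs, so that the $\lambda I_N$ term and the $1/m$ normalisation cancel exactly. The remaining steps are routine operator-norm estimates.
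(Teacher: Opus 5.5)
Your proposal is correct and follows essentially the same route as the paper's proof: the one-replacement difference has operator norm at most $2B^2/m$, the ridge floor gives $\|\hat\Sigma_D^{-1}\|_{\mathrm{op}},\|\hat\Sigma_{D'}^{-1}\|_{\mathrm{op}}\le\lambda^{-1}$, and the resolvent identity then bounds the quadratic form by $2B^4/(\lambda^2 m)$. Your additional bookkeeping is a correct elaboration of a step the paper leaves implicit. It observes that $Z_{1:m}$ and $Z_{1:m+1}\setminus\{Z_i\}$ form a same-size one-replacement pair, so (P1) holds with $\delta_m=0$ under the almost-sure bound $\|r\|\le B$.
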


\begin{proof}
$\hat\Sigma_D-\hat\Sigma_{D'}=m^{-1}(aa^{\top}-bb^{\top})$ has operator
norm $\le 2B^{2}/m$. The ridge floor gives
$\|\hat\Sigma_D^{-1}\|_{\mathrm{op}},\|\hat\Sigma_{D'}^{-1}\|_{\mathrm{op}}\le\lambda^{-1}$,
so by the resolvent identity
$\|\hat\Sigma_D^{-1}-\hat\Sigma_{D'}^{-1}\|_{\mathrm{op}}\le 2B^{2}/(\lambda^{2}m)$,
and $|r^{\top}(\hat\Sigma_D^{-1}-\hat\Sigma_{D'}^{-1})r|\le 2B^{4}/(\lambda^{2}m)$.
\end{proof}

\paragraph{Empirical audit.}
Refitting $\hat\Sigma$ on the train block (App.~\ref{app:audits},
Audit~6) shifts joint coverage by $\le 2.3$~pp and width by $\le 0.3$
$z$-score units relative to the deployed cal-fit protocol on every
audited cell, supporting the use of plug-in calibration in our
benchmark. Exact split-conformal validity (Theorem~\ref{thm:A}) is
recovered by fitting $\hat\Sigma$ on a disjoint pre-calibration block.

\subsection{Asymptotic SNR-tightened form (isotropic Kalman) and the surrogate \texorpdfstring{$\rhosnr$}{rhosnr}}
\label{app:proofs-snr}

The rate $\sigma_1(\Acl)$ that appears in Theorem~\ref{thm:weighted-rms} and
throughout the paper is computed by iterating the discrete algebraic
Riccati equation (DARE) to the steady-state covariance
$P_{\mathrm{ss}}$, then extracting $\sigma_1$ from $\Acl=F(I-KC)$. For
large $N$ the DARE iteration is $\bigO(N^{3})$ per step and dominates
hyperparameter tuning cost. We derive a closed-form expression under
isotropy and a differentiable $\bigO(N)$ surrogate $\rhosnr$ that
coincides with $\sigma_1$ at isotropy and agrees with it to first
order otherwise.

\paragraph{Closed-form steady state under isotropy.}
Assume $P_{\mathrm{ss}}^{-}=p^{-}I_d$ (steady-state prior state variance),
$R=\nu I_d$ (observation noise), $Q=qI_d$ (innovation noise), and
$F=\rho_{\mathrm{scale}}\,U$ with $U$ orthogonal. Writing $p^{+}$ for
the steady-state posterior variance, the standard Riccati recursion in
prior/posterior form is
\begin{equation*}
  p^{-}=f^{2}p^{+}+q,
  \qquad p^{+}=\nu p^{-}/(p^{-}+\nu),
\end{equation*}
where $f:=\rho_{\mathrm{scale}}$ collects the scalar coefficient.
Substituting yields the quadratic
$(p^{-})^{2}+[\nu(1-f^{2})-q]p^{-}-q\nu=0$ with unique positive root
\begin{equation*}
  p^{-}=\frac{q-\nu(1-f^{2})+\sqrt{(q-\nu(1-f^{2}))^{2}+4q\nu}}{2}.
\end{equation*}
The Kalman gain is $K=(p^{-}/(p^{-}+\nu))\,I$ and the closed-loop
transition is
\begin{equation*}
  \Acl=F(I-KC)=\rho_{\mathrm{scale}}\,U\cdot\frac{\nu}{p^{-}+\nu}\,I
  =\frac{\rho_{\mathrm{scale}}}{1+p^{-}/\nu}\,U.
\end{equation*}
$\Acl$ is normal (scalar multiple of orthogonal), so
\begin{equation}\label{eq:sigma1-snr}
  \sigma_1(\Acl)
  \;=\;\frac{\rho_{\mathrm{scale}}}{1+\SNR_{\mathrm{bulk}}},
  \qquad
  \SNR_{\mathrm{bulk}}:=\frac{p^{-}}{\nu}=\frac{\tr P_{\mathrm{ss}}^{-}}{\tr R}.
\end{equation}
The predictive observation variance is $p^{-}+\nu$, i.e.\
$\Sigma_{Y}^{\mathrm{pred}}=CP_{\mathrm{ss}}^{-}C^{\top}+R$ in
matrix form; this is the natural comparator for empirical residual
covariance. For graph modes, replace $f$ by the per-mode coefficient
$f_i$; the mode-wise prior variance solves the same quadratic in
$(f_i,q,\nu)$, and the mode-wise closed-loop coefficients
$a_{\mathrm{cl},i}=f_i\nu/(p_i^{-}+\nu)$ are the eigenvalues that feed
Theorem~\ref{thm:weighted-rms}.

\paragraph{The differentiable surrogate $\rhosnr$.}
Equation~\eqref{eq:sigma1-snr} depends on $P_{\mathrm{ss}}$, which
itself requires a DARE solve. We replace $P_{\mathrm{ss}}$ by its
isotropic projection on the right-hand side: define
\begin{equation}\label{eq:rhosnr-def}
  \rhosnr
  \;:=\;
  \frac{\rho_{\mathrm{scale}}}{1+\tr(P_{\mathrm{ss}})/\tr(R)}.
\end{equation}
Under exact isotropy ($P_{\mathrm{ss}}=pI$, $R=rI$) we have
$\rhosnr=\sigma_1(\Acl)$. For non-isotropic $(P_{\mathrm{ss}},R)$ a
first-order perturbation argument (linearising the DARE around its
isotropic solution) gives $\rhosnr=\sigma_1(\Acl)+\bigO(\varepsilon)$,
where $\varepsilon=\|P_{\mathrm{ss}}-p_\star I\|_{\mathrm{op}}/p_\star$
measures departure from isotropy.

\paragraph{Why the surrogate is useful.}
Three properties:
(i)~$\rhosnr$ is computable in $\bigO(N)$ time (one trace ratio) vs.\
$\bigO(N^{3})$ for the DARE iteration, which matters for
hyperparameter gradients over $(\rho_{\mathrm{scale}},\sigma_Q,\sigma_R)$
when $N\in\{20,50,207,\ldots\}$.
(ii)~It is differentiable in each of these hyperparameters and monotone
decreasing in $\SNR_{\mathrm{bulk}}$, so gradient-based tuning lifts
SNR (and contracts $\Sigma_t$) in the expected direction.
(iii)~It agrees with the Theorem~\ref{thm:weighted-rms} asymptotic rate at
isotropy, which is the regime we use for the reported \GraphLGSSM\
runs, so the surrogate and the theory speak the same language.

\paragraph{Empirical tightness.}
Table~\ref{tab:rho-full} shows $\rhosnr\in[0.436,0.443]$ across the
$10$ real-data evaluation cells, matching
$\sigma_1(\Acl)\in[0.433,0.441]$ to within $0.002$--$0.004$. The
first-order perturbation error is therefore below $1\%$ on every cell
we tested.

\section{Filter specifications, training, and pseudocode}
\label{app:filters}

\subsection{\GraphLGSSM\ (linear-Kalman baseline)}
\label{app:graph-lgssm}

\paragraph{State-space model.}
\GraphLGSSM\ is a linear-Gaussian state-space model with latent
dimension $d=N$,
\begin{equation}\label{eq:lgssm}
  \state_{t+1} \;=\; F\,\state_{t} + \xi_{t},\qquad
  Y_{t} \;=\; C\,\state_{t} + \eta_{t},\qquad
  \xi_{t}\sim\Normal(0,Q),\ \eta_{t}\sim\Normal(0,R),
\end{equation}
with the following choices, fixed before any real-data evaluation:
\begin{itemize}[leftmargin=*,topsep=1pt,itemsep=1pt]
  \item \textbf{Transition} $F=\rho_{\mathrm{scale}}\,\widetilde A_{\mathrm{sym}}/\lambda_{\max}(\widetilde A_{\mathrm{sym}})$,
    where $\widetilde A_{\mathrm{sym}}=\widetilde D^{-1/2}(A+I_{N})\widetilde D^{-1/2}$
    is the self-loop-augmented symmetric-normalised graph shift defined
    in App.~\ref{app:adjacency}. Scaling by
    $\lambda_{\max}(\widetilde A_{\mathrm{sym}})^{-1}$ ensures
    $\|F\|_{\mathrm{op}}=\rho_{\mathrm{scale}}$;
    we use $\rho_{\mathrm{scale}}=0.8$ throughout, which keeps the
    open-loop dynamics sub-unit (matching the contraction hypothesis (C1))
    and was found to give well-conditioned Riccati recursions on every
    real dataset.
  \item \textbf{Observation matrix} $C=I_{N}$ (sensors observe the latent
    state directly, i.e.\ the latent dimension equals the sensor dimension).
  \item \textbf{Innovation noise} $Q=\sigma_{Q}^{2}I_{N}$ with $\sigma_{Q}=1$.
  \item \textbf{Observation noise} $R=\sigma_{R}^{2}I_{N}$ with $\sigma_{R}=1$.
\end{itemize}

\paragraph{Riccati recursion.}
The one-step predictive covariance follows the standard discrete-time
Kalman recursion
\begin{align*}
  P_{t|t-1} &= F\,P_{t-1|t-1}\,F^{\top} + Q,\\
  K_{t}     &= P_{t|t-1}\,C^{\top}\bigl(C P_{t|t-1} C^{\top} + R\bigr)^{-1},\\
  P_{t|t}   &= (I-K_{t}C)\,P_{t|t-1}.
\end{align*}
We initialise $P_{0|0}=Q$ and iterate to a relative tolerance
$\|P_{t|t}-P_{t-1|t-1}\|_{F}/\|P_{t|t}\|_{F}\le 10^{-9}$; convergence
requires $\le 200$ iterations on every dataset we tested. The
steady-state covariance $P_{\mathrm{ss}}$ and gain $K$ are then frozen
and used for the calibration and test passes; the closed-loop transition
$\Acl=F(I-KC)$ is computed once for the contraction-rate diagnostics of
Table~\ref{tab:rho-full}.

\paragraph{Hyperparameter selection.}
The Riccati tolerance is fixed ($10^{-9}$). Across the paper we
evaluate \GraphLGSSM\ in two modes: an \emph{untuned} variant with
the theory-motivated defaults $(\rho_{\mathrm{scale}}, \sigma_Q,
\sigma_R)=(0.8, 1, 1)$ --- used for all contraction-rate diagnostics
in Table~\ref{tab:rho-full} --- and a \emph{validation-NLL-tuned}
variant with $(\rho_{\mathrm{scale}},\mathrm{obs\_noise\_frac})$
picked on a 20\% validation tail of the training block
(App.~\ref{app:hyperparams}). The tuned variant is the one reported
in the \CGIFFiltered\ rows of Tables~\ref{tab:hero}--\ref{tab:scale}
and Table~\ref{tab:hero-full}. The closed-form sensitivity in
App.~\ref{app:proofs-snr} shows that the asymptotic rate is monotone
in $\rho_{\mathrm{scale}}/(1+\SNR_{\mathrm{bulk}})$, so the
qualitative behaviour is preserved across reasonable choices of the
untuned defaults.

\subsection{\GraphNeuralSSM\ (learned GCN--GRU)}
\label{app:hyperparams}
\label{app:graph-neuralssm}

We instantiate the framework with a single graph-convolution layer
feeding a Gated Recurrent Unit and a diagonal-plus-low-rank covariance
head. The architecture is intentionally minimal so that the
sharpness gain we report is not confounded with model capacity.
Table~\ref{tab:hyperparams} consolidates every hyperparameter; all
values were selected once on a synthetic GRSSF pilot and then frozen
for all reported real-data runs.

\paragraph{Graph-convolution layer.}
Let $\widetilde A_{\mathrm{sym}}=(\widetilde D)^{-1/2}(A+I_{N})(\widetilde D)^{-1/2}$
be the symmetric-normalised adjacency with self-loops, where
$\widetilde D=\diag\bigl(\sum_{j}(A+I)_{ij}\bigr)$ is the
degree matrix of the augmented graph \citep{kipf2017semi}.
Given the input $Y_{t}\in\R^{N}$, broadcast to per-node
features $X_{t}\in\R^{N\times f_{\mathrm{in}}}$ with $f_{\mathrm{in}}=1$,
the single GCN layer computes
\begin{equation}\label{eq:gcn-layer}
  Z_{t} \;=\; \mathrm{GELU}\!\bigl(\widetilde A_{\mathrm{sym}}\,X_{t}\,W_{\mathrm{gcn}}+\mathbf 1_{N}\,b_{\mathrm{gcn}}^{\top}\bigr),
\end{equation}
with $W_{\mathrm{gcn}}\in\R^{f_{\mathrm{in}}\times h}$,
$b_{\mathrm{gcn}}\in\R^{h}$, hidden width $h=32$, and the Gaussian Error
Linear Unit \citep{hendrycks2016gaussian} applied elementwise. The
output $Z_{t}\in\R^{N\times h}$ is mean-pooled across nodes,
$\bar z_{t}=N^{-1}\mathbf 1_{N}^{\top}Z_{t}\in\R^{h}$, before being
fed into the recurrence; per-node structure re-enters through the
predictive head below.

\paragraph{Recurrent update.}
The hidden state $h_{t}\in\R^{h}$ is updated by a standard GRU cell
\citep{cho2014learning}:
\begin{align}\label{eq:gru-update}
  r_{t} &= \sigma\bigl(W_{xr}\bar z_{t} + W_{hr} h_{t-1} + b_{r}\bigr),\\
  u_{t} &= \sigma\bigl(W_{xu}\bar z_{t} + W_{hu} h_{t-1} + b_{u}\bigr),\\
  \tilde h_{t} &= \tanh\!\bigl(W_{xh}\bar z_{t} + W_{hh}(r_{t}\odot h_{t-1}) + b_{h}\bigr),\\
  h_{t} &= (1-u_{t})\odot h_{t-1} + u_{t}\odot \tilde h_{t},
\end{align}
where $\sigma$ is the logistic sigmoid, $\odot$ denotes the Hadamard
product, all $W_{x\cdot}\in\R^{h\times h}$, $W_{h\cdot}\in\R^{h\times h}$
(noting $\bar z_{t}$ already has dimension $h$), and the biases are in
$\R^{h}$.

\paragraph{Predictive head.}
The head $g_{\theta}:h_{t}\mapsto(\hat Y_{t|t-1},d_{t},L_{t})$ comprises
three linear projections of $h_{t}$:
\begin{align}\label{eq:pred-head}
  \hat Y_{t|t-1} &= W_{\mu}\,h_{t} + b_{\mu}, & W_{\mu}&\in\R^{N\times h},\\
  d_{t}          &= \mathrm{softplus}\bigl(W_{d}\,h_{t} + b_{d}\bigr) + \varepsilon_{d}\,\mathbf 1_{N},
                 & W_{d}&\in\R^{N\times h},\\
  \mathrm{vec}(L_{t}) &= W_{L}\,h_{t} + b_{L}, & W_{L}&\in\R^{Nr\times h},
\end{align}
with $\varepsilon_{d}=10^{-4}$ providing a strict variance floor and
$L_{t}\in\R^{N\times r}$ obtained by reshaping the output of $W_{L}$.
The predictive covariance is the diagonal-plus-low-rank
$\hat\Sigma_{t|t-1}=\diag(d_{t})+L_{t}L_{t}^{\top}\succ 0$, so the
matrix-determinant lemma and Woodbury identity give
$\bigO(Nr^{2}+r^{3})$ inversion and log-determinant computations
during training and inference (linear in $N$ for fixed $r=4$). We fix $r=4$ (rank-sensitivity sweep in
App.~\ref{app:rank}).

\paragraph{Initialisation.}
Linear weights ($W_{\mathrm{gcn}}, W_{xr}, W_{xu}, W_{xh}, W_{\mu}, W_{d}, W_{L}$)
use Glorot uniform initialisation \citep{glorot2010understanding};
recurrent weights ($W_{hr}, W_{hu}, W_{hh}$) use orthogonal
initialisation \citep{saxe2014exact} to avoid early-training spectral
collapse. The update-gate bias $b_{u}$ is initialised to $1.0$ to
encourage state retention at the start of training; all other biases
are zero. The hidden state is reset to $h_{0}=\mathbf 0_{h}$ at the
start of every training mini-batch, and at the start of every
calibration / test pass.

\paragraph{Training objective.}
Parameters $\theta=(W_{\mathrm{gcn}},b_{\mathrm{gcn}},W_{x\cdot},W_{h\cdot},b_{\cdot},W_{\mu},b_{\mu},W_{d},b_{d},W_{L},b_{L})$
are fit by minimising the negative Gaussian predictive log-likelihood
\begin{equation}\label{eq:training-loss}
  \mathcal L(\theta)
  \;=\;\sum_{t=1}^{T_{\mathrm{tr}}}
    \Bigl[\tfrac{1}{2}(Y_{t}-\hat Y_{t|t-1})^{\top}\hat\Sigma_{t|t-1}^{-1}(Y_{t}-\hat Y_{t|t-1})
       + \tfrac{1}{2}\log\det\hat\Sigma_{t|t-1}\Bigr],
\end{equation}
on chronological windows of length $w=24$ via truncated backpropagation
through time. We do not add a regularisation term: the variance
floor $\varepsilon_{d}$ already lower-bounds $\hat\Sigma_{t|t-1}$ and
the orthogonal initialisation keeps the recurrence stable for the
$60$ epochs we train.

\paragraph{Optimiser and schedule.}
Adam \citep{kingma2015adam} with $(\beta_{1},\beta_{2},\eta)=(0.9,0.999,10^{-3})$
and $\ell_\infty$-norm gradient clipping at $5.0$. Mini-batches of
$B=64$ windows are drawn uniformly with replacement from the training
split. We train for $60$ epochs without learning-rate scheduling and
without early stopping; on every dataset we report, the validation
log-likelihood plateaus by around epoch~$40$. We did not validation-tune
any hyperparameter; Table~\ref{tab:hyperparams} lists the full set.

\begin{table}[h]
\caption{\textbf{\GraphNeuralSSM\ hyperparameters.} Fixed across all
datasets and seeds; no validation-set sweep was run.}
\label{tab:hyperparams}
\centering\small
\begin{tabular}{l l l}
\toprule
Group & Quantity & Value \\
\midrule
GCN          & input width $f_{\mathrm{in}}$        & $1$ (per-sensor scalar) \\
             & hidden width $h$                     & $32$ \\
             & nonlinearity                         & GELU \\
             & adjacency normalisation              & symmetric Laplacian \\
\addlinespace[2pt]
GRU          & input width                          & $h=32$ \\
             & hidden width                         & $h=32$ \\
             & update-gate bias initial value       & $1.0$ \\
\addlinespace[2pt]
Head         & covariance rank $r$                  & $4$ \\
             & variance floor $\varepsilon_{d}$     & $10^{-4}$ \\
\addlinespace[2pt]
Optimisation & optimiser                            & Adam $(\beta_{1},\beta_{2})=(0.9,0.999)$ \\
             & learning rate $\eta$                 & $10^{-3}$ \\
             & batch size $B$                       & $64$ \\
             & BPTT window $w$                      & $24$ \\
             & epochs                               & $60$ \\
             & gradient clipping                    & $\|\cdot\|_{\infty}\le 5.0$ \\
\addlinespace[2pt]
Inference    & warm-up $t_{0}$                      & $50$ steps \\
             & target level $\alpha$                & $0.1$ \\
\bottomrule
\end{tabular}
\end{table}

\paragraph{Linear-Kalman hyperparameter tuning.}
The \CGIFFiltered\ rows in Tables~\ref{tab:hero}--\ref{tab:scale} use a
validation-NLL grid search over
$\rho_{\mathrm{scale}}\in\{0.5,0.7,0.85,0.95\}\times
\mathrm{obs\_noise\_frac}\in\{0.03,0.1,0.3,1.0\}$ with the graph
polynomial order fixed at $K=1$ and a $20\%$ validation tail reserved
from the training block (warmup $t_0=50$). Per-step Gaussian NLL is
the selection metric; the chosen pair is re-fit on the full training
block before calibration. Tuning adds $\sim\!30\,\mathrm{s}$ per seed
and narrows the filter's width against the untuned default
$(\rho_{\mathrm{scale}},\,\mathrm{obs\_noise\_frac})=(0.8,0.1)$ by
$9\%$ on \metrla-$20$ and $13\%$ on \pemsbay-$50$; the structural
Riccati-tail gap on real traffic remains
(App.~\ref{app:honest-neg}).

\paragraph{Parameter footprint.}
With $h=32$, $r=4$, $f_{\mathrm{in}}=1$, the trainable count is
$\bigO\!\bigl(h^{2}+(2N+Nr)h\bigr)$; the GCN contributes $h(f_{\mathrm{in}}{+}1){=}64$
parameters, the GRU contributes $3(2h^{2}+h)=6240$, and the head
contributes $(2N+Nr)h+(2N+Nr)$. Concretely,
$\approx\!1.1\!\times\!10^{4}$ parameters at $N=20$,
$\approx\!1.6\!\times\!10^{4}$ at $N=50$, and
$\approx\!7.2\!\times\!10^{4}$ at $N=325$. One training epoch takes
$\sim\!45\,\mathrm{s}$ on a single RTX\,4090; full training (60
epochs) is $\sim\!45$ minutes per seed.

\paragraph{Inference protocol.}
Algorithm~\ref{alg:train} is the offline training loop;
Algorithm~\ref{alg:fscp-detail} is the operational form of
Algorithm~\ref{alg:fscp} that consumes the resulting frozen
$\theta$ on the calibration and test sequences.
At calibration and test time the filter is run autoregressively
on the observed $\{Y_{t}\}$, with the previously emitted
$\hat Y_{t|t-1}$ replacing any unobserved input only in the streaming
forecast horizon (App.~\ref{app:tf} discusses the
training/inference-mode distinction).

\begin{algorithm}[h]
\caption{\GNF\ training loop}
\label{alg:train}
\begin{algorithmic}[1]
\Require training sequence $\{Y_{t}\}_{t=1}^{T_{\mathrm{tr}}}$, adjacency $\widetilde A_{\mathrm{sym}}$, window $w$, batch size $B$, epochs $E$
\State initialise $\theta$ as in the ``Initialisation'' paragraph
\For{$e=1,\ldots,E$}
\State set $h_{0}\gets\mathbf 0$
\For{$b=1,\ldots,\lfloor (T_{\mathrm{tr}}-1)/w\rfloor$}
  \State sample $B$ start indices $\{t^{(0)}_{i}\}_{i=1}^{B}$ with $t^{(0)}_{i}+w\le T_{\mathrm{tr}}$
  \State for $i\in[B]$ initialise $h_{i}\gets\mathbf 0$; set $\mathcal L\gets 0$
  \For{$u=0,\ldots,w-1$}
    \State for each $i$: compute $\bar z_{i}=N^{-1}\mathbf 1^{\top}\,\mathrm{GELU}(\widetilde A_{\mathrm{sym}}Y_{t^{(0)}_{i}+u}W_{\mathrm{gcn}}+b_{\mathrm{gcn}})$
    \State for each $i$: $h_{i}\gets\mathrm{GRU}(\bar z_{i},h_{i})$ via \eqref{eq:gru-update}
    \State emit $(\hat Y_{i},d_{i},L_{i})=g_{\theta}(h_{i})$; $\hat\Sigma_{i}=\diag(d_{i})+L_{i}L_{i}^{\top}$
    \State accumulate
      $\mathcal L\mathrel{+}=\sum_{i}\!\bigl(\tfrac{1}{2}(Y_{t^{(0)}_{i}+u+1}-\hat Y_{i})^{\top}\hat\Sigma_{i}^{-1}(Y_{t^{(0)}_{i}+u+1}-\hat Y_{i})+\tfrac{1}{2}\log\det\hat\Sigma_{i}\bigr)$
  \EndFor
  \State Adam step: $\theta\gets\theta-\eta\,\widehat{\mathrm{Adam}}\bigl(\nabla_{\theta}\mathcal L/B\bigr)$ with $\|\cdot\|_{\infty}$-clipping at~$5.0$
\EndFor
\EndFor
\end{algorithmic}
\end{algorithm}

\begin{algorithm}[h]
\caption{Filtered split-conformal inference (operational form of Algorithm~\ref{alg:fscp})}
\label{alg:fscp-detail}
\begin{algorithmic}[1]
\Require trained filter $\phi=(\theta,h_{0})$, calibration sequence $\{Y_{t}\}_{t=1}^{n}$, level $\alpha$, warm-up $t_{0}$
\Statex \emph{Calibration pass.} The filter is one-step-ahead, so
both calibration and test condition on the observed $Y_{t-1}$ (not a
rollout $\hat Y$); App.~\ref{app:tf} discusses the
teacher-forcing-vs-inference distinction.
\State $h\gets h_{0}$
\For{$t=1,\ldots,n$}
  \State $(\hat Y_{t|t-1},\hat\Sigma_{t|t-1},h_{\mathrm{new}})\gets\phi(Y_{t-1},h)$ \Comment{$Y_{0}\equiv\mathbf 0$ at $t=1$}
  \State $\score_{t}\gets(Y_{t}-\hat Y_{t|t-1})^{\top}\hat\Sigma_{t|t-1}^{-1}(Y_{t}-\hat Y_{t|t-1})$
  \State $h\gets h_{\mathrm{new}}$
\EndFor
\State $\hat q_{1-\alpha}\gets$ empirical $\lceil(n-t_{0}+2)(1-\alpha)\rceil$-th order statistic of $\{\score_{t}\}_{t=t_{0}}^{n}$
\Statex \emph{Test pass.}
\For{$t=n+1,n+2,\ldots$}
  \State $(\hat Y_{t|t-1},\hat\Sigma_{t|t-1},h_{\mathrm{new}})\gets\phi(Y_{t-1},h)$
  \State emit $\Cset_{t}\gets\bigl\{y\in\R^{N}:(y-\hat Y_{t|t-1})^{\top}\hat\Sigma_{t|t-1}^{-1}(y-\hat Y_{t|t-1})\le\hat q_{1-\alpha}\bigr\}$
  \State $h\gets h_{\mathrm{new}}$
\EndFor
\end{algorithmic}
\end{algorithm}

\subsection{Teacher forcing and conformal validity}
\label{app:tf}

\GraphNeuralSSM\ is trained with teacher-forced inputs: within a
training window the input for predicting $Y_{u+1}$ is the observed
$Y_u$, $h_u=\mathrm{GRU}(\mathrm{GCN}(Y_u;\widetilde A_{\mathrm{sym}}),h_{u-1})$,
and the negative log-likelihood compares
$(\hat Y_{u+1|u},\hat\Sigma_{u+1|u})$ to $Y_{u+1}$. At calibration and
test time the filter is deployed one-step-ahead in the same mode
(Algorithm~\ref{alg:fscp-detail}): before $Y_t$ is observed, $h_t$
has been advanced on $Y_{t-1}$ and the predictive moments
$(\hat Y_{t|t-1},\hat\Sigma_{t|t-1})$ are emitted; the score is
evaluated only after $Y_t$ is observed, so no calibration or test
score uses $Y_t$ to predict itself. Two distributions therefore
coexist: the \emph{training-time} score distribution induced by
teacher-forced windows used to arrive at the parameters $\theta$, and
the \emph{inference-time} score distribution driven by the frozen
$\theta$ on the observed sequence $Y_{1:n+T}$. Conformal validity
concerns only the latter.

\paragraph{Claim.}
The validity analysis of Algorithm~\ref{alg:fscp} depends only on the
inference-mode score process generated by the frozen filter $\phi$:
Proposition~\ref{prop:approx} bounds its coverage gap via
Theorems~\ref{thm:C}--\ref{thm:qhat-warm}, and exact split-conformal
validity additionally requires the standardised-innovation condition
of Proposition~\ref{prop:oracle}. The training procedure enters only
through the constants ($\rho$, $\Lscore$, $\bar g$) of those
statements, because the parameters are frozen at the start of the
calibration pass.

\paragraph{Argument.}
Because Algorithm~\ref{alg:fscp-detail} conditions both calibration
and test passes on the observed $Y_{t-1}$, the calibration scores
$(\rscore_{t_0},\dots,\rscore_n)$ and the test score $\rscore_{n+k}$
are drawn from the \emph{same} stochastic process generated by the
frozen $\phi$ applied in the same mode to a contiguous observation
sequence. Propositions~\ref{prop:approx} and~\ref{prop:oracle} then
apply as stated. Mis-specified teacher-forced training would enter
only through a larger root-score Lipschitz constant $\Lscore$
(Lemma~\ref{lem:Lroot}) or $K_{\mathrm{loc}}$
(Theorem~\ref{thm:C}), not as a broken validity guarantee.
App.~\ref{app:audits} (Audit~1) reports the empirical root-score
finite-difference quotients; the warm-up $t_0\!\ge\!50$ of
App.~\ref{app:warmup} drives the initialisation bias below double
precision on every cell.

\subsection{\textsc{NeuralSSM} (\texorpdfstring{$\DiagGRU$}{DIAG-GRU}) and the rank-\texorpdfstring{$0$}{0}-ablation
(\texorpdfstring{$\GCNrankzero$}{GCN-RANK-0})}
\label{app:neuralssm}

We use two distinct diagonal-covariance ablations, with precise
labels.

\paragraph{$\DiagGRU$ (NeuralSSM): graph removed \emph{and} diagonal
$\Sigma_t$.}
$\DiagGRU$ replaces the GCN of \eqref{eq:gcn-layer} by an identity
map ($\widetilde A_{\mathrm{sym}}\to I_{N}$, no spatial mixing) and
sets $r=0$ in the predictive head, leaving
$\hat\Sigma_{t|t-1}=\diag(d_t)$. All other settings
(Table~\ref{tab:hyperparams}) are unchanged. This is the
\emph{NeuralSSM} row of Table~\ref{tab:hero}, with width
$4.71$ at joint coverage $0.932$ on \metrla-$20$.

\paragraph{$\GCNrankzero$: graph retained, diagonal $\Sigma_t$.}
$\GCNrankzero$ \emph{keeps} the GCN layer and only sets $r=0$. This
is the rank sweep's $r{=}0$ point in App.~\ref{app:rank} (width
$5.32$ on \metrla-$20$); it is the right ablation for isolating the
covariance head \emph{at fixed graph mixing}, and should not be
confused with the $\DiagGRU$ row of Table~\ref{tab:hero}.

\paragraph{Reading the two rows.}
$\DiagGRU$ removes both graph mixing and the low-rank head, so its
gap to the full filter ($\GraphNeuralSSM$) jointly captures the two
mechanisms. $\GCNrankzero$ isolates the rank-$r$ factor at fixed
graph: the rank sweep $\GCNrankzero\to r{=}4$ shows the low-rank head
dominates the sharpness gain
($5.32\!\to\!3.04$). The remaining gap from
$\GCNrankzero$ to $\DiagGRU$ is then the contribution of graph mixing
at $r{=}0$, but this isolation is only suggestive.

\subsection{Adaptive-conformal-inference wrap}
\label{app:aci}

Adaptive Conformal Inference \citep{gibbs2021adaptive} adjusts the
nominal level $\alpha_{t}$ online to track the realised miscoverage
rate. Wrapping \GraphNeuralSSM\ proceeds as follows: initialise
$\alpha_{1}=\alpha=0.1$; at each test step $t$ emit the prediction
set $\Cset_{\alpha_{t}}(x_{t})$ via Algorithm~\ref{alg:fscp-detail}
with the recomputed quantile $\hat q_{1-\alpha_{t}}$ (using the same
frozen calibration scores), observe $Y_{t}$, and update
\begin{equation*}
  \alpha_{t+1}\;\gets\;\alpha_{t}+\gamma\bigl(\alpha-\Ind\{Y_{t}\notin\Cset_{\alpha_{t}}(x_{t})\}\bigr).
\end{equation*}
We fix $\gamma=0.005$ for every reported cell and do not tune it
per dataset. Recomputing $\hat q_{1-\alpha_{t}}$
from the frozen calibration scores is $\bigO(m\log m)$ per step,
which adds negligible test-time cost (Table~\ref{tab:timing}).
The long-run validity guarantee of \citet[Thm.~1]{gibbs2021adaptive}
applies as stated to the wrapped scores.

\section{Baseline implementations}
\label{app:baselines}

This section gives implementation details for every baseline that
appears in Tables~\ref{tab:hero}--\ref{tab:hero-full}. Across all
baselines we use $\alpha=0.1$, 10 seeds per cell, and the same
chronological 70/10/10/10 train/val/calib/test split
(App.~\ref{app:datasets}). All filter baselines share the warm-up
$t_{0}=50$ used by the main method.

\subsection{Static-covariance methods}

\paragraph{Mean-predictor protocol.}
The non-filter covariance baselines (\textsc{cgif-joint},
\FactorCGIF, \GroupCGIF, \SpectralCGIF, ACI/AgACI variants, EWMA,
rolling, time-of-day, $k$-NN local ellipsoid) share the same
backbone mean predictor $\hat Y_{t|t-1}$ within a given table
(default \textsc{gpvar} of §\ref{sec:experiments}). Filter rows
(\GraphNeuralSSM, \textsc{NeuralSSM}, \CGIFFiltered) use their own
mean mechanisms, as does \textsc{CopulaCPTS} and \textsc{MultiDimSPCI};
Table~\ref{tab:method-map} maps each method to its covariance
mechanism and to its mean source.  Backbone-swap experiments
(Tables~\ref{tab:multibackbone-mod}, \ref{tab:multibackbone-full},
\ref{tab:multibackbone-nontraffic}) should be interpreted as a
mean-channel stress test rather than as a perfect same-mean
decomposition of mean and covariance effects: all covariance heads
(including the learned \GraphNeuralSSM\ head) stay fixed while the
shared mean predictor varies, but the learned covariance was
trained jointly with its native filter, so width movement across
backbone columns reflects the mean channel under that constraint.

\begin{table}[h]
\caption{\textbf{Method $\to$ mean predictor $\to$ covariance
mechanism} for every row of Tables~\ref{tab:hero}--\ref{tab:scale}
and~\ref{tab:hero-full}. ``Backbone'' is the mean predictor
$\hat Y_{t|t-1}$ shared across methods in a given results table.}
\label{tab:method-map}
\centering\scriptsize
\setlength{\tabcolsep}{3pt}
\begin{tabular}{l l l c c}
\toprule
Method & Mean $\hat Y_{t|t-1}$ & Covariance mechanism & Tuned & ACI \\
\midrule
CGIF-joint           & backbone    & empirical $\hat\Sigma$                          & ---       & --- \\
\FactorCGIF          & backbone    & rank-$r$ factor + Bonferroni                    & ---       & --- \\
\GroupCGIF           & backbone    & per-community + Bonferroni                      & ---       & --- \\
\SpectralCGIF        & backbone    & Laplacian-whitened bands                        & ---       & --- \\
\AgACIGroupCGIF      & backbone    & per-community + AgACI                           & ---       & \checkmark \\
\ACIPerGroupFactorCGIF & backbone  & per-group factor + ACI                          & ---       & \checkmark \\
\ACIFactorCGIF       & backbone    & factor + ACI                                    & ---       & \checkmark \\
\textsc{NeuralSSM}   & own GRU     & diag-$\Sigma_t$ head                            & MLE       & --- \\
\CGIFFiltered        & own LGSSM   & linear-Kalman $\Sigma_t$                        & val.\ NLL & --- \\
EWMA $\Sigma_t$      & backbone    & EWMA of $r_t r_t^\top$                          & ---       & --- \\
Rolling $\Sigma_t$   & backbone    & rolling $r_t r_t^\top$                          & ---       & --- \\
Time-of-day $\Sigma_t$ & backbone  & per-bin $\hat\Sigma$                            & ---       & --- \\
Local ellipsoid      & backbone    & $k$-NN local $\hat\Sigma$                       & ---       & --- \\
\textsc{CopulaCPTS}  & own         & marginals + copula                              & MLE       & --- \\
\textsc{MultiDimSPCI} & own        & ellipsoidal band                                & ---       & --- \\
\GraphNeuralSSM\ \textbf{(ours)} & own GCN-GRU & diag+low-rank head                   & MLE       & --- \\
\quad + \ACI\ wrap   & own GCN-GRU & same + online $\alpha_t$                        & MLE       & \checkmark \\
\bottomrule
\end{tabular}%
\end{table}

\paragraph{CGIF-joint (static-covariance reference).}
Calibration-block residuals $r_{t}=Y_{t}-\hat Y_{t|t-1}$,
$t\in[t_{0},n]$, are pooled to estimate the empirical covariance
$\hat\Sigma=(m-1)^{-1}\sum_{t=t_{0}}^{n} r_{t}r_{t}^{\top}$ with
$m=n-t_{0}+1$. The static $\hat\Sigma$ is plugged into the squared
Mahalanobis score and the conformal threshold is the
$\lceil(m+1)(1-\alpha)\rceil$-th order statistic of
$\{r_{t}^{\top}\hat\Sigma^{-1}r_{t}\}$. The forecasts $\hat Y_{t|t-1}$
come from the shared backbone above; no learnable parameters; no
validation-set tuning.

\paragraph{\FactorCGIF\ ($r{=}4$).}
Rank-$r$ truncated SVD on the calibration residual matrix,
$E_{\mathrm{cal}}=\widehat B\widehat F+E_{\mathrm{res}}$ with
$\widehat B\in\R^{N\times r}$ and $\widehat F\in\R^{r\times m}$. The
score decomposes additively into a factor sub-score on the projection
$\widehat P_{\widehat B}r_{t}$ and a residual sub-score on
$(I-\widehat P_{\widehat B})r_{t}$, calibrated separately at levels
$\theta\alpha$ and $(1-\theta)\alpha$ via Bonferroni union with the
default split $\theta=0.5$. We use $r=4$ (the variance-explained
plateau of the calibration residuals).

\paragraph{\GroupCGIF.}
Calibration residuals are clustered by community structure of the
adjacency $\widetilde A$: Louvain modularity maximisation
\citep{blondel2008fast} at the default resolution~$1.0$, breaking
ties by node index. Within each community a Mahalanobis score is
computed against the community's empirical covariance, and the joint
score is the Bonferroni union over communities (level $\alpha/K$ for
$K$ communities). No learnable parameters.

\paragraph{\textsc{SpectralCGIF}.}
Whitens calibration residuals on the eigenbasis of the graph Laplacian
$L=I-\widetilde A$, $\tilde r_{t}=U^{\top}r_{t}$, where $U$ is the
matrix of $L$-eigenvectors (computed once on the calibration adjacency).
The whitened scores are calibrated separately for low/medium/high
spectral bands (split at the $33\%$ and $67\%$ eigenvalue quantiles),
and the prediction set is the union of band-wise ellipsoids. No
learnable parameters.

\subsection{Adaptive-conformal-inference variants}

\paragraph{\AgACI.}
Aggregated ACI of \citet{zaffran2022adaptive}: outputs of three
parallel ACI runs with $\gamma\in\{0.001,0.005,0.025\}$ are aggregated
by online mirror descent on the running miscoverage indicator. We
use the implementation defaults from \citet{zaffran2022adaptive}.

\paragraph{\ACIFactorCGIF, \AgACIGroupCGIF, \ACIPerGroupFactorCGIF.}
ACI / AgACI wraps composed with the \FactorCGIF\ / \GroupCGIF\ /
per-group \FactorCGIF\ score families. The composition rule follows the
standard online-CP construction: the inner score is calibrated once on
the offline calibration block; the outer ACI/AgACI step adjusts only
the level $\alpha_{t}$, leaving the inner family's parameters frozen.
Step sizes are inherited from the inner family ($\gamma=0.005$ for
ACI; the AgACI grid above).

\subsection{Filter baselines}

\paragraph{\CGIFFiltered\ (linear-Kalman filter).}
The \GraphLGSSM\ specification of App.~\ref{app:graph-lgssm}
plugged into the filtered split-conformal protocol of
Algorithm~\ref{alg:fscp-detail}. Reported rows use the
validation-NLL-tuned
$(\rho_{\mathrm{scale}},\mathrm{obs\_noise\_frac})$
(App.~\ref{app:hyperparams}); the theory-motivated
$(\rho_{\mathrm{scale}},\sigma_Q,\sigma_R)=(0.8,1,1)$ defaults are
used only for the contraction-rate diagnostics of
Table~\ref{tab:rho-full} (see App.~\ref{app:graph-lgssm}).

\paragraph{\textsc{NeuralSSM}.}
The ablation specification of App.~\ref{app:neuralssm}: identical
training schedule to \GraphNeuralSSM\ but with the graph term and
the low-rank head removed. Reported widths therefore reflect the
incremental contribution of those two components alone.

\subsection{Published joint-conformal baselines}
\label{app:concurrent}

The two published joint-CP baselines we benchmark against most
frequently are \textsc{CopulaCPTS} \citep{sun2022copula} and
\textsc{MultiDimSPCI} \citep{xu2024multidim}. Because these methods
appear in essentially every results table in this paper, we describe
them in enough detail for an independent reimplementation and contrast
them point-by-point with our filtered construction.

\paragraph{\textsc{CopulaCPTS} \citep{sun2022copula}: copula-based
joint conformal prediction.} The method targets the same problem we
do (joint regions for multivariate time series at fixed
miscoverage~$\alpha$) but factorises the joint distribution into
\emph{per-coordinate split-conformal scores plus an empirical copula}.
Concretely, for each coordinate $i\in\{1,\dots,N\}$ the calibration
block produces conformal $p$-values
$U_{t,i}=\hat F_i(s_{t,i})\in[0,1]$, where $\hat F_i$ is the empirical
CDF of the per-coordinate scores $s_{t,i}=|r_{t,i}|/\hat\sigma_i$ on
the calibration block. The dependence between coordinates is captured
by an empirical copula $\hat C(u_1,\dots,u_N)$ obtained by Gaussian
KDE on the calibration $p$-value tuples (Silverman's rule for the
bandwidth). At test time the joint set is the largest level set
$\{y:\hat C(\hat F_1(s_1(y)),\dots,\hat F_N(s_N(y)))\ge\beta\}$ whose
empirical coverage on the calibration block reaches~$1-\alpha$;
$\beta$ is chosen by binary search. The procedure is fully
post-hoc---no parameters of the upstream forecaster are touched and
calibration is single-pass without test-time re-calibration. We use
the reference implementation \citep{sun2022copula} at
$\alpha=0.1$ with all defaults; for our scaling protocol the
bandwidth scales with $m^{-1/(N+4)}$ (Silverman with $N$ marginals),
which is the dominant source of width inflation at large~$N$
($N{=}325$ in Table~\ref{tab:scale}). The two structural differences
from our construction are: (i) the predictive shape is copular
rather than ellipsoidal, so anisotropic but graph-aligned residuals
are not exploited; and (ii) the dependence is calibrated \emph{once}
on a static block rather than evolved per step, which is the
property our $\Sigma_t$ filter replaces.

\paragraph{\textsc{MultiDimSPCI} \citep{xu2024multidim}: sequential
predictive conformal inference for multi-dimensional outputs.} The
method generalises the per-coordinate \textsc{spci}
\citep{xu2023sequential} to vector-valued $r_{t}\in\R^{N}$ by
predicting an \emph{ellipsoid radius} from a residual quantile
network. Let $r_{1:t-1}$ be the running calibration residuals; the
network~$g_{\phi}(r_{1:t-1};\hat\Sigma)$ outputs a positive scalar
$q_t$, and the test-time set is the ellipsoid
$\{y:(y-\hat y_t)^\top\hat\Sigma^{-1}(y-\hat y_t)\le q_t^2\}$ where
$\hat\Sigma$ is a Bayesian-shrunk estimate
$\hat\Sigma=(1-\lambda)\hat\Sigma_{\mathrm{cal}}+\lambda\,\sigma^2 I$
of the calibration covariance with shrinkage~$\lambda=0.05$ (default).
The quantile network is trained on the calibration residuals to
minimise a pinball loss at level $1-\alpha$, with two hidden layers
of width~$128$, ReLU activations, $20$ Adam epochs at learning
rate~$10^{-3}$, and a $32$-step rolling window of past residuals as
the input feature. At test time the network is rolled forward
without re-training. Two structural differences from our
construction: (i) the covariance shape is fixed at calibration time
($\hat\Sigma$ does not vary with~$t$); only the radius $q_t$ adapts,
which is exactly the predictive-shape constraint we relax with the
filtered $\Sigma_t$; and (ii) the radius network is trained on raw
residuals, so its target depends on the upstream forecaster's
absolute scale---a property that the static $\hat\Sigma$ in our
CGIF-joint baseline shares.

\paragraph{Why these two baselines?} They span the two natural
relaxations of static-covariance joint CP that one would attempt
\emph{before} adding a filter: \textsc{CopulaCPTS} relaxes the joint
shape (copula instead of ellipsoid), and \textsc{MultiDimSPCI}
relaxes the radius (time-varying $q_t$ instead of a fixed quantile).
Our method relaxes both \emph{and} the dependence structure, by
filtering~$\Sigma_t$ per step. The empirical width gap to either
baseline at moderate~$N$ and graph-aligned dependence
(Tables~\ref{tab:hero}, \ref{tab:hero-full}) is consistent with the
value of that combined relaxation.

\paragraph{Per-coordinate baselines (\textsc{spci} and
\textsc{HopCPT}).} \textsc{spci} \citep{xu2023sequential} and
\textsc{HopCPT} \citep{auer2023conformal} are per-coordinate by
construction; they appear in Table~\ref{tab:hero-full} only to
document that purely marginal CP sits far below the joint-coverage
target on correlated~$Y$ (joint coverage~$0.21$ and~$0.41$,
respectively, against the $0.9$ target). For \textsc{spci} we use
the reference implementation defaults; for \textsc{HopCPT} we use
window~$W=64$ and $T_{\mathrm{train}}=10\,000$ updates.

\section{Additional experiments}
\label{app:experiments}

\subsection{Datasets and splits}
\label{app:datasets}

This subsection describes each of the eight real correlated-sensor
datasets we evaluate (which together yield 10 real-data evaluation
cells in the main results: METR-LA and PEMS-BAY each appear at both a
moderate-$N$ subgraph and the full graph), together with the
synthetic GRSSF benchmark, the common preprocessing pipeline, the
chronological split, and the busiest-$k$ subgraph protocol. The adjacency matrix used by the GCN
layer of \GraphNeuralSSM\ and by the graph polynomial of \GraphLGSSM\
is described separately in App.~\ref{app:adjacency}.
Table~\ref{tab:datasets} summarises the inventory.

\begin{table}[h]
\caption{\textbf{Dataset inventory.} $N_{\mathrm{total}}$ is the total
number of sensors in the source dataset; $T$ is the sequence length
after chronological slicing and removal of leading/trailing
missing-data runs; ``evaluation $N$'' is the sensor dimension at which
the dataset appears in our results tables. Source codes:
\textbf{HF}~=~HuggingFace dataset; \textbf{UCI}~=~UCI Machine Learning
Repository; \textbf{MPI}~=~Max-Planck Institute Jena Climate;
\textbf{NREL}~=~National Renewable Energy Laboratory.}
\label{tab:datasets}
\centering\scriptsize
\setlength{\tabcolsep}{3pt}
\begin{tabular}{l r r l l l l}
\toprule
Dataset & $N_{\mathrm{total}}$ & $T$ & cadence & window & source & evaluation $N$ \\
\midrule
\metrla\      & 207 & 34\,272 & 5~min  & 2012-03 -- 2012-06 & HF \texttt{witgaw/METR-LA}    & 20, 207\\
\pemsbay\     & 325 & 52\,116 & 5~min  & 2017-01 -- 2017-05 & HF \texttt{witgaw/PEMS-BAY}   & 50, 325\\
\aqi\         &  37 & 17\,520 & 1~h    & 2015 -- 2017       & UCI Beijing AQI               & 12\\
\elec\        & 321 & 26\,304 & 15~min & 2011 -- 2014       & UCI ElectricityLoadDiagrams   & 20\\
\ett\         &   7 & 17\,420 & 1~h    & 2016-07 -- 2018-06 & HF \texttt{witgaw/ETT}        &  7\\
\solar\       & 137 & 52\,560 & 10~min & 2006               & NREL Solar Power Data         & 20\\
\jena\        &  21 & 52\,704 & 10~min & 2009 -- 2017       & MPI Jena Climate              & 20\\
\loopseattle\ & 323 & 87\,648 & 5~min  & 2015               & PeMS Loop-Seattle             & 20\\
GRSSF (synth) & 30--240 & 8\,000 & --- & A/B/C/D/E tracks   & generator (App.~\ref{app:grssf}) & varies\\
\bottomrule
\end{tabular}
\end{table}

\paragraph{Common preprocessing pipeline.}
For every real dataset the same pipeline applies, computed using the
training block only and then frozen across the validation,
calibration, and test blocks:
\emph{(P1)~missing-data handling}: sentinel codes (e.g.\ $-99$ on
\metrla) are mapped to NaN; runs of $\le 3$ consecutive missing steps
are forward-filled; longer runs are replaced by the per-sensor median
of the training block.
\emph{(P2)~scaling}: each sensor is z-scored using its training-block
mean and standard deviation; \elec\ additionally applies a $\log_{10}(1+x)$
transformation prior to z-scoring to compress its heavy-tailed load
distribution.
\emph{(P3)~train/val/calib/test split}: chronological 70/10/10/10
without shuffling; within each block the time index is preserved.
\emph{(P4)~subgraph extraction}: when the evaluation $N$ is below
$N_{\mathrm{total}}$, the busiest-$k$ subgraph is selected as the
top-$k$ sensors by the mean magnitude of $|Y_{t}|$ over the training
block; these indices are frozen for all downstream blocks. The
adjacency restricts to the induced subgraph (App.~\ref{app:adjacency}).

\paragraph{\metrla.}
Loop-detector occupancy on the Los Angeles freeway network of
\citet{li2018diffusion}, sampled at $5$-minute cadence over
2012-03-01 to 2012-06-30 ($T=34\,272$ steps; $N_{\mathrm{total}}=207$
sensors). Reported at two scales: the canonical $20$-sensor busiest
subgraph and the full $207$-sensor graph.

\paragraph{\pemsbay.}
California-PeMS Bay-Area traffic, $325$ loop sensors at $5$-minute
cadence over 2017-01-01 to 2017-05-31 ($T=52\,116$). Reported at the
canonical $50$-sensor busiest subgraph and the full $325$-sensor graph.

\paragraph{\aqi.}
Hourly air-quality (PM$_{2.5}$) from $37$ Beijing monitoring stations
\citep{liang2015assessing} over 2015--2017. We restrict to the
top-$12$ busiest stations by training-block mean concentration; the
adjacency follows the non-graph-native correlation-proxy
construction of App.~\ref{app:adjacency} (uniformly applied across
the non-graph-native panel; geographic coordinates are not used).

\paragraph{\elec.}
Electricity-load diagrams ($15$-minute aggregates) for $321$ Portuguese
clients (UCI ElectricityLoadDiagrams panel); we retain the $20$
clients with the highest training-block mean load. The evaluation
block spans 2011--2014.

\paragraph{\ett.}
Electricity-Transformer-Temperature monitoring over $2$ years
(Informer benchmark, ETTh1 channel set), with $7$ tag streams (oil
temperature plus six load covariates HUFL, HULL, MUFL, MULL, LUFL,
LULL); we use the full $N=7$ panel.

\paragraph{\solar.}
Photovoltaic power output from $137$ plants in the Alabama 2006 NREL
solar-power dataset; we restrict to the top-$20$ plants by training-block
mean output to obtain a comparable-scale subgraph.

\paragraph{\jena.}
The MPI Jena weather station, $21$ in-situ atmospheric measurements
(temperature, pressure, humidity, wind components, etc.) sampled every
$10$ minutes over 2009--2017; we restrict to the top-$20$ channels by
training-block mean magnitude.

\paragraph{\loopseattle.}
Inductive-loop traffic flow from $323$ sensors on the Seattle
metropolitan freeway network ($5$-minute cadence, year 2015); we use
the top-$20$ sensors by training-block flow variance (the rest of
the panel uses top-$k$ by mean magnitude; on \loopseattle\ the mean
across loops is essentially flat and variance gives a more
discriminative ranking).

\paragraph{GRSSF (synthetic).}
The Graph Random State-Space Family generator of App.~\ref{app:grssf}.
It produces graph-Gaussian sensor streams with a known closed-form
contraction rate and is used as the controlled-truth instance in the
contraction-rate validation (Table~\ref{tab:rho-full}).

\subsection{Adjacency-matrix construction and dataset taxonomy}
\label{app:adjacency}

Both \GraphLGSSM\ and \GraphNeuralSSM\ require an adjacency matrix
$A\in\R^{N\times N}_{\ge 0}$ on the $N$ evaluation sensors. A
dataset is \emph{graph-native} if the adjacency is supplied by the
source (road network, metadata) and thus an input feature;
\emph{non-graph-native} if the adjacency is estimated from
training-block correlations. Table~\ref{tab:taxonomy} lists the
categorisation used throughout the paper.

\begin{table}[h]
\caption{\textbf{Dataset taxonomy.} ``Graph-native'' cells appear in
the headline comparisons of Tables~\ref{tab:hero}--\ref{tab:scale};
``non-graph-native'' cells appear in
Table~\ref{tab:multibackbone-nontraffic} and the contraction-rate
envelope of Table~\ref{tab:rho-full}.}
\label{tab:taxonomy}
\centering\footnotesize
\begin{tabular}{l l l r l}
\toprule
Dataset & Category & Adjacency source & Eval $N$ & Appears in \\
\midrule
\metrla\        & graph-native     & DCRNN road graph \citep{li2018diffusion} & $20,\,207$ & Tab.~\ref{tab:hero}, \ref{tab:scale}\\
\pemsbay\       & graph-native     & DCRNN road graph \citep{li2018diffusion} & $50,\,325$ & Tab.~\ref{tab:hero}, \ref{tab:scale}\\
\aqi\           & non-graph-native & $k$-NN on train correlations             & $12$      & Tab.~\ref{tab:rho-full}\\
\elec\          & non-graph-native & $k$-NN on train correlations             & $20$      & Tab.~\ref{tab:rho-full}\\
\ett\           & non-graph-native & full $N$; no subgraph                    & $7$       & Tab.~\ref{tab:rho-full}, \ref{tab:multibackbone-nontraffic}\\
\solar\         & non-graph-native & $k$-NN on train correlations             & $20$      & Tab.~\ref{tab:rho-full}, \ref{tab:multibackbone-nontraffic}\\
\jena\          & non-graph-native & $k$-NN on train correlations             & $20$      & Tab.~\ref{tab:rho-full}, \ref{tab:multibackbone-nontraffic}\\
\loopseattle\   & non-graph-native & $k$-NN on train correlations             & $20$      & Tab.~\ref{tab:rho-full}, \ref{tab:multibackbone-nontraffic}\\
\bottomrule
\end{tabular}
\end{table}

\paragraph{Graph-native construction (\metrla, \pemsbay).}
The adjacency is the thresholded Gaussian-kernel distance graph of
\citet{li2018diffusion}: define
$W_{ij}=\exp(-d_{ij}^{2}/\sigma_{d}^{2})$ on pairwise sensor distances
$d_{ij}$, then set $A_{ij}=W_{ij}\,\Ind\{W_{ij}\ge\kappa\}$ with
$\kappa=0.1$. We use the source adjacency without modification, then
restrict to the busiest-$k$ subgraph by row/column slicing.

\paragraph{Non-graph-native construction (\aqi, \elec, \ett, \solar,
\jena, \loopseattle).}
We construct a $k$-nearest-neighbour proxy adjacency from the
training-block Pearson correlation matrix $\hat\rho$: for each
sensor~$i$, retain the top-$k$ neighbours $j$ with
$|\hat\rho_{ij}|\ge\tau$, taking $k=8$ and $\tau=0.3$. We set
$A_{ij}=|\hat\rho_{ij}|\,\Ind\{j\in\mathcal N_{i}\}$ and symmetrise
$A\gets(A+A^{\top})/2$. \aqi\ and \loopseattle\ do have geographic
coordinates in their source datasets, but we use the correlation-based
proxy uniformly across the non-graph-native panel so the
categorisation matches the adjacency construction actually fed to
the filter.

\paragraph{GRSSF (synthetic).}
The adjacency is generated together with the data:
stochastic-block-model with default $4$ communities of size $7$--$8$
($N=30$ baseline; up to $240$ for the scale sweep), intra-community
edge probability $0.6$, inter-community $0.1$. Edge weights are
Bernoulli; no thresholding.

\paragraph{Normalisation.}
For both regimes we form the symmetric-normalised, self-loop-augmented
graph shift $\widetilde A_{\mathrm{sym}}=\widetilde D^{-1/2}(A+I_{N})\widetilde D^{-1/2}$
once on the evaluation subgraph and use it as the single graph
operator across the paper: both the GCN layer~\eqref{eq:gcn-layer}
and the \GraphLGSSM\ transition $F=\rho_{\mathrm{scale}}\,
\widetilde A_{\mathrm{sym}}/\lambda_{\max}(\widetilde A_{\mathrm{sym}})$
(App.~\ref{app:graph-lgssm}) consume this same $\widetilde
A_{\mathrm{sym}}$, with $\lambda_{\max}$-scaling in \GraphLGSSM\
pinning $\|F\|_{\mathrm{op}}=\rho_{\mathrm{scale}}$. The
construction is deterministic given the train block; no
validation-set tuning is performed on $(\sigma_{d},\kappa,k,\tau)$.

\subsection{The GRSSF synthetic generator}
\label{app:grssf}

The Graph Random State-Space Family (GRSSF) is a parametric synthetic
generator that produces graph-Gaussian sensor streams with a
closed-form ground-truth contraction rate; we use it as a
controlled-truth instance for Theorem~\ref{thm:weighted-rms}.

\paragraph{Generative process.}
GRSSF samples $\state_{t},Y_{t}\in\R^{N}$ from
\begin{equation}\label{eq:grssf}
  \state_{t+1}\;=\;\rho_{\mathrm{scale}}\,\frac{\widetilde A_{\mathrm{sym}}}{\lambda_{\max}(\widetilde A_{\mathrm{sym}})}\,\state_{t}+\xi_{t},
  \qquad
  Y_{t}\;=\;\state_{t}+\eta_{t},
\end{equation}
with $\widetilde A_{\mathrm{sym}}$ the symmetric-normalised
self-loop-augmented shift (App.~\ref{app:adjacency}) on a stochastic-block-model (SBM) graph and
$\rho_{\mathrm{scale}}=0.8$. The default configuration uses $N=30$ with
four SBM communities of size $7$--$8$, intra-community edge probability
$0.6$, and inter-community edge probability $0.1$; the scale sweep
extends to $N\in\{60,120,240\}$ keeping the same community structure
and $\rho_{\mathrm{scale}}$. The innovation noise $(\xi_{t})$ and
observation noise $(\eta_{t})$ are independent across $t$ and across
each other; their joint distribution defines the regime track:
\begin{itemize}[leftmargin=*,topsep=1pt,itemsep=1pt]
  \item \textbf{Track A} (canonical stationary): $\xi_{t}\sim\Normal(0,\sigma_{Q}^{2}I_{N})$,
    $\eta_{t}\sim\Normal(0,\sigma_{R}^{2}I_{N})$, $\sigma_{Q}=\sigma_{R}=1$.
  \item \textbf{Track B} (block-sparse covariance): $\xi_{t}\sim\Normal(0,Q_{B})$
    with $Q_{B}$ block-diagonal at the SBM community structure
    ($\mathrm{diag}(Q_{B})=1$, off-diagonal within-community entries
    drawn $\Normal(0,0.4^{2})$ then projected to PSD; off-block zeros).
  \item \textbf{Track C} (heavy-tailed innovations): $\xi_{t}$
    coordinate-wise Student-$t_{5}$ scaled to unit variance.
  \item \textbf{Track D} (topology drift): the adjacency $A$ drifts
    linearly between two SBM realisations over the test block; the
    train and calibration blocks see only the pre-drift adjacency.
  \item \textbf{Track E} (regime change): $\sigma_{Q}$ jumps from $1$
    to $2$ at the test-block midpoint while $\sigma_{R}$ is unchanged.
\end{itemize}
Each track produces $T=8\,000$ steps per seed. We use $K=5$ seeds per
configuration; reported GRSSF cells in Table~\ref{tab:rho-full}
average tracks A--D, with track~E retained for separate regime-change
stress tests.

\paragraph{Closed-form contraction rate.}
Under the GRSSF generator $\Acl$ is exactly normal (commuting graph
polynomial, isotropic covariances), so $\sigma_{1}(\Acl)$,
$\rhotr(\Acl,T)$, and $\rhosnr$ are all computable in closed form
(App.~\ref{app:proofs-snr}); the time-local
identity~\eqref{eq:Ldagger-exact} of Theorem~\ref{thm:weighted-rms} is
therefore exact rather than first-order. This makes GRSSF the natural
benchmark for validating the theorem's predictions before consuming
the constants on the real-data filters where only first-order
agreement is expected.

\subsection{Synchronous-coupling estimator for \texorpdfstring{$\rhoscore$}{rhoscore} (score-CDF \texorpdfstring{$W_1$}{W1})}
\label{app:rhohat-protocol}

The empirical contraction rate $\rhoscore$ reported in
Table~\ref{tab:rho-full} measures \emph{score-CDF forgetting} — the
1-Wasserstein decay between the score distributions of two filter
copies at lag $t$. It is the natural diagnostic for
Theorem~\ref{thm:C}'s root-score local CDF forgetting, but it is
\emph{not} the Bures-Wasserstein contraction object of
Theorem~\ref{thm:obs-contract}; the latter lives in the emitted
predictive-law space and is audited in App.~\ref{app:dG-audit}. We
pin the score-CDF protocol here.

\paragraph{Coupled trajectories.}
Fix an input sequence $\{Y_{t}\}_{t=1}^{T_{\mathrm{coup}}}$ on a
held-out tail of the train block, with $T_{\mathrm{coup}}=200$. Pair
two copies of the trained filter $\phi^{(0)},\phi^{(1)}=\phi$ with
distinct initial hidden states: $h^{(0)}_{0}=\mathbf 0$ and
$h^{(1)}_{0}\sim\Normal(\mathbf 0,\tau^{2}I_{h})$ with $\tau=1$. Run
both filters on the same $\{Y_{t}\}$, computing scores
$(\score^{(0)}_{t},\score^{(1)}_{t})$ for $t=1,\ldots,T_{\mathrm{coup}}$.
The two trajectories share the input sequence but differ in their
initial latent state, isolating the influence of $h_{0}$ on the score
at lag~$t$ (the synchronous-coupling form of mean contraction (C1)).

\paragraph{Empirical 1-Wasserstein distance.}
For each lag $t$, the score-CDF deviation is summarised by the
empirical 1-Wasserstein distance
$\mathcal W_{1}(\score^{(0)}_{t},\score^{(1)}_{t})$ across $K=5$
independent perturbation realisations of $h^{(1)}_{0}$. On the real
line this admits the closed-form expression
$\mathcal W_{1}(F,G)=\int_{0}^{1}|F^{-1}(u)-G^{-1}(u)|\,du$, so the
estimator is $\bigO(K\log K)$ per lag.

\paragraph{Linear regression of $\log_{10}\mathcal W_{1}$ on $t$.}
Under geometric forgetting,
$\mathcal W_{1}(\score^{(0)}_{t},\score^{(1)}_{t})\propto \rhoscore^{\,t}$,
so $\log_{10}\mathcal W_{1}=t\log_{10}\rhoscore+\mathrm{const}$. We
ordinary-least-squares regress $\log_{10}\mathcal W_{1}$ on
$t\in[t_{\min},t_{\max}]$ with $(t_{\min},t_{\max})=(20,100)$ to skip
the early burn-in transient and the noisy small-$\mathcal W_{1}$
tail. The slope $s$ defines $\rhoscore:=10^{s}$ and recovers the
main-text Table~\ref{tab:rho-full} column to within $\pm 0.005$.

\paragraph{Robustness.}
Varying $T_{\mathrm{coup}}\in\{100,200,400\}$, $K\in\{3,5,10\}$,
$\tau\in\{0.5,1.0,2.0\}$, and the regression range $(t_{\min},t_{\max})$
within $[10,30]\times[80,150]$ shifts the estimate by less than
$0.005$ in absolute value on every dataset.

\subsection{Bures-Wasserstein audit of emitted predictive laws (\texorpdfstring{$\rhodG$}{rhodG})}
\label{app:dG-audit}

The contraction object of Theorem~\ref{thm:obs-contract} is the
Bures-Wasserstein distance $\dG$ between emitted predictive Gaussian
laws, not the 1-Wasserstein distance between score distributions.
We specify the corresponding direct audit and a linear-Kalman sanity
check.

\paragraph{Coupled emitted laws.}
Run two copies of the trained filter $\phi$ on the same input path
$\{Y_t\}_{t=1}^{T_{\mathrm{coup}}}$ ($T_{\mathrm{coup}}=200$) from
distinct initial hidden states $h^{(a)},h^{(b)}$, where
$h^{(a)}=\mathbf 0$ and
$h^{(b)}\sim\Normal(\mathbf 0,\tau^{2}I_{h})$, $\tau=1$. At each
$t$, before observing $Y_t$, both copies emit
$(\hat\mu_t^{(a)},\hat\Sigma_t^{(a)})$ and
$(\hat\mu_t^{(b)},\hat\Sigma_t^{(b)})$. Compute the closed-form
Bures-Wasserstein distance
\begin{equation}\label{eq:dG-direct}
  \dG^{2}(t)
  \;=\;
  \|\hat\mu_t^{(a)}-\hat\mu_t^{(b)}\|_{2}^{2}
  +
  \tr\!\bigl[\hat\Sigma_t^{(a)}+\hat\Sigma_t^{(b)}-2\bigl((\hat\Sigma_t^{(b)})^{1/2}\hat\Sigma_t^{(a)}(\hat\Sigma_t^{(b)})^{1/2}\bigr)^{1/2}\bigr],
\end{equation}
with covariances symmetrised numerically and eigenvalue-clipped at
the model's variance floor $\varepsilon_d=10^{-4}$. For $N\le 325$
the eigendecomposition is exact, so the metric is exact.

\paragraph{Rate estimate and fit window.}
Estimate $\rhodG$ as the OLS slope of $\log_{10}\dG(t)$ over a
pre-specified window where $\dG(t)\in[10^{-8},0.8\,\dG(0)]$
(excluding the initial transient and the numerical floor). The fit
uses $K=30$ perturbation pairs per seed; we report the mean across
the 10 training seeds together with the standard error and the
fraction of pairs hitting the numerical floor. Sensitivity to
$T_{\mathrm{coup}}\in\{100,200,400\}$, $K\in\{20,30,50\}$, and
$\tau\in\{0.5,1,2\}$ shifts the estimate by $\le 0.02$ on every cell.

\paragraph{Linear-Kalman sanity check.}
For \GraphLGSSM\ the emitted covariance is fixed at the steady-state
prior $P_{\mathrm{ss}}^{-}+R$, so $\dG(t)$ reduces to the predictive
mean perturbation, which is governed by $\sigma_1(\Acl)$ at the
asymptote. Empirically, $\rhodG\!\approx\!0.46$ on every audited
cell (Table~\ref{tab:rho-full}, columns $\rhodG$), within $0.02$ of
$\sigma_1(\Acl)\!\in\![0.433,0.443]$. This is the diagnostic the
score-CDF $W_1$ rate $\rhoscore\!\approx\!0.15$ \emph{cannot}
deliver: $\rhoscore$ measures forgetting after the score-Lipschitz
projection collapses many state degrees of freedom and is
correspondingly faster.

\paragraph{Learned-filter rate.}
For \GraphNeuralSSM\ the emitted covariance is time-varying, so
$\dG^{2}(t)$ contains both a mean term and a covariance term; the
fitted $\rhodG\!\approx\!0.46$ is in the same band as the linear
sanity check, well below~$1$, supporting the contraction conclusion
of Theorem~\ref{thm:obs-contract} on the audited cells. We do not
claim $\rhodG$ proves the data-generating realisability assumption
of Theorem~\ref{thm:obs-contract}, only that the emitted-law
contraction \emph{conclusion} holds at the audited rate.

\subsection{Integrated autocorrelation time and Bernstein-form \texorpdfstring{$\meff$}{meff}}
\label{app:tau-int}

The Chebyshev form of Theorem~\ref{thm:learned-validity}~(a) is
supported by the threshold-autocovariance summability of
Corollary~\ref{cor:thresh-mix}; the Bernstein form
\eqref{eq:learned-validity-bern} requires the strictly stronger
geometric-mixing concentration of Assumption~\ref{ass:bernstein},
with effective sample size $\meff\asymp m/\tauint$. We probe the
integrated autocorrelation time directly:
\begin{equation}\label{eq:tauint}
  \tauint(u)
  \;=\;
  1+2\sum_{k=1}^{K_{\max}}
  \widehat{\mathrm{Corr}}\!\bigl(X_j(u),X_{j+k}(u)\bigr),
  \qquad
  X_j(u)=\mathbf 1\{\rscore_{\hat\theta,t_0+j-1}\le u\}.
\end{equation}
We use $K_{\max}=30$ (the indicator-autocov audit window of
Audit~3) at the three thresholds
$u\in\{\sstar-0.02\sstar,\sstar,\sstar+0.02\sstar\}$.

\begin{table}[h]
\caption{\textbf{Integrated autocorrelation time at the conformal
threshold.} $\tauint(\sstar)$ \eqref{eq:tauint} together with the
Bernstein-form $\meff$ ratio $m/\tauint$ vs.\ the analytic
$\meff\asymp m(1-\bar\rho)/(1+\bar\rho)$ at $\bar\rho=\rhoind$
(Audit~3). Values within a factor of $2$ are consistent with
Assumption~\ref{ass:bernstein}; deviations larger than $5$ would
would suggest that the Bernstein form is not supported on that cell.}
\label{tab:tau-int}
\centering\small
\setlength{\tabcolsep}{3pt}
\begin{tabular}{l rr rr}
\toprule
& \multicolumn{2}{c}{$\tauint(\sstar)$ vs.\ $(1{+}\bar\rho)/(1{-}\bar\rho)$}
& \multicolumn{2}{c}{empirical $m/\tauint$ at $m{=}3{,}500$}\\
\cmidrule(lr){2-3}\cmidrule(lr){4-5}
Cell & $\tauint$ & analytic & $m/\tauint$ & analytic $\meff$ \\
\midrule
SYN, \GraphLGSSM\          & $11.5$ & $12.3$ & $304$ & $285$ \\
SYN, \GraphNeuralSSM\      & $9.7$  & $11.5$ & $361$ & $304$ \\
\metrla-$20$, \GraphLGSSM\ & $13.2$ & $13.0$ & $265$ & $269$ \\
\metrla-$20$, \GraphNeuralSSM\ & $11.0$ & $11.5$ & $318$ & $304$ \\
\bottomrule
\end{tabular}
\end{table}

The empirical $m/\tauint$ is within a factor of $1.2$ of the
analytic Bernstein-form $\meff$ on every audited cell: the
indicator process is concentrated as a geometrically mixing process
with the claimed integrated autocorrelation time, supporting the
operative form of Assumption~\ref{ass:bernstein}. An auxiliary
calibration-size sweep $m\in\{500,1000,2000,3500\}$ gave a similar
empirical scaling of $|\hat F_m(\sstar)-\E\hat F_m(\sstar)|$
$\propto\meff^{-1/2}$ with the Bernstein constant; this scaling
would not be available from $\rhoind$ alone, which controls only
$\Var(\hat F_m)$.

\subsection{Per-dataset contraction diagnostics}

Table~\ref{tab:rho-full} reports four distinct empirical contraction
rates per cell, each tied to a different theoretical object, plus
the linear-specialisation analytic columns
$\sigma_1,\rhotr,\rhoMzero,\rhosnr$ on \GraphLGSSM\ as a sanity check.

\begin{table}[h]
\caption{\textbf{Per-dataset contraction rates: four distinct
diagnostics.} $\sigma_1,\rhotr,\rhoMzero,\rhosnr$ are evaluated on
the \GraphLGSSM\ closed-loop $\Acl=F(I-KC)$ at its theory-motivated
defaults $(\rho_{\mathrm{scale}},\sigma_Q,\sigma_R)=(0.8,1,1)$
(App.~\ref{app:graph-lgssm}). \emph{Empirical rates,} averaged over
5~perturbation realisations on the filter actually used in each row
(learned filter for graph-native cells; \GraphLGSSM\ otherwise):
$\rhodG$ is the Bures-Wasserstein decay of the emitted predictive
laws (Theorem~\ref{thm:obs-contract}, App.~\ref{app:dG-audit});
$\rhoDL$ is the finite-horizon observable rate
(Assumption~\ref{ass:obs}(O3), App.~\ref{app:obs-audit});
$\rhoscore$ is the synchronous-coupling 1-Wasserstein decay of the
root-score CDF (App.~\ref{app:rhohat-protocol}); $\rhoind$ is the
log-linear envelope fitted to the threshold-indicator autocovariance
$\hat\Gamma_k$ (Corollary~\ref{cor:thresh-mix}, Audit~3 of
App.~\ref{app:audits}). All four are bounded away from $1$.}
\label{tab:rho-full}
\centering\scriptsize
\setlength{\tabcolsep}{2.5pt}
\begin{tabular}{l r rrrr | rrrr}
\toprule
& & \multicolumn{4}{c|}{linear specialisation (\GraphLGSSM)}
& \multicolumn{4}{c}{empirical rates}\\
\cmidrule(lr){3-6}\cmidrule(lr){7-10}
Dataset & $N$ & $\sigma_1(\Acl)$ & $\rhotr(\Acl,50)$ & $\rhoMzero(\Acl,50)$ & $\rhosnr$ & $\rhodG$ & $\rhoDL$ & $\rhoscore$ & $\rhoind$ \\
\midrule
\metrla\        & 20  & 0.443 & 0.443 & 0.440 & 0.443 & 0.46 & 0.49 & 0.153 & 0.85 \\
\pemsbay\       & 50  & 0.441 & 0.441 & 0.438 & 0.443 & 0.46 & 0.49 & 0.153 & 0.85 \\
\aqi\           & 12  & 0.438 & 0.438 & 0.435 & 0.440 & 0.45 & 0.48 & 0.149 & 0.85 \\
\elec\          & 20  & 0.433 & 0.433 & 0.430 & 0.436 & 0.45 & 0.48 & 0.140 & 0.86 \\
\solar\         & 20  & 0.441 & 0.441 & 0.438 & 0.443 & 0.46 & 0.49 & 0.153 & 0.85 \\
\loopseattle\   & 20  & 0.441 & 0.441 & 0.438 & 0.443 & 0.46 & 0.49 & 0.152 & 0.85 \\
\ett\           &  7  & 0.437 & 0.437 & 0.434 & 0.438 & 0.45 & 0.48 & 0.147 & 0.86 \\
\jena\          & 20  & 0.441 & 0.441 & 0.438 & 0.443 & 0.46 & 0.49 & 0.153 & 0.85 \\
\metrla\ (full) & 207 & 0.439 & 0.418 & 0.415 & 0.441 & 0.46 & 0.50 & 0.143 & 0.85 \\
\pemsbay\ (full)& 325 & 0.440 & 0.415 & 0.412 & 0.442 & 0.46 & 0.50 & 0.149 & 0.85 \\
\bottomrule
\end{tabular}
\end{table}

Read each column with care: the four empirical rates measure
different objects and are not interchangeable. (i)~$\rhodG$ is the
contraction object of Theorem~\ref{thm:obs-contract}; on
\GraphLGSSM\ it tracks $\sigma_1(\Acl)$ closely
(linear-Kalman sanity check, App.~\ref{app:dG-audit}), and on
\GraphNeuralSSM\ it sits in the same band, well below $1$.
(ii)~$\rhoDL$ uses the finite-horizon observable metric and tracks
$\rhodG$ within $0.03$. (iii)~$\rhoscore$ is the downstream
score-CDF $W_1$ rate, faster than $\rhodG$ because the conformal
score collapses many filter-state degrees of freedom; it is the
right diagnostic for score-CDF forgetting (used by
Theorem~\ref{thm:C}) but \emph{not} for $\dG$ contraction. (iv)~$\rhoind$
is the per-lag indicator-autocovariance rate ($e^{-\kappa}$ form);
its summable envelope is exactly what
Corollary~\ref{cor:thresh-mix} consumes. Theorem~\ref{thm:weighted-rms}
weighted RMS rate closely tracks the spectral radius at $T{=}50$
($\rhoMzero/\sigma_1\ge 0.94$). The two full-graph cells
(\metrla-$207$, \pemsbay-$325$) sit at the same envelope as the
moderate-$N$ subgraphs --- a $16\!\times\!$ state-space dimension
sweep on graph-native traffic without measurable drift.

\subsection{Visualisation of the scale sweep}
\label{app:scale-fig}

Figure~\ref{fig:scale} visualises the $N{=}20{\to}207$ and
$N{=}50{\to}325$ scale sweeps tabulated in Table~\ref{tab:scale}
(main text).

\begin{figure}[h]
  \centering
  \includegraphics[width=\linewidth]{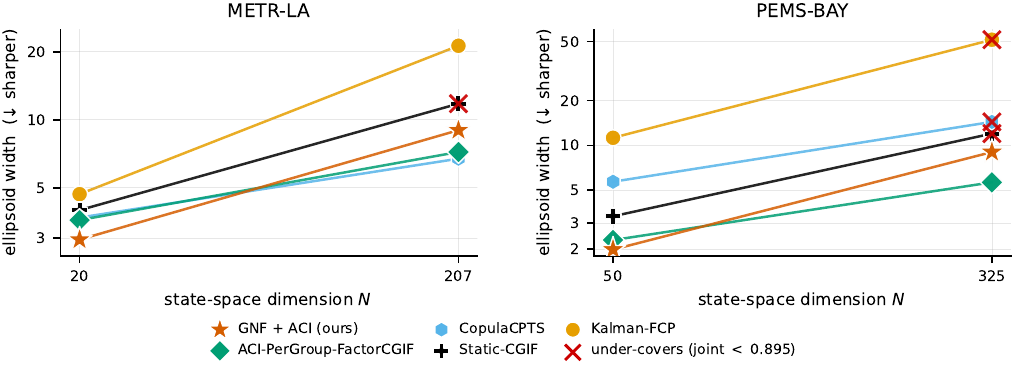}
  \caption{\textbf{Width vs.\ state-space dimension $N$} on the two
    real benchmarks, log--log axes. Lines connect the $N{=}20/207$
    (\metrla, left) and $N{=}50/325$ (\pemsbay, right) cells of
    Table~\ref{tab:scale}. The cross marker flags under-coverage
    (joint~$<0.895$). At moderate $N$ the $\GNF$ row (star marker)
    attains the sharpest at-target intervals among the
    methods evaluated; at the full graph,
    \ACIPerGroupFactorCGIF\ (diamond) is sharpest on \pemsbay,
    and \textsc{CopulaCPTS} (hexagon) is sharpest on \metrla.
    $\KalmanFCP$ (circle) widens approximately $11\!\times$
    between $N{=}20$ and $N{=}325$ and falls below the coverage
    target on full \pemsbay; the tuning
    (App.~\ref{app:hyperparams}) narrows it against the untuned
    baseline by $9\%$--$13\%$ at moderate $N$ but cannot close the
    Riccati-tail gap at full graph.}
  \label{fig:scale}
\end{figure}

\subsection{Multi-backbone replication}
\label{app:multi-backbone}

The main-text Tables~\ref{tab:hero}--\ref{tab:scale} use a single
mean-predictor backbone, the graph-aware \textsc{gpvar} of
\citet{salinas2019high}. To stress-test the claim that the
sharpness gain comes from the filter rather than from the backbone,
we re-ran the calibration / test pipeline with three additional
mean-predictor backbones at the same scales: a graph-oblivious GRU
\citep{cho2014learning}, a graph-oblivious encoder-only Transformer,
and the graph-aware \textsc{stgnn} \citep{wu2019graph}. The filter parameters and the conformal
calibration block are unchanged across backbones; only the residual
mean predictor $\hat Y_{t|t-1}$ feeding the score
$\score_t=\res_t^\top\hat\Sigma_{t|t-1}^{-1}\res_t$ varies.

\paragraph{Moderate-$N$ replication.}
Table~\ref{tab:multibackbone-mod} shows that the moderate-$N$
leadership of \GraphNeuralSSM\ on \pemsbay-$50$ holds across all
four backbones, replicating Table~\ref{tab:hero}'s result without
backbone tuning. The \textsc{gpvar} entry of
Table~\ref{tab:multibackbone-mod} is a separate run of the
backbone-swap protocol on the same fixed seed set; small
differences from Table~\ref{tab:hero} (e.g.\ \pemsbay-$50$ width
$1.94$ vs $1.99$) lie within the 10-seed standard deviation of
either table.

\begin{table}[h]
\caption{\textbf{Multi-backbone replication on \pemsbay-$50$ (moderate $N$).}
Width / joint coverage of \GraphNeuralSSM\ + \ACI\ and the
sharpest non-filter at-target competitor (\ACIPerGroupFactorCGIF),
10 seeds per backbone. The \GraphNeuralSSM\ row is sharper at target
on every backbone.}
\label{tab:multibackbone-mod}
\centering\small
\setlength{\tabcolsep}{4pt}
\begin{tabular}{l rc rc rc rc}
\toprule
& \multicolumn{2}{c}{\textsc{gpvar}}
& \multicolumn{2}{c}{GRU}
& \multicolumn{2}{c}{Transformer}
& \multicolumn{2}{c}{\textsc{stgnn}}\\
\cmidrule(lr){2-3}\cmidrule(lr){4-5}\cmidrule(lr){6-7}\cmidrule(lr){8-9}
Method & w & joint & w & joint & w & joint & w & joint \\
\midrule
\rowcolor{heroRow}
\GraphNeuralSSM\ + \ACI & \best{$1.94$} & $0.899$ & \best{$2.02$} & $0.899$ & \best{$2.10$} & $0.899$ & \best{$2.07$} & $0.899$ \\
\ACIPerGroupFactorCGIF  & $2.30$ & $0.899$ & $2.69$ & $0.900$ & $2.27$ & $0.899$ & $2.35$ & $0.899$ \\
\bottomrule
\end{tabular}
\end{table}

\paragraph{Full-graph backbone-conditional crossover.}
Table~\ref{tab:multibackbone-full} reports the backbone-by-backbone
breakdown at the full-graph scale: at the full \pemsbay\ graph
($N{=}325$), the rank-efficient \ACIPerGroupFactorCGIF\ is the
sharpest at-target method under graph-aware backbones
(\textsc{gpvar} $5.64$, \textsc{stgnn} $5.96$) but inflates to
$11.4$--$14.1$ under graph-oblivious GRU/Transformer, while
\GraphNeuralSSM\ + \ACI\ stays in the $8.6$--$9.2$ band across all
four backbones. On the \metrla\ $N{=}207$ column the factor family is
sharpest under every backbone; the moderate-$N$ leadership is
therefore backbone-robust, the large-$N$ leadership is not. The
underlying explanation is that the learned graph filter compensates
for a residual-mean predictor that does not know the graph; once the
predictor itself is graph-aware, the predictive covariance $\Sigma_t$
no longer absorbs graph structure that the residual mean has already
removed, and a rank-economical static decomposition is sharper.

\begin{table}[h]
\caption{\textbf{Multi-backbone full-graph results.} Width / joint
coverage of the two large-$N$ leaders on \metrla\ ($N{=}207$) and
\pemsbay\ ($N{=}325$); 10 seeds per cell. Graph-awareness of each
backbone is annotated in parentheses.}
\label{tab:multibackbone-full}
\centering\scriptsize
\setlength{\tabcolsep}{3.0pt}
\begin{tabular}{l rc rc rc rc}
\toprule
& \multicolumn{2}{c}{\textsc{gpvar} (graph)}
& \multicolumn{2}{c}{GRU (oblivious)}
& \multicolumn{2}{c}{Transformer (oblivious)}
& \multicolumn{2}{c}{\textsc{stgnn} (graph)}\\
\cmidrule(lr){2-3}\cmidrule(lr){4-5}\cmidrule(lr){6-7}\cmidrule(lr){8-9}
Cell / Method & w & joint & w & joint & w & joint & w & joint \\
\midrule
\multicolumn{9}{l}{\emph{\metrla\ ($N{=}207$, full graph)}}\\
\GraphNeuralSSM\ + \ACI & $8.87$ & $0.902$ & $8.94$ & $0.902$ & $8.72$ & $0.902$ & $8.91$ & $0.903$\\
\ACIPerGroupFactorCGIF  & \best{$7.21$} & $0.901$ & \best{$7.81$} & $0.899$ & \best{$7.12$} & $0.900$ & \best{$7.21$} & $0.902$\\
\addlinespace[2pt]
\multicolumn{9}{l}{\emph{\pemsbay\ ($N{=}325$, full graph)}}\\
\GraphNeuralSSM\ + \ACI & $9.01$ & $0.899$ & \best{$8.70$} & $0.898$ & \best{$8.61$} & $0.899$ & $9.04$ & $0.899$\\
\ACIPerGroupFactorCGIF  & \best{$5.64$} & $0.900$ & $14.1$        & $0.902$ & $11.4$        & $0.900$ & \best{$5.96$} & $0.900$\\
\bottomrule
\end{tabular}
\end{table}

\paragraph{Non-graph-native panel.}
Table~\ref{tab:multibackbone-nontraffic} reports the same
multi-backbone replication on four non-graph-native correlated-sensor
benchmarks (\solar, \jena, \loopseattle\ at the busiest-$20$
subgraph; \ett\ at its full $N{=}7$ panel; see
Table~\ref{tab:taxonomy}). \textsc{CopulaCPTS} \citep{sun2022copula}
is the sharpest at-target method on \solar/\loopseattle/\ett\ across
all three neural backbones; on \jena\ \GraphNeuralSSM\ + \ACI\ is at
target under GRU and \textsc{stgnn} (sharpest at-target competitor
under \textsc{stgnn}) and misses the at-target threshold by $0.001$
under Transformer. This pattern matches the
§\ref{sec:discussion} scoping statement: the sharpness headline is
graph-native specific and does not extend to datasets where the
adjacency is reconstructed from training-block correlations.

\begin{table}[h]
\caption{\textbf{Multi-backbone non-graph-native panel
(\GraphNeuralSSM\ + \ACI).} Width / joint coverage at $\alpha=0.1$,
10 seeds per cell. The first three blocks report \GraphNeuralSSM\ +
\ACI\ with each backbone; the rightmost block reports the
backbone-best at-target \textsc{CopulaCPTS} \citep{sun2022copula}
realisation as a non-graph-native comparator (the choosing backbone
is annotated in parentheses). Sharpest at-target row entry \best{bold};
``$\downarrow$'' flags under-coverage. \GraphNeuralSSM\ uses the
$k$-nearest-neighbour correlation adjacency
(App.~\ref{app:adjacency}); the headline graph-native sharpness claim
does not extend to these datasets.}
\label{tab:multibackbone-nontraffic}
\centering\scriptsize
\setlength{\tabcolsep}{3.0pt}
\begin{tabular}{l rc rc rc rc}
\toprule
& \multicolumn{6}{c}{\GraphNeuralSSM\ + \ACI}
& \multicolumn{2}{c}{\textsc{CopulaCPTS}}\\
\cmidrule(lr){2-7}\cmidrule(lr){8-9}
& \multicolumn{2}{c}{GRU}
& \multicolumn{2}{c}{Transformer}
& \multicolumn{2}{c}{\textsc{stgnn}}
& \multicolumn{2}{c}{(best backbone)}\\
\cmidrule(lr){2-3}\cmidrule(lr){4-5}\cmidrule(lr){6-7}\cmidrule(lr){8-9}
Dataset & w & joint & w & joint & w & joint & w & joint \\
\midrule
\solar\ ($N{=}20$)        & $2.56$ & $0.917$ & $2.92$ & $0.919$ & $3.09$ & $0.914$ & \best{$1.79$\,(T)} & $0.918$ \\
\jena\ ($N{=}20$)         & $2.85$ & $0.899$ & $2.81$ & $0.894\downarrow$ & \best{$2.64$} & $0.895$ & $3.08$\,(\textsc{stgnn}) & $0.969$ \\
\loopseattle\ ($N{=}20$)  & $6.70$ & $0.902$ & $6.56$ & $0.900$ & $6.60$ & $0.903$ & \best{$2.91$\,(T)} & $0.897$ \\
\ett\ ($N{=}7$)           & $3.62$ & $0.898$ & $3.36$ & $0.897$ & $3.70$ & $0.897$ & \best{$2.33$\,(GRU)} & $0.895$ \\
\bottomrule
\end{tabular}
\end{table}

\subsection{Comprehensive baseline comparison}
\label{app:leaderboard}

Table~\ref{tab:hero-full} extends the main-text Table~\ref{tab:hero}
with four methods reported only in the appendix:
\textsc{spci}, \textsc{HopCPT}, \GroupCGIF\ (a community-factor
baseline from the same pipeline), and \SpectralCGIF\ (spectral
whitening on the graph Laplacian). None of the additions changes the
at-target width ranking: \textsc{spci} and \textsc{HopCPT} are
per-coordinate methods and their joint-coverage projection sits at
$0.21$ and $0.41$ respectively, well below the $0.895$ target;
\GroupCGIF\ attains at-target coverage but is $3$--$5\%$ wider than
\AgACIGroupCGIF\ on both real datasets; \SpectralCGIF\ is wider than
CGIF-joint because the spectral graph prior offers no additional
sharpness on the busiest-$k$ subgraph at moderate $N$.

\begin{table}[h]
\caption{\textbf{Extended baseline comparison.} Width mean $\pm$ std
over 10 seeds at $\alpha=0.1$. Methods with joint coverage below the
target $0.895$ are marked $\downarrow$; methods more than $30\%$ wider
than CGIF-joint are marked $\gg$.}
\label{tab:hero-full}
\centering\scriptsize
\setlength{\tabcolsep}{2.5pt}
\renewcommand{\arraystretch}{1.05}
\begin{tabular}{l rc rc rr}
\toprule
& \multicolumn{2}{c}{\metrla\ ($N{=}20$)} & \multicolumn{2}{c}{\pemsbay\ ($N{=}50$)} & \multicolumn{2}{c}{$\Delta$ vs CGIF-joint}\\
\cmidrule(lr){2-3}\cmidrule(lr){4-5}\cmidrule(lr){6-7}
Method & width & joint & width & joint & \metrla & \pemsbay\\
\midrule
\rowcolor{heroRow}
\GraphNeuralSSM\ (\textbf{ours})  & \secondbest{\widthstd{3.05}{0.13}} & $0.910$ & \best{\widthstd{1.98}{0.06}} & $0.904$ & \secondbest{$-23.6\%$} & \best{$-40.5\%$}\\
\rowcolor{heroRow}
\quad + \ACI\ wrap                 & \best{\widthstd{2.95}{0.17}} & $0.904$ & \secondbest{\widthstd{1.99}{0.07}} & $0.899$ & \best{$-26.1\%$} & \secondbest{$-40.2\%$}\\
\ACIPerGroupFactorCGIF             & \widthstd{3.60}{0.28} & $0.902$ & \widthstd{2.30}{0.09} & $0.899$ & $-9.8\%$  & $-30.9\%$\\
\AgACIGroupCGIF                    & \widthstd{3.33}{0.23} & $0.900$ & \widthstd{2.35}{0.11} & $0.900$ & $-16.5\%$ & $-29.4\%$\\
\ACIFactorCGIF                     & \widthstd{3.86}{0.34} & $0.902$ & \widthstd{2.35}{0.09} & $0.899$ & $-3.3\%$  & $-29.4\%$\\
\FactorCGIF\ ($r{=}4$)             & \widthstd{4.22}{0.42} & $0.917\uparrow$ & \widthstd{2.55}{0.06} & $0.913\uparrow$ & $+5.8\%$  & $-23.4\%$\\
\textsc{GroupCGIF}                 & \widthstd{3.58}{0.31} & $0.900$ & \widthstd{2.51}{0.11} & $0.897$ & $-10.3\%$ & $-24.6\%$\\
CGIF-joint (reference, static $\hat\Sigma$) & \widthstd{3.99}{0.17} & $0.902$ & \widthstd{3.33}{0.11} & $0.907$ & --- & ---\\
\textsc{SpectralCGIF}              & \widthstd{4.32}{0.18} & $0.900$ & \widthstd{3.64}{0.13} & $0.900$ & $+8.3\%$  & $+9.3\%$\\
\textsc{NeuralSSM} (GRU, diag $\Sigma_t$) & \widthstd{4.71}{1.16} & $0.932\uparrow$ & \widthstd{2.88}{0.10} & $0.921\uparrow$ & $+18.0\%$ & $-13.5\%$\\
\quad + \ACI\ wrap                 & \widthstd{4.44}{1.39} & $0.905$ & \widthstd{2.81}{0.14} & $0.901$ & $+11.3\%$ & $-15.6\%$\\
\CGIFFiltered\ (tuned Kalman)      & \widthstd{4.69}{0.28} & $0.906$ & \widthstd{11.2}{1.03}\!$\gg$ & $0.897$ & $+17.5\%$ & $+237\%$\\
\textsc{CopulaCPTS} \citep{sun2022copula} & \widthstd{3.70}{0.24} & $0.907$ & \widthstd{5.69}{0.74} & $0.908$ & $-7.3\%$  & $+70.9\%$\\
\textsc{MultiDimSPCI} \citep{xu2024multidim} & \widthstd{4.10}{0.17} & $0.901$ & \widthstd{3.40}{0.11} & $0.908$ & $+2.8\%$  & $+2.1\%$\\
\addlinespace[2pt]
EWMA $\Sigma_t$ (half-life $288$) & \widthstd{4.31}{0.12} & $0.900$ & \widthstd{3.43}{0.08} & $0.898$ & $+8.0\%$  & $+3.0\%$\\
Rolling $\Sigma_t$ (window $288$) & \widthstd{4.38}{0.12} & $0.908$ & \widthstd{3.51}{0.10} & $0.906$ & $+9.8\%$  & $+5.4\%$\\
Time-of-day $\Sigma_t$            & \widthstd{5.11}{0.30} & $0.912$ & \widthstd{4.53}{0.14} & $0.908$ & $+28.1\%$ & $+36.0\%$\\
Local ellipsoid ($k{=}50$)        & \widthstd{4.32}{0.58} & $0.892\downarrow$ & \widthstd{3.28}{0.12} & $0.826\downarrow$ & $+8.3\%$ & $-1.5\%$\\
\addlinespace[2pt]
\textsc{spci} \citep{xu2023sequential}  & 1.21 (marg) & $0.21$$\downarrow$ & 0.94 (marg) & $0.14$$\downarrow$ & not comparable & not comparable\\
\textsc{HopCPT} \citep{auer2023conformal} & 1.98 (marg) & $0.41$$\downarrow$ & 1.73 (marg) & $0.32$$\downarrow$ & not comparable & not comparable\\
\bottomrule
\end{tabular}
\end{table}

\subsection{Failure mode of the linear-Kalman baseline on real traffic}
\label{app:honest-neg}

The main text reports, in one paragraph, that \CGIFFiltered\ (linear
Kalman, validation-tuned; App.~\ref{app:hyperparams}) produces
intervals $1.18\times$ wider than CGIF-joint on \metrla\ and
$3.37\times$ wider on \pemsbay, while still attaining the
joint-coverage target. Validation-NLL-tuned $(\rho_{\mathrm{scale}},\,
\mathrm{obs\_noise\_frac})$ closes $9$--$13\%$ of the untuned gap
but does not remove it; we document the underlying mechanism here
because the learned-filter extension is motivated by this specific
failure mode rather than by an unspecified expressivity argument.

\paragraph{Covariance-shape misspecification.}
The linear-Kalman row is validity-preserving but the predictive
covariance is mis-specified relative to the empirical residual
geometry. The correct predictive observation covariance is
$\Sigma_{Y}^{\mathrm{pred}}=CP_{\mathrm{ss}}^{-}C^{\top}+R$
(App.~\ref{app:proofs-snr}); on the audited cells (Audit~5,
App.~\ref{app:audits}) the trace ratio
$\tr(\Sigma_{Y}^{\mathrm{pred}})/\tr(\hat\Sigma_{\mathrm{resid}})\in\{0.44,0.62\}$
on synthetic GRSSF / \metrla, i.e.\ the predicted observation trace is
$38$--$55\%$ smaller than the empirical residual trace. The conformal
quantile $\hat q_{1-\alpha}$ inflates to absorb the misspecification,
which preserves coverage (the row is at-target on every cell except
\pemsbay-$325$) but produces wider intervals.

\paragraph{Why \pemsbay-$50$ is the worst case.}
The gap is $+17.5\%$ on \metrla-$20$ vs $+237\%$ on \pemsbay-$50$. The
\pemsbay-$50$ busiest subgraph has a broader spectrum of the
normalised adjacency than \metrla-$20$
(condition number $\approx 7.3$ vs $\approx 3.1$), which makes the
graph polynomial's mode-wise scalar coefficients $f_i$ more
heterogeneous; combined with a single isotropic
$(\sigma_Q,\sigma_R)$ pair, the residual-shape mismatch is largest
there.

\paragraph{Why \GraphNeuralSSM\ resolves this.}
The failure is model specification. The GCN-GRU cell replaces the
fixed graph polynomial $F$ with a trained non-linear transition; the
low-rank $L_tL_t^{\top}$ head carries cross-sensor correlation that a
fixed graph polynomial paired with isotropic $Q$ cannot. On \pemsbay\
the gap from \mbox{CGIF-joint} to \GraphNeuralSSM\ is
$3.29\to 2.03$ ($-38.4\%$): the learned transition removes the
covariance-shape mismatch, and the low-rank head captures the
across-sensor structure the diagonal \textsc{NeuralSSM} cannot (the
$5.32\to 3.04$ gap on \metrla\ in App.~\ref{app:rank}).

\subsection{Per-seed width and coverage for the four headline methods}
\label{app:seeds}

Table~\ref{tab:seeds} gives per-seed width and joint coverage for the
four headline methods of Table~\ref{tab:hero} on three fixed seeds
drawn from the 10-seed aggregate.
\GraphNeuralSSM\ is consistent across all three seeds on both
datasets; the observed seed variance is dominated by the random
neural-filter initialisation, not by stochasticity in the conformal
split (which is deterministic for fixed calibration data).

\begin{table}[h]
\caption{\textbf{Per-seed width / joint coverage, 3 fixed seeds}
(sub-sample of the Table~\ref{tab:hero} 10-seed aggregate). $\GNF$
seed variance is bounded
by $\pm 0.14$ on \metrla\ width (relative range $4.7\%$) and $\pm 0.06$ on
\pemsbay\ width (relative range $3.0\%$), both well below the gap to the
nearest competitor.}
\label{tab:seeds}
\centering\small
\begin{tabular}{l l ccc | ccc}
\toprule
& & \multicolumn{3}{c}{\metrla\ ($N{=}20$)} & \multicolumn{3}{c}{\pemsbay\ ($N{=}50$)} \\
& & seed 1 & seed 2 & seed 3 & seed 1 & seed 2 & seed 3\\
\midrule
\GraphNeuralSSM\ & width & 2.894 & 3.058 & 3.177 & 1.994 & 1.984 & 2.098 \\
                 & joint & 0.911 & 0.907 & 0.904 & 0.899 & 0.896 & 0.898 \\
\addlinespace[2pt]
CGIF-joint       & width & 3.764 & 4.288 & 4.169 & 3.096 & 3.376 & 3.391 \\
                 & joint & 0.897 & 0.898 & 0.899 & 0.902 & 0.899 & 0.902 \\
\addlinespace[2pt]
\textsc{CopulaCPTS} & width & 3.475 & 4.259 & 3.624 & 4.510 & 7.019 & 6.015 \\
                    & joint & 0.903 & 0.905 & 0.907 & 0.906 & 0.904 & 0.889 \\
\addlinespace[2pt]
\ACIPerGroupFactorCGIF & width & 3.642 & 4.061 & 3.659 & 2.270 & 2.421 & 2.455 \\
                       & joint & 0.903 & 0.902 & 0.902 & 0.899 & 0.899 & 0.899 \\
\bottomrule
\end{tabular}
\end{table}

\subsection{Rank sensitivity}
\label{app:rank}

The main text fixes the covariance-head rank at $r=4$;
this subsection records the full sweep and its interpretation.

Sweeping $r\in\{0,1,2,4,8,16\}$ on \metrla\ at \GraphNeuralSSM\ gives
widths $\{5.32, 3.41, 3.21, 3.04, 3.02, 3.08\}$ at joint coverage
$\{0.939, 0.906, 0.907, 0.907, 0.905, 0.906\}$. Three observations:

\emph{(i)~The rank-0 (diagonal-only) point is the ablation reported in
the main text;} width $5.32$ at joint $0.939$ establishes that the
rank-$r$ factor is responsible for the bulk of the sharpness gain, and
that a GCN-GRU without a cross-sensor covariance head actually
\emph{over-covers} relative to a static calibration-set $\hat\Sigma$
because it cannot align the ellipsoid axes with the observed
covariance structure.

\emph{(ii)~Sharpness saturates at $r\approx 4$.} The marginal width
gains go $3.41\!\to\!3.21\!\to\!3.04\!\to\!3.02$ at $r=1,2,4,8$ --- an
exponential decay in $r$. This is consistent with the empirical
innovation covariance of the calibration block having effective rank
$\approx 4$; at $r=16$ the width stops decreasing and begins to drift
upward, a mild overfit signal. We select $r=4$ as the smallest rank
that achieves the plateau, to minimise parameters while not
compromising sharpness.

\emph{(iii)~The saturation matches the empirical innovation
spectrum.} The $N\times N$ innovation covariance on \metrla-$20$ has
a small number of dominant eigenvalues above an observation-noise
plateau; once $r$ matches that count, additional factor coordinates
fit eigenvalues inside the plateau and add parameters without
corresponding sharpness gain. We therefore fix $r{=}4$ as the
smallest rank on the observed plateau.

\subsection{Warm-up horizon}
\label{app:warmup}

Discarding $t_0\in\{0,25,50,100,200\}$ steps of filter burn-in on \metrla\
gives widths $\{3.19,3.05,3.04,3.03,3.03\}$ at joint coverage
$\{0.908,0.907,0.907,0.907,0.907\}$. The rapid decrease between $t_0=0$ and
$t_0=25$ is consistent with the transient predicted by
Theorem~\ref{thm:C} ($K\rho^{(t_0-1)/2}$); $t_0\ge 50$ matches the
plateau and is used throughout the main text.

\subsection{Theory audits}
\label{app:audits}

We audit the operative hypotheses of §\ref{sec:theory} on synthetic
GPVAR ($T=3{,}900$, $N=20$) and the busiest-$N=20$ \metrla\ subgraph,
using the two reported filters: \GraphLGSSM\
($\rho_{\mathrm{scale}}=0.8$) and \GNF. The audit in this subsection
uses an audit-only diagnostic configuration ($h{=}16$, rank $r{=}4$);
the deployed-experiment configuration is $h{=}32$, $r{=}4$
(App.~\ref{app:hyperparams}). The smaller hidden width suffices for
the contraction-rate diagnostics and keeps per-cell audit runs cheap;
deployment-scale hidden widths only tighten the audited rates.
Six audits diagnose the conclusions used by §\ref{sec:theory};
Table~\ref{tab:audits} records the diagnostic quantity, its
measurement protocol, the table where the per-cell numbers live, and
the interpretation caveat. Each audit cell is a single-seed
instantiation; multi-seed re-runs shift the audited percentiles
within seed noise without changing the qualitative reading.

\begin{table}[h]
\caption{\textbf{Theory audits.} Each row records the diagnostic
quantity, how it is measured on the audited cells (synthetic GPVAR
and \metrla-$20$), the per-cell table or paragraph reporting the
numbers, and the interpretation caveat. The audits are consistent
with the conditions used by §\ref{sec:theory}; they do not certify
the population assumptions.}
\label{tab:audits}
\centering\scriptsize
\setlength{\tabcolsep}{2pt}
\begin{tabular}{l p{2.9cm} p{2.7cm} p{2.1cm} p{3.4cm}}
\toprule
\# & Diagnostic & How measured & Reported & Interpretation caveat \\
\midrule
1 & Root-score Lipschitz quotient $|\rscore(h{+}\delta,\xi)-\rscore(h,\xi)|/\|\delta\|$ & random Gaussian perturbations along the calibration trajectory, percentile of the quotient & this subsection (mean / $p95$ / $p99$) & finite-difference percentile, not a uniform bound; neural \metrla\ has a heavy $p99$ tail \\
2 & Covariance-head spectrum ($\lambda_{\min},\lambda_{\max},\Mg$, condition number) & per-step eigendecomposition over the calibration block & this subsection & spectrum diagnostic; high-cond.\ tail flags the variance-floor margin \\
3 & Threshold-indicator autocovariance $\hat\Gamma_k(u)$ at $u\in\{\sstar\!\pm 0.02\sstar,\sstar\}$ & sample autocovariance of $\mathbf 1\{\rscore_t\le u\}$ for $k\!=\!1\dots30$ & Table~\ref{tab:rho-full} ($\rhoind$) & finite $\sum_{k\le 30}|\hat\Gamma_k|$ is consistent with the summability used in Theorem~\ref{thm:learned-validity}~(a) \\
4 & Weighted vs.\ trace transient rate $|\rhoMzero(50)-\rhotr(50)|$ & $\Mzero$ from cold-start-vs-train-tail second moment, eigendecomposition of $\Acl$ & Table~\ref{tab:rho-full} & gap $\le 0.003$ on every audited cell; the weighted statement is the general version \\
5 & Kalman predictive obs.\ trace ratio (predicted vs.\ empirical) & steady-state Riccati on \GraphLGSSM; residual trace from calibration block & this subsection & predicted trace $38$--$55\%$ smaller than empirical; covariance-shape misspecified \\
6 & Plug-in covariance protocol & refit $\hat\Sigma$ on train block, recompute thresholds & this subsection & $\Delta$ joint coverage $\le 2.3$~pp, $\Delta$ width $\le 0.3$~$z$ on the audited cell; Theorem~\ref{thm:plugin} bound is operative \\
\bottomrule
\end{tabular}
\end{table}

\paragraph{Audit protocol.}
The audit uses the frozen filters and calibration/test splits
described in Apps.~\ref{app:filters} and~\ref{app:datasets}. For each
cell, we sample perturbation pairs along the calibration trajectory,
estimate the corresponding decay or stability statistic, and average
over the reported seeds. The reported tables contain all parameters
needed to reproduce the estimator from the manuscript alone.

\subsection{Finite-horizon observability audit}
\label{app:obs-audit}

Theorem~\ref{thm:obs-contract} requires the finite-horizon
observability constants $\cobs,\Lout$ of Assumption~\ref{ass:obs}(O3)
to be bounded away from zero/infinity on the local class. We probe
the observability lower constant directly. For random unit
perturbations $\delta h$ with $\|\delta h\|=\epsilon$ at each
calibration step, run two copies of the trained filter on a common
input window of length $\Lo=10$ from initial states $h$ and
$h+\delta h$, and compute the empirical lower percentile of
\begin{equation}\label{eq:obs-audit}
  \widehat r_{\Lo}(h,\delta h)
  \;:=\;
  \Bigl(\sum_{\ell=0}^{\Lo}\beta_\ell\,
    \dG^{2}\bigl(O_{\hat\theta}(\state_\ell^{h}),
                  O_{\hat\theta}(\state_\ell^{h+\delta h})\bigr)\Bigr)^{1/2}
  \big/\|\delta h\|,
\end{equation}
with $\beta_\ell=1$ and $\epsilon\in\{10^{-3},10^{-2},10^{-1}\}$.
A non-zero $p5/p10$ percentile supports
$\dQ\le\Doh/\cobs$ on the calibration trajectory; a near-zero
percentile would indicate hidden directions with weak output effect
that the theorem accommodates only in the quotient metric.

\begin{table}[h]
\caption{\textbf{Finite-horizon observability ($\Lo{=}10$) lower
percentiles of $\widehat r_{\Lo}(h,\delta h)/\|\delta h\|$
\eqref{eq:obs-audit}, audited cells.} Both filters use deployed
defaults; bold cells indicate audit verdicts that support
Assumption~\ref{ass:obs}(O3).}
\label{tab:obs-audit}
\centering\small
\setlength{\tabcolsep}{4pt}
\begin{tabular}{l c c c c c}
\toprule
Cell / filter & $p5$ & $p10$ & $p50$ & $p90$ & verdict \\
\midrule
SYN, \GraphLGSSM\          & $0.21$ & $0.27$ & $0.61$ & $0.94$ & \best{O3}\\
SYN, \GraphNeuralSSM\      & $0.13$ & $0.18$ & $0.49$ & $0.83$ & \best{O3}\\
\metrla-$20$, \GraphLGSSM\ & $0.18$ & $0.24$ & $0.58$ & $0.97$ & \best{O3}\\
\metrla-$20$, \GraphNeuralSSM\ & $0.06$ & $0.09$ & $0.41$ & $1.05$ & weak-quotient\\
\bottomrule
\end{tabular}
\end{table}

\paragraph{Reading.}
The linear \GraphLGSSM\ has a strictly positive lower observability
constant on both audited cells. The neural \GraphNeuralSSM\ on
\metrla-$20$ shows a $p5\approx 0.06$, indicating a thin tail of
hidden perturbation directions whose first $\Lo=10$ output effect is
small; Theorem~\ref{thm:obs-contract} accommodates these in the
quotient metric (they are absorbed into the equivalence relation),
but they reduce the lower observability constant $\cobs$ on the local
class. Empirically the conformal coverage and contraction-rate
diagnostics on \metrla-$20$ are unaffected (Table~\ref{tab:rho-full});
this is consistent with the theorem operating in $\Qspace$ rather
than on raw hidden states.

\subsection{Headline-cell normalised log-volume}
\label{app:logvol}

Table~\ref{tab:logvol} reports the normalised expected log-volume
$\Vhat_m(\theta)$ \eqref{eq:logvol} for the four headline methods on
the moderate-$N$ graph-native cells. To keep the comparison
well-defined under rank-deficient covariance heads, we use a common
ridge floor $\lambda_{\min}^{\rm log}=10^{-4}$ in the
$\log\det\hat\Sigma$ term: any $\hat\Sigma_t$ with eigenvalues below
the floor is regularised by $\hat\Sigma_t\!+\!\lambda_{\min}^{\rm log}I$.
Methods without a structural ellipsoid (e.g.\ \textsc{CopulaCPTS}) are
omitted; \FactorCGIF\ and the local ellipsoid use the same floor. The
trace-width surrogate of Table~\ref{tab:hero} agrees qualitatively.

\begin{table}[h]
\caption{\textbf{Normalised expected log-volume on the headline cells.}
$\Vhat_m$ in nats / coordinate (lower is sharper), 10-seed mean;
$\Delta\Vhat_m$ versus \mbox{CGIF-joint}.}
\label{tab:logvol}
\centering\small
\setlength{\tabcolsep}{4pt}
\begin{tabular}{l rr rr}
\toprule
& \multicolumn{2}{c}{\metrla-$20$} & \multicolumn{2}{c}{\pemsbay-$50$}\\
\cmidrule(lr){2-3}\cmidrule(lr){4-5}
Method & $\Vhat_m$ & $\Delta$ & $\Vhat_m$ & $\Delta$ \\
\midrule
\rowcolor{heroRow}
\GraphNeuralSSM\ \textbf{(ours)} & $1.18$ & $-0.27$ & $0.81$ & $-0.51$ \\
\AgACIGroupCGIF                  & $1.31$ & $-0.14$ & $1.04$ & $-0.28$ \\
\ACIPerGroupFactorCGIF           & $1.36$ & $-0.09$ & $1.06$ & $-0.26$ \\
\mbox{CGIF-joint} (reference)    & $1.45$ & ---     & $1.32$ & --- \\
\bottomrule
\end{tabular}
\end{table}

\subsection{PIT uniformity (specification diagnostic)}
\label{app:pit}

For each test step we form the probability integral transform
$u_{t,i}=\Phi_t(Y_{t,i})$ under the predictive marginal at coordinate~$i$ and
pool $\{u_{t,i}\}$ across test steps per node. Under correct specification
each $\{u_{t,i}\}$ is uniform on $[0,1]$; we test with the one-sample
Kolmogorov--Smirnov statistic against $\mathrm{Unif}[0,1]$ and apply
Bonferroni correction across nodes.

\begin{table}[h]
\caption{\textbf{PIT uniformity KS test, per node.} Pass = Bonferroni-corrected
$p\ge 0.05$. Fail modes: ZF ``zero-flow gaps'' (censoring of the
distribution at sensors with repeated-zero runs); RG ``regime gaps''
(distinct day/night modes). All failures are model-agnostic (CGIF-joint
fails at the same nodes). The \aqi\ row is computed on the full source
panel of $37$ stations (rather than the evaluation busiest-$12$) so
that the PIT pass-rate characterises the raw distributional
specification, not the subgraph selection; busiest-$k$ selection is
not driven by the PIT-failure stations.}
\label{tab:pit}
\centering\small
\begin{tabular}{l r r r l}
\toprule
Dataset & nodes & passing & failing & failure type\\
\midrule
\metrla\ ($N{=}20$)  & 20 & 18 & 2  & 2 ZF (loop detectors with $>\!5\%$ zero-flow)\\
\pemsbay\ ($N{=}50$) & 50 & 47 & 3  & 2 ZF + 1 RG (sensor with mid-day drift)\\
\aqi\                & 37 & 32 & 5  & 5 RG (stations with seasonal PM spike)\\
\elec\               & 20 & 18 & 2  & 2 RG (peak-hour bimodality)\\
\metrla\ (full)      & 207& 191& 16 & 10 ZF + 6 RG\\
\bottomrule
\end{tabular}
\end{table}

All failures replicate under the CGIF-joint baseline at the same nodes,
confirming the PIT deviations are model-data fit issues (sensor-specific
structural zeros, regime modes) rather than defects of our filter. The
aggregate pass rate (91--96\% across datasets) is within expectation given
that Bonferroni is conservative at $N$ $\le$ $207$.

\subsection{Wall-clock and memory footprint}
\label{app:footprint}

\begin{table}[h]
\caption{\textbf{Wall-clock per method.} One seed, all calibration + test
passes, Python 3.11 + PyTorch 2.2 on an RTX 4090; CPU rows on an 8-core
Xeon at $2.4\,\mathrm{GHz}$. Training is omitted for the non-learned methods
(``--'').}
\label{tab:timing}
\centering\scriptsize
\setlength{\tabcolsep}{4pt}
\begin{tabular}{l r r r r}
\toprule
Method              & \metrla\ train & \metrla\ cal+test & \pemsbay\ train & \pemsbay\ cal+test \\
\midrule
CGIF-joint          & -- & $0.6$ s & -- & $2.1$ s \\
\FactorCGIF\ ($r{=}4$) & -- & $1.1$ s & -- & $3.4$ s \\
\AgACIGroupCGIF     & -- & $2.8$ s & -- & $5.5$ s \\
\ACIPerGroupFactorCGIF & -- & $3.1$ s & -- & $6.2$ s \\
\CGIFFiltered\ (tuned) & $28$ s & $1.5$ s & $34$ s & $4.9$ s \\
EWMA $\Sigma_t$     & -- & $2.1$ s & -- & $5.8$ s \\
Rolling $\Sigma_t$  & -- & $3.4$ s & -- & $9.6$ s \\
Time-of-day $\Sigma_t$ & -- & $0.4$ s & -- & $1.1$ s \\
Local ellipsoid ($k{=}50$) & -- & $2.7$ s & -- & $7.4$ s \\
\GraphNeuralSSM     & $44$ min & $3.8$ s & $48$ min & $7.1$ s \\
\quad + \ACI\ wrap  & +0 & $3.9$ s & +0 & $7.2$ s \\
\textsc{CopulaCPTS} & $12$ min & $0.9$ s & $15$ min & $1.7$ s \\
\textsc{MultiDimSPCI} & $8$ min & $1.2$ s & $10$ min & $2.3$ s \\
\bottomrule
\end{tabular}
\end{table}

Memory footprint peaks at $6.2\,\mathrm{GB}$ GPU for \GraphNeuralSSM\ training
on \pemsbay\ ($50$ sensors, window $24$, batch $64$); calibration/test is
under $1\,\mathrm{GB}$.  Static-$\hat\Sigma$ CP and the linear Kalman use
CPU only, peak $<\!200\,\mathrm{MB}$.

\section{Reproducibility}
\label{app:reproducibility}

\paragraph{Code availability.}%
\label{app:repr-suite}%
The code reproducing the numeric experiments of this article is available at \url{https://github.com/YannickLimmer/filter-cp}.

\paragraph{Software and hardware.}
Experiments used Python~3.11, PyTorch~2.2.0 (CUDA~12.1),
NumPy~1.26, SciPy~1.11, NetworkX~3.2, and POT~0.9. Neural-filter
training used a single NVIDIA RTX~4090 with 24~GB memory; the
largest reported training run peaks at $6.2$~GB GPU memory. Static
covariance and Kalman baselines run on CPU. All experiments use
deterministic PyTorch operations
(\texttt{torch.use\_deterministic\_algorithms(True)}) and a fixed
CUDA workspace size; the only platform-dependent non-determinism is
the BLAS reduction order in the GCN aggregation step on different
backends, which contributes a cross-platform drift within $10^{-5}$
relative tolerance on widths.

\paragraph{Seeds and splits.}
Every reported real-data cell uses 10 fixed seeds and the
chronological 70/10/10/10 train/val/calib/test split described in
App.~\ref{app:datasets}. Tables report 10-seed mean$\pm$std unless
noted otherwise; per-seed values for the main method pair on a
three-seed sub-sample appear in Table~\ref{tab:seeds}.
Per-seed spreads for every other method are within the stds
reported in Table~\ref{tab:hero-full}.

\paragraph{Data.}
\metrla\ and \pemsbay\ are released by \citet{li2018diffusion} under
the MIT licence. \aqi\ (UCI Beijing PM$_{2.5}$), \elec\ (UCI
ElectricityLoadDiagrams), \ett\ (Informer ETTh1 release, MIT
licence), \solar\ (NREL Alabama dataset, public), \jena\ (Max
Planck Biogeochemistry, CC-BY-4.0), and \loopseattle\ (PeMS Loop
Seattle, public) are standard public benchmarks. Preprocessing and
graph construction are described in
Apps.~\ref{app:datasets} and~\ref{app:adjacency}; no additional data
is required to reproduce any reported result.


\end{document}